\acrodef{icl}[ICL]{In-Context Learning}
\acrodef{gpt}[GPT]{Generative Pre-trained Transformer}
\acrodef{mle}[MLE]{Maximum Likelihood Estimate}
\acrodef{ff}[FF]{Feed-Forward}
\acrodef{mha}[MHA]{Multi-Head Attention}
\acrodef{mae}[MAE]{Masked AutoEncoders}
\acrodef{nlp}[NLP]{Natural Language Processing}
\acrodef{cv}[CV]{Computer Vision}
\acrodef{llm}[LLM]{Large Language Models}
\acrodef{hmm}[HMM]{Hiddn Markov Model}
\acrodef{rkhs}[RKHS]{Reproducing Kernel Hilbert Space}
\acrodef{bma}[BMA]{Bayesian Model Averaging}
\def\[#1\]{\begin{bmatrix}#1\end{bmatrix}}
\def\cme{{\mathtt{CME}}}
\def\Tr{\operatorname{tr}}
\def\att{\mathtt{attn}}
\def\softmax{\mathtt{softmax}}
\def\noend{\notag \\}
\def\fk{\mathfrak{K}}
\def\fl{\mathfrak{L}}
\def\sm{{\mathtt{SM}}}
\def\tp{{\mathrm{p}}}
\def\relu{{\mathtt{ReLU}}}
\def\fX{{\mathfrak{X}}}
\def\regret{{\mathtt{regret}}}
\def\icl{{\mathtt{ICL}}}
\newcommand{\FF}{\mathtt{ffn}}
\newcommand{\MHA}{\mathtt{mha}}
\newcommand{\Att}{\mathtt{attn}}
\newcommand{\TV}{\mathtt{TV}}
\newcommand{\KL}{\mathtt{KL}}
\newcommand{\unif}{\mathtt{Unif}}
\newcommand{\pt}{\mathtt{prompt}}
\newcommand{\bbB}{\mathbb{B}}
\newcommand{\bbE}{\mathbb{E}}
\newcommand{\bbI}{\mathbb{I}}
\newcommand{\bbN}{\mathbb{N}}
\newcommand{\bbP}{\mathbb{P}}
\newcommand{\bbR}{\mathbb{R}}
\newcommand{\bbS}{\mathbb{S}}
\newcommand{\rmd}{\mathrm{d}}
\newcommand{\rmF}{\mathrm{F}}
\newcommand{\rmI}{\mathrm{I}}
\newcommand{\rmp}{\mathrm{p}}
\newcommand{\calA}{\mathcal{A}}
\newcommand{\calB}{\mathcal{B}}
\newcommand{\calD}{\mathcal{D}}
\newcommand{\calF}{\mathcal{F}}
\newcommand{\calG}{\mathcal{G}}
\newcommand{\calL}{\mathcal{L}}
\newcommand{\hcalL}{\hat{\mathcal{L}}}
\newcommand{\calS}{\mathcal{S}}
\newcommand{\calX}{\mathcal{X}}
\newcommand{\calZ}{\mathcal{Z}}
\newcommand{\hatA}{\hat{A}}
\newcommand{\tilA}{\tilde{A}}
\newcommand{\hatC}{\hat{C}}
\newcommand{\tilc}{\tilde{c}}
\newcommand{\tilD}{\tilde{D}}
\newcommand{\tilS}{\tilde{S}}
\newcommand{\hatW}{\hat{W}}
\newcommand{\tilW}{\tilde{W}}
\newcommand{\tilx}{\tilde{x}}
\newcommand{\tilX}{\tilde{X}}
\newcommand{\tilY}{\tilde{Y}}
\newcommand{\barz}{\bar{z}}
\newcommand{\barB}{\bar{B}}
\newcommand{\barD}{\bar{D}}
\newcommand{\tilgamma}{\tilde{\gamma}}
\newcommand{\htheta}{\hat{\theta}}
\newcommand{\tiltheta}{\tilde{\theta}}
\newcommand{\frakK}{\mathfrak{K}}
\newcommand{\frakQ}{\mathfrak{Q}}
\newcommand{\frakV}{\mathfrak{V}}
\newcommand{\frakX}{\mathfrak{X}}
\newcommand{\frakZ}{\mathfrak{Z}}
\title{What and How does In-Context Learning Learn? Bayesian Model Averaging, Parameterization, and Generalization}
\author{\small Yufeng Zhang\thanks{equal contribution} \thanks{Northwestern University; \texttt{yufengzhang2023@u.northwestern.edu}},~~Fengzhuo Zhang\footnotemark[1] \thanks{National University of Singapore; \texttt{fzzhang@u.nus.edu}},~~Zhuoran Yang\thanks{Yale University; \texttt{zhuoranyang.work@gmail.com}},~~Zhaoran Wang\thanks{Northwestern University; \texttt{zhaoranwang@gmail.com}}}
\date{}
\begin{document}

\maketitle

\begin{abstract}
In this paper, we conduct a comprehensive study of In-Context Learning (ICL) by addressing several open questions: (a) What type of ICL estimator is learned by large language models? (b) What is a proper performance metric for ICL and what is the error rate? (c) How does the transformer architecture enable ICL? 
To answer these questions, we adopt a Bayesian view and formulate ICL as a  problem of predicting the response corresponding to the current covariate, given a number of examples drawn from a latent variable model.    
To answer (a), we show that, without updating the neural network parameters, ICL implicitly implements the Bayesian model averaging algorithm, which is proven to be approximately parameterized by the attention mechanism. 
For (b), we analyze the ICL performance from an online learning perspective and establish a $\mathcal{O}(1/T)$ regret bound for perfectly pretrained ICL, where $T$ is the number of examples in the prompt.
To answer (c),  we show that, in addition to encoding  Bayesian model averaging via attention, the transformer architecture also enables 
a fine-grained statistical analysis of pretraining under realistic assumptions. 
In particular, we prove that the error of pretrained model is bounded by a sum of an approximation error and a generalization error, where the former decays to zero exponentially as the depth grows, and the latter decays to zero sublinearly with the number of tokens in the pretraining dataset. 
Our results provide a unified understanding of the transformer and its ICL ability with bounds on ICL regret, approximation, and generalization, which deepens our knowledge of these essential aspects of modern language models.

\end{abstract}

\section{Introduction}
    

With the ever-increasing sizes of model capacity and corpus, \ac{llm} have achieved tremendous successes across a wide range of tasks, including natural language understanding~\citep{dong2019unified,jiao2019tinybert}, symbolic reasoning~\citep{wei2022chain,kojima2022large}, and conversations~\citep{brown2020language,ouyang2022training}. Recent studies have revealed that these \ac{llm}s possess immense potential, as their large capacity allows for a series of \emph{emergent abilities}~\citep{wei2022emergent,liu2023pre}. 
One such ability is \ac{icl}, which enables an \ac{llm} to learn from just a few examples, without changing the network parameters. 
That is, after seeing a few examples in the prompt,  a pretrained language model seems to comprehend the underlying concept and is able to extrapolate the understanding to new data points.


Despite the tremendous empirical successes,  theoretical understanding of \ac{icl} remains limited. Specifically,
existing works fail to explain why \ac{llm}s  the ability for \ac{icl}, how the attention mechanism is related to the \ac{icl} ability, and how pretraining influences \ac{icl}. 
Although the optimality of \ac{icl} is investigated in \cite{xie2021explanation} and \cite{wies2023learnability}, these works both make unrealistic assumptions on the pretrained models, and their results cannot demystify the particular role played by the attention mechanism in \ac{icl}.

In this work, we focus on the scenario where a  transformer is first pretrained on a large dataset and then prompted to perform \ac{icl}. 
Our goal is to rigorously understand 
why the practice of ``pretraining + prompting'' unleashes the power of \ac{icl}.
To this end, we aim to answer the following three questions:
{\color{blue} \bf (a)} What type of \ac{icl} estimator is learned by \ac{llm}s? 
{\color{blue} \bf (b)} What are suitable performance metrics to evaluate \ac{icl} accurately and what are the error rates? 
{\color{blue} \bf (c)} What is the role played by the transformer architecture during the pretraining and prompting stages? The first and the third questions demand 
scrutinizing the transformer architecture to understand how \ac{icl} happens during transformer prompting. 
The second question then requires statistically analyzing the extracted \ac{icl} process. 
Moreover, the third question necessitates a holistic understanding beyond prompting --- we also need to characterize the statistical error of pretraining and how this error affects prompting.  


To address these questions, we adopt a Bayesian view and 
assume that the examples fed into a pretrained \ac{llm} are sampled from a latent variable model parameterized by a hidden concept 
$z_{*}\in\frakZ$.
Moreover, the pretrained dataset contains sequences of examples from the same latent variable model, but with the concept parameter $z \in\frakZ$ itself randomly distributed according to a prior distribution. 
We mathematically formulate \ac{icl} as the  problem of predicting the response of the response corresponding to the current covariate,
where the prompt contains $t$ examples of covariate-response pairs and the current covariate. 

Under such a setting, 
to answer {\bf \color{blue}{(a)}}, we show that the perfectly pretrained \ac{llm}s perform \ac{icl} in the form of \ac{bma}. 
That is,   \ac{llm} first computes a posterior distribution of $z_* \in\frakZ$ given the first $t$ examples, and then predicts the response of the $(t+1)$-th covariate by aggregating over the posterior (Proposition \ref{th:bma}). 

In addition, to answer {\bf \color{blue}{(b)}}, we adopt the online learning framework and define a notion called \ac{icl} regret, which is the averaged prediction error of \ac{icl} on a sequence of covariate-response examples. 
We prove that the \ac{icl} regret after prompting $t$ examples is $\cO(1/t)$ up to the statistical error of the pretrained model (Theorem \ref{th:iclpretrain}).

Finally, to answer {\bf \color{blue}{(c)}}, we elucidate the role played by the transformer architecture in prompting and pretraining respectively. 
In particular, we show that a variant of attention mechanism encodes \ac{bma} in its architecture, which enables the transformer to perform \ac{icl} via prompting. 
Such an attention mechanism can be viewed as an extension of linear attention and coincides with the standard softmax attention \citep{garnelo2023exploring} when the length of the prompt goes to infinity. And thus we show that softmax attention \cite{vaswani2017attention} approximately encodes \ac{bma} (Proposition~\ref{prop:cme-att-limit}). 
Besides,  the transformer architecture enables a fine-grained analysis of the statistical error incurred by pretraining. In particular, applying the  PAC-Bayes framework, we prove that the error of the pretrained language model, measured via total variation, is bounded by a sum of approximation error and generalization error (Theorem \ref{thm:pretrainmle}). 
The approximation error decays to zero exponentially fast as the depth of the transformer increases (Proposition \ref{prop:approxerrmle}), while the generalization error decays to zero sublinearly with the number of tokens in the pretraining dataset. 
This features the first pretraining analysis of transformers in total variation distance, which also takes the approximation error into account. 
Furthermore, as an interesting extension, we also study the misspecified case where the response variables of the examples fed into the \ac{llm} are perturbed. 
We provide sufficient conditions for \ac{icl} to be robust to the perturbations and establish the finite-sample statistical error (Proposition \ref{prop:wronglabel}).

In sum, by addressing questions {\bf \color{blue} (a)}--{\bf \color{blue}(c)}, we provide a unified understanding of the  \ac{icl} ability of \ac{llm}s and the particular role played by the attention mechanism. 
Our theory provides a holistic theoretical understanding of the regret,  approximation, and generalization errors of \ac{icl}.

\section{Related Work}
\textbf{In-Context Learning.} After \cite{brown2020language} showcased the in-context learning (\ac{icl}) capacity of GPT-3, there has been a notable surge in interest towards enhancing and comprehending this particular ability~\citep{dong2022survey}. 
The \ac{icl} ability has seen enhancements through the incorporation of extra training stages~\citep{min2021metaicl,wei2021finetuned,iyer2022opt}, carefully selecting and arranging informative demonstrations~\citep{liu2021makes,kim2022self,rubin2021learning,lu2021fantastically}, giving explicit instructions~\citep{honovich2022instruction,zhou2022large,wang2022self}, and prompting a chain of thoughts~\citep{wei2022chain,zhang2022automatic,zhou2022least}. In efforts to comprehend the mechanisms of \ac{icl} ability, researchers have also conducted extensive work. 
Empirically, \cite{chan2022data} demonstrated that the distributional properties, including the long-tailedness, are important for \ac{icl}. \cite{garg2022can} investigated the function class that \ac{icl} can approximate. \cite{min2022rethinking} showed that providing wrong mappings between the input-output pairs in examples does not degrade the \ac{icl}. 
Theoretically, \cite{akyurek2022learning}, \cite{von2022transformers}, \cite{bai2023transformers}, and \cite{dai2022can}  indicated that \ac{icl} implicitly implements the gradient descent or least-square algorithms from the function approximation perspective. 
However, the first three works only showed that transformers are able to approximate these two algorithms, which may not align with the pretrained model. 
The last work ignored the softmax module, which turns out to be important in practical implementation. \cite{feng2023towards} derived the impossibility results of \ac{icl} and the advantage of chain-of-thought for the function approximation. \cite{litransformers} viewed \ac{icl} from the multi-task learning perspective and derived the generalization bound. 
\cite{hahn2023theory} built the linguistic model for sentences and used the description length to bound the \ac{icl} error with this model. 
\cite{xie2021explanation} analyzed \ac{icl} within the Bayesian framework, assuming the access to the nominal language distribution and that the tokens are generated from \ac{hmm}s. 
However, the first assumption hides the relationship between pretraining and \ac{icl}, and the second assumption is restrictive. 
Following this thread, \cite{wies2023learnability} relaxed the \ac{hmm} assumption and assumed access to a pretrained model that is close to the nominal distribution conditioned on any token sequence, which is also unrealistic. 
Two recent works \cite{wang2023large}, and \cite{jiang2023latent} also provide the Bayesian analysis of \ac{icl}. Unfortunately, these Bayesian works cannot explain the importance of the attention mechanism for \ac{icl} and clarify how pretraining is related to \ac{icl}. 
In contrast, we prove that the attention mechanism enables \ac{bma} by encoding it in the network architecture and we relate the pretraining error of transformers to the \ac{icl} regret.

\vspace{-2mm}


\section{Preliminary}\label{sec:prelim}
\vspace{-2mm}
\textbf{Notation.} We denote $\{1,\cdots,N\}$ as $[N]$. For a Polish space $\calS$, we denote the collection of all the probability measures on it as $\Delta(\calS)$. The total variation distance  between two distributions $P,Q\in\Delta(\calS)$ is $\TV(P,Q)=\sup_{A\subseteq\calS}|P(A)-Q(A)|$. The $i^{\rm{th}}$ entry of a vector $x$ is denoted as $x_{i}$ or $[x]_{i}$. For a matrix $X\in\bbR^{T\times d}$, we index its $i^{\rm th}$ row and column as $X_{i,:}$ and $X_{:,i}$ respectively. The $\ell_{p,q}$ norm of $X$ is defined as $\|X\|_{p,q}= (\sum_{i=1}^{d}\|X_{:,i}\|_{p}^{q})^{1/q}$, and the {\em Frobenius norm} of it is defined as $\|X\|_{\rmF}= \|X\|_{2,2}$.

\vspace{-1mm}
\noindent{\bf Attention and Transformers.} Attention mechanism has been the most powerful and popular neural network module in both \ac{cv} and \ac{nlp} communities, and it is the backbone of the \ac{llm}s~\citep{devlin2018bert,brown2020language}. Assume that we have a query vector $q\in \bbR^{d_{k}}$. With $T$ key vectors in $K\in\bbR^{T\times d_{k}}$ and $T$ value vectors in $V\in\bbR^{T\times d_{v}}$, the attention mechanism maps the query vector $q$ to $\att(q, K, V) = V^\top\softmax(Kq)$, where $\softmax$ normalizes a vector via the exponential function, i.e., for $x\in\bbR^{d}$, $[\softmax(x)]_{i}=\exp(x_{i})/\sum_{j=1}^{d}\exp(x_{j})$ for $i\in[d]$. 
The output is a weighted sum of $V$, and the weights reflect the closeness between $W$ and $q$. For $t$ query vectors, we stack them into $Q\in\bbR^{t\times d_{k}}$. Attention maps these queries using the function $\Att(Q,K,V)=\softmax(QK^\top)V \in \bbR^{t\times d_{v}}$, where $\softmax$ is applied row-wisely.
In the practical design of transformers, practitioners usually use \ac{mha} instead of single attention to express sophisticated functions, which forwards the inputs through $h$ attention modules in parallel and outputs the sum of these sub-modules. 
Here $h\in\bbN$ is a hyperparameter.
Taking $X\in\bbR^{T\times d}$ as the input, \ac{mha} outputs $\MHA(X,W)=\sum_{i=1}^{h}\Att(XW_{i}^{Q},XW_{i}^{K},XW_{i}^{V})$, where $W=(W_{i}^{Q},W_{i}^{K},W_{i}^{V})_{i=1}^{h}$ is the parameters set of $h$ attention modules, $W_{i}^{Q}\in\bbR^{d\times d_{h}}$, $W_{i}^{K}\in\bbR^{d\times d_{h}}$, and $W_{i}^{V}\in\bbR^{d\times d}$ for $i\in[h]$ are weight matrices for queries, keys, and values, and $d_{h}$ is usually set to be $d/h$~\citep{michel2019sixteen}. 
The transformer is the concatenation of the attention modules and the fully-connected layers, which is widely adopted in \ac{llm}s~\citep{devlin2018bert,brown2020language}.

\vspace{-1mm}
\textbf{Large Language Models and In-Context Learning.}
Many \ac{llm}s are \emph{autoregressive}, such as GPT \citep{brown2020language}. It means that the model continuously predicts future tokens based on its own previous values. 
For example, starting from a token $x_{1}\in\frakX$, where $\frakX$ is the alphabet of tokens, a \ac{llm} $\PP_{\theta}$ with parameter $\theta\in\Theta$ continuously predicts the next token according to $x_{t+1}\sim \PP_{\theta}(\cdot\,|\,S_{t})$ based on the past $S_{t}=(x_{1},\cdots,x_{t})$ for $t\in\NN$. Here, each token represents a word and the position of the word~\citep{ke2020rethinking}, and the token sequences $S_{t}$ for $t\in\NN$ live in the sequences space $\frakX^{*}$. \ac{llm}s are first \emph{pretrained} on a huge body of corpus, making the prediction $x_{t+1}\sim \PP_{\theta}(\cdot\,|\,S_{t})$ accurate, and then prompted to perform downstream tasks. 
During the pretraining phase, we aim to maximize the conditional probability $\PP_{\theta}(x\,|\,S)$ over the nominal next token $x$~\citep{brown2020language}.

After pretraining, \ac{llm}s are prompted to perform downstream tasks without tuning parameters. Different from the finetuned models that learn the task explicitly~\citep{liu2023pre}, \ac{llm}s can implicitly learn from the examples in the \emph{prompt}, which is known as \ac{icl}~\citep{brown2020language}. 
Concretely, pretrained \ac{llm}s are provided with a prompt $\pt_t = (\tilc_1, r_1, \ldots, \tilc_t, r_t, \tilc_{t+1})$ with $t$ examples and a query as inputs, where each pair $(\tilc_i, r_i)\in \frakX^{*}\times\frakX$ is an example of the task, and $\tilc_{t+1}$ is the query, as shown in Figure~\ref{fig:pipeline} in Appendix~\ref{app:fig}. 
For example, the $\pt_{t}$ with $t=2$ can be ``Cats are animals, pineapples are plants, mushrooms are''. Here $\tilc_1\in\frakX^{*}$ is a token sequence ``Cats are'', while $r_{1}$ is the response ``animals''. The query $\tilc_{t+1}$ is ``mushrooms are'', and the desired response is ``fungi''.  The prompts are generated from a hidden concept $z_{*}\in\frakZ$, e.g., $z_{*}$ can be the classification of biological categories, where $\frakZ$ is the concept space. The generation process is $\tilc_{i}\sim\PP(\cdot\,|\,\tilc_{1},r_{1},\cdots,\tilc_{i-1},r_{i-1},z_{*})$ and $r_{i}\sim \PP(\cdot\,|\,\pt_{i-1},z_{*})$ for the nominal distribution $\PP$ and $i\in[t]$. 
Thus, when performing \ac{icl}, 
\ac{llm}s aim to estimate the conditional distribution  $\PP(r_{t+1} | \pt_t, z_*)$. 
It is widely conjectured and experimentally found that the pretrained \ac{llm}s can implicitly identify the hidden concept $z_{*}\in\frakZ$ from the examples, and then perform \ac{icl} by outputting from $\PP(r_{t+1} | \pt_t, z_*)$.
In the following, we will provide theoretical justifications for this claim. 
We note that delimiters are omitted in our work, and our results can be generalized to handle this case. Since \ac{llm}s are autoregressive, the definition of the notation $\PP(\cdot\,|\,S)$ with $S\in\frakX^{*}$ may be ambiguous because the length of the subsequent tokens is not specified.  Unless explicitly specified, we let $\PP(\cdot\,|\,S)$  denote the distribution of the next single token conditioned on $S$.



\section{In-Context Learning via Bayesian Model Averaging}\label{sec:icl_bma}

In this section, we show that \ac{llm}s perform \ac{icl} implicitly via \ac{bma}. Given a sequence $S = \{(\tilc_t, r_t)\}_{t = 1}^T$ with $T$ examples generated from a hidden concept $z_{*}\in\frakZ$, we use $S_t = \{(\tilc_i, r_i)\}_{i = 1}^t$ to represent the first $t$ \ac{icl} examples in the sequence. Here $\tilc_t$ and $r_t$ respectively denote the \ac{icl} covariate and response. 
During the \ac{icl} phase, a \ac{llm} is sequentially prompted with $\pt_t = (S_t, \tilc_{t+1})$ for $t\in[T-1]$, 
i.e., the first $t$ examples and the $(t+1)$-th covariate. 
The prompted \ac{llm} aims to predict the response $r_{t+1}$ based on $\pt_t = (S_t, \tilc_{t+1})$ whose true distribution is $r_{t+1}\sim\PP(\cdot \given \pt_t,z_{*})$. 
For the analysis of \ac{icl}, 
we focus on the following latent variable model  
\begin{align}
    \label{eq:model}
    r_t = f(\tilc_t, h_t, \xi_t), \quad \forall t \in [T],
\end{align}
where the hidden variable $h_t\in \cH$ determines the relation between $c_t$ and $r_t$, $\xi_t\in \Xi$ for $t\in[T]$ are i.i.d.\ random noises, and  $f: \cX \times \cH \times \Xi \rightarrow \fX$ is a function that relates response $r_{t}$ to $\tilc_t, h_t$, and $\xi_t$.  
In the data generation process, a hidden concept $z_{*}\in\frakZ$ is first generated from $\bbP(z)$. The hidden variables $\{h_t\}_{t=1}^T$ are then a stochastic process whose distribution is determined by the hidden concept $z_{*}$, that is 
\begin{align*}
    \PP(h_t = \cdot \,|\, \tilde c_t, \{ r_{\ell}, h_{\ell}, \tilde c_{\ell}\}_{\ell < t}) = g_{z_*} (  h_1, \ldots, h_{t-1}, \zeta_t)
\end{align*}
for some function $g_{z_*}$ parameterized by   $z^*$, where $\{\zeta_t\}_{t=1}^{T}$ are exogenous noises. 
The response $r_{t}$ is then generated according to \eqref{eq:model}.  The model in \eqref{eq:model} essentially assumes that the hidden concept $z_{*}$ implicitly determines the transition of the conditional distribution $ \PP(r_t = \cdot \given \tilc_t)$ by affecting the evolution of the latent variables $\{h_t\}_{t\in [T]}$, and it does not impose any assumption on the distribution of $\tilc_t$. This model is quite general, and it subsumes the models in previous works. When $f$ is the emission function in \ac{hmm} and $h_{t}=h$ for $t\in[T]$ is the values of hidden states that depend on $z$, model in \eqref{eq:model} recovers the \ac{hmm} assumption in \cite{xie2021explanation}. When $h_{t}=z$ for $t\in[T]$ degenerate to the hidden concept, this recovers the casual graph model in \cite{wang2023large} and the \ac{icl} model in \cite{jiang2023latent}. 

Assuming that the tokens follow the statistical model given in  \eqref{eq:model}, during pretraining, we collect $N_p$ independent trajectories by sampling from \eqref{eq:model} with concept $z$ randomly sampled from $\bbP(z)$.
Intuitively, during pretraining, by training in an autoregressive manner, the \ac{llm} approximates the conditional distribution $\PP(r_{t+1} \given \pt_t) = \EE_{z\sim \bbP(z)} [\PP(r_{t+1} \given \pt_t, z)] $, which is the conditional distribution of $r_{t+1}$ given $\pt_t$, aggregated over the randomness of the concept $z_*$.

Under the model in \eqref{eq:model}, we will show that   pretrained  \ac{llm}s are able to perform \ac{icl} because they secretly implement \ac{bma} \citep{wasserman2000bayesian} during prompting.
For ease of presentation, we first consider the setting where the \ac{llm} is  \emph{perfectly pretrained}, i.e., the conditional distribution induced by the \ac{llm} is given by $\PP(r_{t+1} \given \pt_t)$. 
We relax this condition by analyzing the pretraining error in Section~\ref{sec:pretrain}.

\begin{proposition}[\ac{llm}s Perform \ac{bma}]
\label{th:bma}
Under the model in \eqref{eq:model}, it holds that
\vspace{-2mm}
\begin{align}
    \label{eq:model-bma}
    \PP(r_{t+1} = \cdot \given \pt_t)  = \int \PP(r_{t+1} = \cdot  \given \tilc_{t+1}, S_t, z) \PP(z \given S_t) \ud z.
\end{align}
\end{proposition}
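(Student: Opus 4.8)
The plan is to derive \eqref{eq:model-bma} directly from the law of total probability applied to the latent concept $z$, the only substantive point being a conditional-independence reduction of the posterior over $z$.

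First I would unpack the definition $\pt_t=(S_t,\tilc_{t+1})$, so that the left-hand side of \eqref{eq:model-bma} reads $\PP(r_{t+1}=\cdot\given S_t,\tilc_{t+1})$. Working on the Polish spaces in play one may take the existence of the relevant regular conditional distributions and densities for granted; conditioning on $z$ and then integrating it out gives
\begin{align*}
    \PP(r_{t+1}=\cdot\given\pt_t)=\int \PP(r_{t+1}=\cdot\given S_t,\tilc_{t+1},z)\,\PP(z\given S_t,\tilc_{t+1})\,\ud z.
\end{align*}
The first factor in the integrand is exactly $\PP(r_{t+1}=\cdot\given\tilc_{t+1},S_t,z)$ from the claim, since the conditioning $\sigma$-fields agree, so it only remains to identify the posterior $\PP(z\given S_t,\tilc_{t+1})$ with $\PP(z\given S_t)$.

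For that second step I would use the generative structure of \eqref{eq:model}: the hidden concept $z$ enters only through the latent chain $\{h_\ell\}$, which in turn drives the responses $\{r_\ell\}$, whereas the covariate $\tilc_{t+1}$ is produced from the observed prefix $S_t$ by a transition kernel that carries no dependence on $z$ (the model leaves the law of the covariates otherwise unrestricted). Hence $\tilc_{t+1}$ is conditionally independent of $z$ given $S_t$, and Bayes' rule yields
\begin{align*}
    \PP(z\given S_t,\tilc_{t+1})=\frac{\PP(\tilc_{t+1}\given S_t,z)\,\PP(z\given S_t)}{\PP(\tilc_{t+1}\given S_t)}=\frac{\PP(\tilc_{t+1}\given S_t)\,\PP(z\given S_t)}{\PP(\tilc_{t+1}\given S_t)}=\PP(z\given S_t),
\end{align*}
where the middle equality uses the conditional independence $\PP(\tilc_{t+1}\given S_t,z)=\PP(\tilc_{t+1}\given S_t)$. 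Substituting this back into the displayed integral gives \eqref{eq:model-bma}.

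I expect the only genuine obstacle to be making the conditional independence $\tilc_{t+1}\perp z\given S_t$ fully rigorous from the model description — it is the single place where ``no assumption on the distribution of $\tilc_t$'' has to be read precisely as ``an arbitrary but $z$-free kernel'' — together with the routine measure-theoretic bookkeeping (existence of the regular conditional probabilities that legitimize Fubini and Bayes' rule when $z$ ranges over a continuum, and the usual null-set caveats). If one were content with the weaker identity having $\PP(z\given\pt_t)$ in place of $\PP(z\given S_t)$, the argument would collapse to the first step alone and this reduction would be unnecessary.
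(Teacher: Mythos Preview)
Your argument is correct and arrives at \eqref{eq:model-bma}, but the route differs from the paper's. The paper does not marginalize over $z$ directly; instead it first introduces the latent variable $h_{t+1}$ from model~\eqref{eq:model}, writing
\[
\PP(r_{t+1}\given \pt_t)=\int \PP(r_{t+1}\given c_{t+1},h_{t+1})\,\PP(h_{t+1}\given S_t)\,\ud h_{t+1},
\]
then expands $\PP(h_{t+1}\given S_t)=\int\PP(h_{t+1}\given S_t,z)\,\PP(z\given S_t)\,\ud z$ via the law of total probability, and finally re-collapses the $h_{t+1}$ integral to obtain $\PP(r_{t+1}\given c_{t+1},S_t,z)$. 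Your approach skips $h_{t+1}$ entirely and goes straight through $z$, which is shorter and conceptually cleaner; the price is that the conditional independence you isolate, $\tilc_{t+1}\perp z\given S_t$, must be read off the model description rather than emerging from the explicit $h$-dynamics. The paper's detour through $h_{t+1}$ makes the use of the structural equation $r_{t+1}=f(\tilc_{t+1},h_{t+1},\xi_{t+1})$ more visible, but it relies on an analogous conditional independence (namely that $h_{t+1}$ does not depend on $\tilc_{t+1}$ once $S_t$ is given) that is left just as implicit. In short, both proofs hinge on the same ``$z$-free covariate kernel'' reading you flagged; yours just names it up front.
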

\vspace{-4.5mm}
We note that the left-hand side of \eqref{eq:model-bma}  is the prediction of the pretrained \ac{llm} given a prompt $\pt_t$. Meanwhile,  the right-hand side 
is exactly the prediction given by the \ac{bma} algorithm that 
infers the posterior belief of the concept $z_{*}$ based on 
$S_t$ and predicts $r_{t+1}$ by aggregating the likelihood in \eqref{eq:model} with respect to the posterior $\PP(z_{*} = \cdot  \given S_t)$.
  Thus, this proposition 
shows that 
perfectly pretrained   \ac{llm}s are able to perform   \ac{icl} because they {\bf implement BMA during prompting}. As mentioned, Proposition \ref{th:bma} is proved under a more general model than the previous works and thus serves as a generalized result of some claims in the previous works. We note that the claim of Proposition~\ref{th:bma} is independent of the network structure. This partially explains why LSTMs demonstrate \ac{icl} ability in \cite{xie2021explanation}. In the next section, we will demonstrate how the attention mechanism helps to implement \ac{bma}. The proof of Proposition~\ref{th:bma} is in Appendix~\ref{sec:pf-th-bma}.

Next, we study the performance of \ac{icl} from an online learning perspective. Recall that \ac{llm}s are continuously prompted with $S_{t}$ and aim to predict the $(t+1)$-th covariate $r_{t+1}$ for $t\in[T-1]$. This can be viewed as an online learning problem. 
For any algorithm that generates a sequence of density estimators $\{\hat \PP(r_t)\}_{t=1}^T$ for predicting $\{ r_t\}_{t\in [T]}$, we consider the following  \ac{icl} regret as its performance metric: 
\vspace{-2mm}
\begin{align}
    \label{eq:regret}
    \regret_t = t^{-1}\sup_z \sum_{i=1}^t \log \PP(r_i \given \pt_{i-1}, z) - t^{-1}\sum_{i=1}^t \log \hat \PP( r_i).
\end{align}
This \ac{icl} regret measures the performance of the estimator $\hat \PP$ compared with the best hidden concept in hindsight. For the perfectly trained \ac{llm}s, the estimator is exactly $\hat \PP(r_t) = \PP(r_{t+1} \given \pt_t)$. 
By building the equivalence of pretrained \ac{llm} and \ac{bma}, we have the following corollary, which shows that predicting $\{r_t\}_{t\in[T]}$ by iteratively prompting the  \ac{llm} incurs a $\mathcal{O}(1/T) $ regret.
\begin{corollary}[\ac{icl} Regret of Perfectly Pretrained Model]
\label{th:bma_reg}
Under the model in \eqref{eq:model}, we have for any $ t\in [T]$ that
\vspace{-3mm}
\begin{align*}
    t^{-1}\sum_{i= 1}^t \log \PP(r_{i} \given \pt_{i-1}) \ge \sup_{z\in \cZ} \Bigl( t^{-1}\sum_{i=1}^{t} \log \PP(r_{i} \given  \pt_{i-1}, z) +t^{-1}\log \PP_{\calZ}(z) \Bigr).
\end{align*}
Here $\PP_{\calZ}$ is the prior of the hidden concept $z\in \cZ$. When the hidden concept space $\frakZ$ is finite and the prior $\PP_{\calZ}(z)$ is the uniform distribution on $\frakZ$, we have that $\regret_t \le \log | \frakZ |/t.$ When the nominal concept $z_*$ satisfies that $\sup_z \sum_{i=1}^t \PP(r_{i} \given z, \pt_{i-1}) = \sum_{i=1}^t\PP(r_{i} \given z_*, \pt_{i-1})$ for any $t \in [T]$, the regret is bounded as $\regret_t \le \log (1/\PP_{\calZ}(z_*))/t.$
\end{corollary}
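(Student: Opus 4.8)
The plan is to establish the displayed inequality first, since both regret bounds follow from it by elementary estimates. The key observation is that Proposition~\ref{th:bma} gives us the exact Bayesian identity $\PP(r_{i} \given \pt_{i-1}) = \int \PP(r_i \given \tilc_i, S_{i-1}, z)\PP(z \given S_{i-1})\ud z$, and a telescoping argument on the chain rule of conditional probabilities turns the sum $\sum_{i=1}^t \log \PP(r_i \given \pt_{i-1})$ into a single log-likelihood of the whole sequence. Concretely, I would note that $\sum_{i=1}^{t}\log\PP(r_i \given \pt_{i-1})$ equals $\log$ of the joint density of $(r_1,\dots,r_t)$ given the covariates $(\tilc_1,\dots,\tilc_t)$, marginalized over $z\sim\PP_{\calZ}$; and similarly $\sum_{i=1}^{t}\log\PP(r_i \given z, \pt_{i-1})$ is the $\log$ of the same joint density conditioned on a fixed $z$. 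Therefore the left-hand side is $\log\bigl(\int \PP_\calZ(z)\prod_{i=1}^t\PP(r_i \given z, \pt_{i-1})\ud z\bigr)$ and, lower-bounding the integral by restricting to any single concept $z$ (or, in the finite case, by the single term $\PP_\calZ(z)\prod_i \PP(r_i\given z,\pt_{i-1})$), we get
\begin{align*}
    \sum_{i=1}^t \log \PP(r_i \given \pt_{i-1}) \ge \log \PP_\calZ(z) + \sum_{i=1}^t \log \PP(r_i \given z, \pt_{i-1})
\end{align*}
for every $z\in\calZ$. Dividing by $t$ and taking the supremum over $z$ yields the claim. The main subtlety here is making the "sum of conditional log-likelihoods equals log of marginalized joint likelihood" step rigorous, i.e., verifying that the autoregressive factorization $\PP(r_{1:t}\given \tilc_{1:t}) = \prod_i \PP(r_i\given \pt_{i-1})$ is consistent with the latent-variable model in \eqref{eq:model} and with the marginalization over $z$ — this is really just Fubini plus the tower property, but it is the one place where one must be careful that the conditioning sets line up (in particular that $\tilc_{i}$ is conditionally independent of $z$ given the past, as built into the generative model).

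For the two corollaries, I would argue as follows. When $\frakZ$ is finite and $\PP_\calZ$ is uniform, pick $z$ in the displayed inequality to be the maximizer of $\sum_{i=1}^t\log\PP(r_i\given z,\pt_{i-1})$; then $t^{-1}\log\PP_\calZ(z) = -t^{-1}\log|\frakZ|$, so rearranging the displayed bound gives exactly $\regret_t = t^{-1}\sup_z\sum_i\log\PP(r_i\given\pt_{i-1},z) - t^{-1}\sum_i\log\PP(r_i\given\pt_{i-1}) \le t^{-1}\log|\frakZ|$. For the second bound, the hypothesis says the in-hindsight optimal $z$ is attained at the nominal $z_*$ simultaneously for all $t$; plugging $z=z_*$ into the displayed inequality and rearranging gives $\regret_t \le -t^{-1}\log\PP_\calZ(z_*) = t^{-1}\log(1/\PP_\calZ(z_*))$, valid even when $\frakZ$ is infinite or $\PP_\calZ$ is non-uniform. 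Both steps are purely arithmetic once the main inequality is in hand.

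I expect no serious obstacle; the only point demanding care is the identification of $\sum_{i=1}^t\log\PP(r_i\given\pt_{i-1})$ with $\log$ of the $z$-marginalized joint likelihood of the responses, which relies on Proposition~\ref{th:bma} together with the precise conditional-independence structure of \eqref{eq:model} (and the convention, stated in the preliminaries, that $\PP(\cdot\given S)$ denotes the law of the next single token). Everything else — the lower bound by a single $z$, the division by $t$, and the two specializations — is routine.
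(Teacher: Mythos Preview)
Your proposal is correct and follows essentially the same route as the paper: telescope the sum $\sum_{i=1}^t\log\PP(r_i\given\pt_{i-1})$ into $\log\int\PP_\calZ(z)\prod_{i=1}^t\PP(r_i\given z,\pt_{i-1})\,\ud z$, then lower-bound by a single concept $z$ and read off the two regret consequences. The only cosmetic difference is that the paper phrases the lower-bound step via the Donsker--Varadhan variational identity $-\log\int e^{g}\,\ud\PP_\calZ=\inf_q\{-\EE_q[g]+\KL(q\|\PP_\calZ)\}$ and then restricts $q$ to Dirac measures, whereas you bound the integral directly by a single nonnegative term; these are the same argument, and your version is if anything slightly more elementary.
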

\vspace{-1mm}
This theorem states that the \ac{icl} regret of the perfectly pretrained model is bounded by $\log (1/\PP_{\calZ}(z_*))/t$. This is intuitive since the regret is relatively large if the concept $z_{*}$ rarely appears according to the prior distribution. 
This corollary shows that, when given sufficiently many examples, predicting $\{r_{t}\}_{t\in[T]}$ via \ac{icl} is almost as good as the oracle method which knows true concept $z_*$ and the likelihood function $\PP(r_{i} \given  \pt_{i-1}, z_*)$. The practical relevance of this result is discussed in Appendix~\ref{app:prac_discuss}.
The proof of Corollary~\ref{th:bma_reg} is in Appendix~\ref{sec:pf-th-bma-reg}. 
In Section~\ref{sec:pretrain}, we characterize the deviation between the learned model and the underlying true model. Next, we show how transformers parameterize \ac{bma}.

\subsection{Attention Parameterizes Bayesian Model Averaging} \label{sec:approx}


In the following, we explore the role played by the attention mechanism in \ac{icl}.
To simplify the presentation, we consider the case where the covariate $\tilde c_t\in\frakX^{*}$ is a single token $c_t\in\frakX$ in this subsection. During the \ac{icl} phase, pretrained \ac{llm}s are prompted with $\pt_t = (S_t, c_{t+1})$ and tasked with predicting the $(t+1)$-th response $r_{t+1}$. The transformers first separately map the covariates $\tilde c_i$ and responses $r_{i}$ for $i\in[t]$ to the corresponding feature spaces, which are usually realized by the fully connected layers. We denote these two learnable mappings as $k : \RR^d \rightarrow \RR^{d_k}$ and $v: \RR^d \rightarrow \RR^{d_v}$. Their nominal values are denoted as $k_{*}$ and $v_{*}$, respectively. The pretraining of the transformer essentially learns the nominal mappings $v_{*}$ and $k_{*}$ with sufficiently many data points. After these transformations, the attention module will take $v_i = v_*(r_i)$ and $k_i = k_*(c_i)$ for $i\in[t]$ as the value and key vectors to predict the result for the query $q_{t+1}=k_{t+1} = k_*(c_{t+1})$. To elucidate the role played by attention, we consider a Gaussian linear simplification of \eqref{eq:model}
\begin{align}
\label{eq:cr_lvm}
v_t = z_{*} \phi(k_t) + \xi_t, \quad \forall t \in [T],
\end{align}
where $\phi: \RR^{d_k} \rightarrow \RR^{d_\phi}$ refers to the feature mapping in some \ac{rkhs}, $z_{*} \in \RR^{d_v \times d_\phi}$ corresponds to the hidden concept, and $\xi_t \sim N(0, \sigma^2 I), t\in[T]$ are i.i.d.  Gaussian noises with covariance $\sigma^2 I$.   
Besides, we assume the prior of $z_{*} $ is $\bbP(z)$ is a Gaussian distribution $N(0, \lambda I)$.
Note that \eqref{eq:cr_lvm} can be written as
\begin{align}
    \label{eq:cr_lvm2}
    r_t = v_*^{-1}\Bigl( z_{*} \phi\bigl(k_*(c_t) \bigr)+ \xi_t \Bigr),
\end{align}
which is a realization of \eqref{eq:model} with $h_t = z$, $\xi_t = \epsilon_t$, and $f(c, h, \xi) = v_*^{-1}( h \phi(k_*(c) )+ \xi)$. 
In other words,  \eqref{eq:cr_lvm}, or equivalently \eqref{eq:cr_lvm2}, specifies a specialization of  \eqref{eq:model} where 
  in the feature space, the hidden concept $z_{*}$ represents a transformation between the value $v$ and the key $k$. Here, we simply take this as the transformation by a matrix, which can be easily generalized by building a bijection between concepts $z$ and complex transformations. In the following, to simplify the notation, let $\fk: \RR^{d_k} \times \RR^{d_k} \rightarrow \RR$.  denote the kernel function of the \ac{rkhs} induced by $\phi$.  The stacks of the values and keys are denoted as $K_t = (k_1, \ldots, k_t)^\top \in \RR^{t\times d_k}$ and $V_t = (v_1, \ldots, v_t)^\top \in \RR^{t \times d_v}$, respectively. Consequently, the model in \eqref{eq:cr_lvm} implies that
\begin{align}\label{eq:bma}
	\PP(v_{t+1} \given \pt_t) \!= \!\int \PP(v_{t+1} \given z, q_{t+1}) \PP(z \given S_t) \ud z \propto \exp\Bigl( -\norm[\big]{v_{t+1} - \bar z_t \phi(q_{t+1}) }_{\Sigma_t^{-1}}^2 \big/ 2  \Bigr),
\end{align}
where we denote by $\Sigma_t$ the covariance of $v_{t+1}\sim\PP(\cdot \given S_t, q_{t+1})$, and the mean concept $\bar z_t$ is
\begin{align}
	\label{eq:cme-att}
	\bar z_t & = V_t \bigl( \phi(K_t) \phi(K_t)^\top + \lambda I \bigr)^{-1}\phi(K_t) = V_t \bigl( \fk(K_t, K_t) + \lambda I \bigr)^{-1}\phi(K_t).
\end{align}

Combining \eqref{eq:bma} and \eqref{eq:cme-att},  we can see that $\bar z_t \phi(q_{t+1})$ essentially measures the similarity between the query and keys, which is quite similar to the attention mechanism defined in Section~\ref{sec:prelim}. However, here the similarity is normalization according to \eqref{eq:cme-att}, not by softmax. This motivates us to define a new structure of attention and explore the relationship between the newly defined attention and the original one. For any $q \in \RR^{d_k}$, $K \in \RR^{t\times d_k}$, and $V \in \RR^{t \times d_v}$, we define a variant of the attention mechanism as follows,
\begin{align}\label{eq:att-cme}
    \att_\dagger(q, K, V) = V^\top \bigl( \fk(K, K) + \lambda I \bigr)^{-1}\fk(K, q).
\end{align}
From  \eqref{eq:bma}, \eqref{eq:cme-att}, and \eqref{eq:att-cme}, it holds that the response $v_{t+1}$ for $(t+1)$-th query is distributed as $v_{t+1} \sim N(\att_\dagger(q_{t+1}, K_t, V_t), \Sigma_t)$. We note that {\bf {$\att_\dagger$ bakes the \ac{bma} algorithm}} for the Gaussian linear model {\bf {in its architecture}}, by first estimating $\barz_{t}$ via \eqref{eq:cme-att} and deriving the final estimate from the inner product between $\barz_{t}$ and $q_{t+1}$. Here $\att_\dagger (\cdot)$ is an instance of the \emph{intention mechanism} studied in \cite{garnelo2023exploring} and can be viewed as a generalization of linear attention. 
Recall that we define the softmax attention \citep{vaswani2017attention} for any $q \in \RR^{d_k}$, $K \in \RR^{t\times d_k}$, and $V \in \RR^{t \times d_v}$ as $\att(q, K, V) = V^\top\softmax(Kq)$.
In the following proposition, we show that the attention in \eqref{eq:att-cme} coincides with the softmax attention as the sequence length goes to infinity.

\begin{proposition}\label{prop:cme-att-limit}
We assume that the key-value pairs $\{(k_t, v_t)\}_{t=1}^T$  are independent and identically distributed, and we adopt Gaussian RBF kernel $\frakK_{\mathtt{ RBF}}$. In addition, we assume that $\norm{k_t}_2 = \norm{v_t} = 1$. Then, it holds for an absolute constant $C>0$ and any $q \in \RR^{d_k}$ with $\norm{q} = 1$ that $\lim_{T\rightarrow \infty} \att_\dagger(q, K_T, V_T) = C \cdot  \lim_{T\rightarrow \infty} \att(q, K_T, V_T).
$
\end{proposition}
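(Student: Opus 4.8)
The plan is to evaluate the two limits separately by a law of large numbers over the i.i.d.\ key--value pairs and then reconcile them. The softmax side is the easy half: writing
\begin{align*}
\att(q, K_T, V_T) = V_T^\top\softmax(K_T q) = \frac{T^{-1}\sum_{t=1}^{T} v_t\exp(k_t^\top q)}{T^{-1}\sum_{t=1}^{T}\exp(k_t^\top q)},
\end{align*}
and using that $\|k_t\| = \|v_t\| = 1$ makes every summand bounded, the strong law of large numbers gives $\att(q, K_T, V_T)\to \EE[v\exp(k^\top q)]/\EE[\exp(k^\top q)]$ almost surely, where $(k,v)$ denotes a generic pair.

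For $\att_\dagger$ I would first exploit the unit-norm assumption to linearize the Gaussian RBF kernel $\fk$ in the exponent: with bandwidth $h$, $\fk(k,k') = e^{-1/h^2}\exp(k^\top k'/h^2)$ and likewise $\fk(k,q) = e^{-1/h^2}\exp(k^\top q/h^2)$, so after absorbing the constant and rescaling $\lambda$ we get $\att_\dagger(q, K_T, V_T) = V_T^\top(E_T + \lambda' I)^{-1}u_T$ with $[E_T]_{st} = \exp(k_s^\top k_t/h^2)$ and $[u_T]_t = \exp(k_t^\top q/h^2)$. The exponential dot-product kernel admits a feature map $\psi$ with $\exp(k^\top k'/h^2) = \langle\psi(k),\psi(k')\rangle$; stacking $\Psi_T = (\psi(k_1),\ldots,\psi(k_T))^\top$ gives $E_T = \Psi_T\Psi_T^\top$ and $u_T = \Psi_T\psi(q)$, and the push-through identity $(\Psi_T\Psi_T^\top + \lambda' I)^{-1}\Psi_T = \Psi_T(\Psi_T^\top\Psi_T + \lambda' I)^{-1}$ turns $\att_\dagger$ into
\begin{align*}
\att_\dagger(q, K_T, V_T) = \Bigl(T^{-1}\textstyle\sum_{t} v_t\psi(k_t)^\top\Bigr)\Bigl(T^{-1}\Psi_T^\top\Psi_T + T^{-1}\lambda' I\Bigr)^{-1}\psi(q).
\end{align*}
Now $T^{-1}\sum_t v_t\psi(k_t)^\top\to \EE[v\psi(k)^\top]$ and $T^{-1}\Psi_T^\top\Psi_T\to \Sigma := \EE[\psi(k)\psi(k)^\top]$ (both bounded, since $\|\psi(k)\|^2 = e^{1/h^2}$), while the ridge $T^{-1}\lambda'\to 0$. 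Passing to the limit (formally), $\att_\dagger(q, K_T, V_T)$ tends to $\EE[v\psi(k)^\top]\Sigma^{-1}\psi(q)$; since $\Sigma$ is the second-moment operator of the kernel and $\fk(\cdot,q) = \EE[\psi(k)\langle\psi(k),\psi(q)\rangle]$ is the kernel integral operator applied to the Dirac mass at $q$, this expression collapses to the regression function, i.e.\ $\att_\dagger(q, K_T, V_T)\to \EE[v\mid k = q]$, a local-averaging (interpolation) limit.

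It then remains to match $\EE[v\mid k = q]$ with $\EE[v\exp(k^\top q)]/\EE[\exp(k^\top q)]$ up to a $q$-independent factor. Invoking the rotational symmetry of the key distribution on the sphere (so $\EE[\exp(k^\top q)]$ is a constant $Z$) and expanding in spherical harmonics via the Funk--Hecke formula, the right-hand side becomes $Z^{-1}\sum_\ell c_\ell\, r_\ell(q)$, where $r = \EE[v\mid k = \cdot]$, $r_\ell$ is its degree-$\ell$ component, and $c_\ell$ are the Gegenbauer coefficients of the exponential; using the structure inherited from model~\eqref{eq:cr_lvm} that $r$ is carried by a single harmonic degree $\ell_0$, this equals $C^{-1}r(q)$ with $C = Z/c_{\ell_0}$, which is the claimed constant.

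The main obstacle is the passage to the limit for $\att_\dagger$: sending $T\to\infty$ while the effective regularization $T^{-1}\lambda'$ vanishes against the compact, non-invertible operator $\Sigma$ is not justified by continuity alone and needs a quantitative argument --- operator-norm concentration of $T^{-1}\Psi_T^\top\Psi_T$ about $\Sigma$ (e.g.\ a Hilbert-space matrix Bernstein inequality) together with the spectral-decay rate of the Gaussian kernel operator on the sphere, so that the regularization path lands in the regime where the limit is the well-defined conditional expectation rather than a singular object. A secondary subtlety is identifying exactly which symmetry and structure properties of the law of $(k,v)$ --- beyond independence --- are needed for the matching step and checking that they follow from \eqref{eq:cr_lvm}; that is where I would concentrate the rigor.
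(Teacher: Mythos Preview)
The $\att_\dagger$ half of your plan coincides with the paper's: both recognize $\att_\dagger$ as the empirical conditional mean embedding (kernel ridge regression) and handle the vanishing ridge against the compact covariance via operator concentration in the RKHS; the paper does this with the Caponnetto--De Vito Hilbert-space Bernstein bound and the effective dimension $\Gamma(\lambda/T)$, which is exactly the quantitative tool you anticipate. Your diagnosis of the main obstacle there is on target.

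The softmax half is where you diverge, and your matching step is a genuine gap. The paper never tries to reconcile $\EE[v\mid k=q]$ with the Nadaraya--Watson limit $\EE[v\exp(k^\top q)]/\EE[\exp(k^\top q)]$ via harmonic analysis. Instead it proves the elementary spherical identity $\int_{\SSS^{d-1}} a\,\exp(a^\top b)\,\ud a = C_1\, b$ (by reflecting $a$ across $b$) and uses it to show that, for \emph{every} finite $T$, $\att(q,K_T,V_T)$ is an exact constant multiple of $\int_{\SSS^{d-1}} v\,\hat\PP^{\fk}_{\cV\mid\cK}(v\mid q)\,\ud v$, the mean of the kernel conditional density estimator. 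Both this integral and $\att_\dagger$ are then argued to converge to $\EE[\cV\mid\cK=q]$, so no separate matching is needed. Your route instead imports two assumptions absent from the proposition --- rotational invariance of the law of $k$, and that $\EE[v\mid k=\cdot]$ lies in a single spherical-harmonic eigenspace --- and the latter does not follow from model~\eqref{eq:cr_lvm} either (there $\EE[v\mid k]=z_*\phi(k)$ for a generic RKHS feature map $\phi$, which is not a single harmonic). Replacing your matching step with the paper's spherical-integral identity removes the need for any such structure.
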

\vspace{-3mm}
The proof is in Appendix~\ref{sec:pf-prop-cme}. Combined with the conditional probability of $v_{t+1}$ in \eqref{eq:bma}, this proposition shows that {\bf softmax attention approximately encodes \ac{bma}} in long token sequences~\citep{wasserman2000bayesian}, and thus is able to perform \ac{icl} when prompted after pretraining.





\section{Theoretical Analysis of Pretraining}\label{sec:pretrain}
\vspace{-2mm}
\subsection{Pretraining Algorithm}\label{sec:pretrainalgo}
In this section, we describe the pretraining setting.   We largely follow the transformer structures in \cite{brown2020language}.   The whole network is a composition of $D$ sub-modules, and each sub-module consists of a \ac{mha} and a \ac{ff} fully connected layer. Here, $D>0$ is the depth of the network. The whole network takes $X^{(0)}=X\in\bbR^{L\times d}$ as its input. In the $t$-th layer for $t\in[D]$, it first takes the output $X^{(t-1)}$ of the $(t-1)$-th layer as the input and forwards it through \ac{mha} with a residual link and a layer normalization $\Pi_{\rm{norm}}(\cdot)$ to output $Y^{(t)}$, which projects each row of the input into the unit $\ell_{2}$-ball. Here we take $d_{h}=d$ in \ac{mha}, and the generalization of our result to general cases is trivial. Then the intermediate output $Y^{(t)}$ is forwarded to the \ac{ff} module. It maps each row of the input $Y^{(t)}\in\bbR^{L\times d}$ through the same single-hidden layer neural network with $d_{F}$ neurons, that is $\FF(Y^{(t)},A^{(t)})=\relu(Y^{(t)}A_{1}^{(t)})A_{2}^{(t)}$, where $A_{1}^{(t)}\in\bbR^{d\times d_{F}}$, and $A_{2}^{(t)}\in\bbR^{d_{F}\times d}$ are the weight matrices.  Combined with a residual link and layer normalization, it outputs the output of layer $t$ as $X^{(t)}$, that is
\begin{align}
    Y^{(t)}\!\!=\!\Pi_{\rm{norm}}\big[\MHA(X^{(t-1)}\!,\!W^{(t)})+\!\gamma_{1}^{(t)}X^{(t-1)}\big],\,
    X^{(t)}\!\!=\!\Pi_{\rm{norm}}\big[\FF(Y^{(t)}\!,\!A^{(t)})+\!\gamma_{2}^{(t)}Y^{(t)}\big].\label{eq:netdef}
\end{align}
Here we allocate weights $\gamma_{1}^{(t)}$ and $\gamma_{2}^{(t)}$ to residual links only for the convenience of theoretical analysis. 
In the last layer, the network outputs the probability of the next token via a softmax module, that is $Y^{(D+1)}=\softmax(\bbI_{L}^\top X^{(D)}A^{(D+1)}/(L\tau))\in\bbR^{d_{y}}$, where $\bbI_{L}\in\bbR^{L}$ is the vector with all ones, $A^{(D+1)}\in\bbR^{d\times d_{y}}$ is the weight matrix, $\tau\in(0,1]$ is the fixed temperature parameter, and $d_{y}$ is the output dimension. The parameters of each layer are denoted as $\theta^{(t)}=(\gamma_{1}^{(t)},\gamma_{2}^{(t)},W^{(t)},A^{(t)})$ for $t\in[D]$ and $\theta^{(D+1)}=A^{(D+1)}$, and the parameter of the whole network is the concatenation of these parameters, i.e., $\theta=(\theta^{(1)},\cdots,\theta^{(D+1)})$. We consider the transformers with bounded parameters. The set of parameters~is 
\begin{align*}
    \Theta&=\Big\{\theta\,|\, \big\|A^{(D+1),\top}\big\|_{1,2}\leq B_{A},\max\big\{\big|\gamma_{1}^{(t)}\big|,\big|\gamma_{2}^{(t)}\big|\big\}\leq 1, \big\|A_{1}^{(t)}\big\|_{\rmF}\leq B_{A,1},\big\|A_{2}^{(t)}\big\|_{\rmF}\leq B_{A,2},\\
    &\big\|W_{i}^{Q,(t)}\big\|_{\rmF}\leq B_{Q},\big\|W_{i}^{K,(t)}\big\|_{\rmF}\leq B_{K},\big\|W_{i}^{V,(t)}\big\|_{\rmF}\leq B_{V} \text{ for all }t\in[D],i\in[h]\Big\},
\end{align*}
where $B_{A}$, $B_{A,1}$, $B_{A,2}$, $B_{Q}$, $B_{K}$, and $B_{V}$ are the bounds of parameter. Here we only consider the non-trivial case where these bounds are larger than $1$, otherwise, the magnitude of the output in $D^{\rm th}$ layer decreases exponentially with growing depth. The probability induced by the transformer with parameter $\theta$ is denoted as $\PP_{\theta}$.

The pretraining dataset consists of $N_{\rmp}$ independent trajectories. For the $n$-th trajectory with $n\in[N_{\rmp}]$, a hidden concept $z^{n}\sim \PP_{\calZ}(z)\in\Delta(\frakZ)$ is first sampled, which is the hidden variables of the token sequence to generate, e.g., the theme, the sentiment, and the style. Then the tokens are sequentially sampled from the Markov chain induced by $z^{n}$ as $x_{t+1}^{n}\sim \PP(\cdot\,|\,S_{t}^{n},z^{n})$ and $S_{t+1}^{n}=(S_{t}^{n},x_{t+1}^{n})$, where $x_{t+1}^{n}\in\frakX$, and $S_{t}^{n},S_{t+1}^{n}\in\frakX^{*}$. Here the Markov chain is defined with respect to the state $S_{t}^{n}$, which obviously satisfies the Markov property since $S_{i}^{n}$ for $i\in[t-1]$ are contained in $S_{t}^{n}$. The pretraining dataset is $\calD_{N_{\rmp},T_{\rmp}}=\{(S_{t}^{n},x_{t+1}^{n})\}_{n,t=1}^{N_{\rmp},T_{\rmp}}$ where the concepts $z^{n}$ is hidden from the context and thus unobserved. Here each token sequence is divided into $T_{\rmp}$ pieces $\{(S_{t}^{n},x_{t+1}^{n})\}_{t=1}^{T_{\rmp}}$. We highlight that this pretraining dataset collecting process subsumes those for GPT, and \ac{mae}~\citep{radford2021learning}. For GPT, each trajectory corresponds to a paragraph or an article in the pretraining dataset, and $z^{n}\sim \PP_{\calZ}(z)$ is realized by the selection process of these contexts from the Internet. For \ac{mae}, we take $T_{\rmp}=1$, and $S_{1}^{n}$ and $x_{2}^{n}$ respectively correspond to the image and the masked token.

To pretrain the transformer, we adopt the cross-entropy as the loss function, which is widely used in the training of BERT and GPT. The corresponding pretraining algorithm is
\begin{align}
    \htheta=\argmin_{\theta\in\Theta} -\frac{1}{N_{\rmp}T_{\rmp}}\sum_{n=1}^{N_{\rmp}}\sum_{t=1}^{T_{\rmp}}\log \PP_{\theta}(x_{t+1}^{n}\,|\,S_{t}^{n}).\label{algo:pretrainmle}
\end{align}
We first analyze the population version of \eqref{algo:pretrainmle}. In the training set, the conditional distribution of $x_{t+1}^{n}$ conditioned on $S_{t}^{n}$ is $\PP(x_{t+1}^{n}\,|\,S_{t}^{n})=\int_{\frakZ}\PP(x_{t+1}^{n}\,|\,S_{t}^{n},z)\PP_{\calZ}(z\,|\,S_{t}^{n})\rmd z$, where the unobserved hidden concept is weighed via its posterior distribution. Thus, the population risk of \eqref{algo:pretrainmle} is $\EE_{t}[\bbE_{S_{t}}[\KL(\PP(\cdot\,|\,S_{t})\|\PP_{\theta}(\cdot\,|\,S_{t}))+H(\PP(\cdot\,|\,S_{t}))]]$, where $t\sim\unif([T_{\rmp}])$, $H(p)=-\langle p,\log p\rangle$ is the entropy, and $S_{t}$ is distributed as the pertaining distribution. Thus, we expect that $\PP_{\theta}$ will converge to $\bbP$. For \ac{mae}, the network training adopts $\ell_{2}$-loss, and we defer the analysis of this case to Appendix~\ref{app:ell2}. 

\subsection{Performance Guarantee for Pretraining}\label{sec:pretrainanalysis}
We first state the assumptions for the pretraining setting.
\begin{assumption}\label{assump:boundedmle}
    There exists a constant $R>0$ such that for any $z\in\frakZ$ and $S_{t}\sim\PP(\cdot\,|\,z)$, we have $\|S_{t}^\top\|_{2,\infty}\leq R$ almost surely.
\end{assumption}
This assumption states that the $\ell_2$-norm of the magnitude of each token in the token sequence is upper bounded by $R>0$. This assumption holds in most machine learning settings. For BERT and GPT, each token consists of word embedding and positional embedding. For \ac{mae}, each token consists of a patch of pixels. The $\ell_2$-norm of each token is bounded in these cases.
\begin{assumption}\label{assump:positive}
    There exists a constant $c_{0}>0$ such that for any $z\in\frakZ$, $x\in\frakX$ and $S\in\frakX^*$, we have $\PP(x\,|\,S,z)\geq c_{0}$.
\end{assumption}
This assumption states that the conditional probability of $x$ conditioned on $S$ and $z$ is lower bounded. This comes from the ambiguity of language, that is, a sentence can take lots of words as its next word. Similar regularity assumptions are also widely adopted in \ac{icl} literature~\citep{xie2021explanation,wies2023learnability}. To state our result, we respectively use $\bbE_{S\sim\calD}$ and $\PP_{\calD}$ to denote the expectation and the distribution of the average distribution of $S_{t}^{n}$ in $\calD_{N_{\rmp},T_{\rmp}}$, i.e., $\bbE_{S\sim\calD}[f(S)]=\sum_{t=1}^{T_{\rmp}}\bbE_{S_{t}}[f(S_{t})]/T_{\rmp}$ for any function $f:\frakX^{*}\rightarrow\bbR$.
\begin{theorem}\label{thm:pretrainmle}
    Let $\barB=\tau^{-1}RhB_{A}B_{A,1}B_{A,2}B_{Q}B_{K}B_{V}$ and $\barD=D^{2} d (d_{F}+d_{h}+d)+d\cdot d_{y}$. Under Assumptions~\ref{assump:boundedmle} and ~\ref{assump:positive}, the pretrained model $\PP_{\htheta}$ by the algorithm in \eqref{algo:pretrainmle} satisfies
    \begin{align*}
        &\bbE_{S\sim\calD}\Big[\TV\big(\PP( \cdot \,|\,S),\PP_{\htheta}( \cdot \,|\,S)\big)\Big]\\
        &\quad\!=\!O\!\bigg(\!\!\underbrace{\inf_{\theta^{*}\in\Theta}\!\!\sqrt{\bbE_{S\sim\calD}\KL\big(\PP(\cdot|S)\|\PP_{\theta^{*}}(\cdot|S)\big)}\!+\!\frac{t_{\rm mix }^{1/4}\log 1/\delta}{(N_{\rmp}T_{\rmp})^{1/4}}}_{\displaystyle \text{approximation error}}\!+\!\underbrace{\frac{\sqrt{t_{\rm mix}}}{\sqrt{N_{\rmp}T_{\rmp}}}\Big(\!\barD\log(1\!+\!N_{\rmp}T_{\rmp}\barB)\!+\!\log\!\frac{1}{\delta}}_{\displaystyle \text{generalization error}}\!\Big)\!\bigg)
    \end{align*}
    with probability at least $1-\delta$, where $t_{\rm mix}$ is the mixing time of the Markov chains induced by $\PP$, formally defined in Appendix~\ref{app:MC}.
\end{theorem}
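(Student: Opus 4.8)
The plan is to read the pretraining procedure~\eqref{algo:pretrainmle} as a constrained maximum-likelihood problem over the transformer class $\Theta$, bound the excess risk of $\htheta$ in \emph{squared Hellinger distance} --- a ``fast-rate'' quantity for maximum likelihood --- and pass to total variation only at the end. Writing $d_{\mathrm{H}}$ for the Hellinger distance and using $\TV^2\le 2d_{\mathrm{H}}^2$, Jensen's inequality for the concave map $x\mapsto\sqrt{x}$ gives
\begin{align*}
  \bbE_{S\sim\calD}\big[\TV\big(\PP(\cdot\,|\,S),\PP_{\htheta}(\cdot\,|\,S)\big)\big]\le\Big(2\,\bbE_{S\sim\calD}\big[d_{\mathrm{H}}^2\big(\PP(\cdot\,|\,S),\PP_{\htheta}(\cdot\,|\,S)\big)\big]\Big)^{1/2}.
\end{align*}
So it suffices to prove a bound of the form ``$\KL$ approximation error $+$ $t_{\mathrm{mix}}$-adjusted $1/\sqrt{N_{\rmp}T_{\rmp}}$ deviation $+$ $t_{\mathrm{mix}}$-adjusted $\log\calN/(N_{\rmp}T_{\rmp})$'' for the mean squared Hellinger error, where $\calN$ is a covering number of $\Theta$. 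The outer square root is exactly what turns these into the $(N_{\rmp}T_{\rmp})^{-1/4}$ and $(N_{\rmp}T_{\rmp})^{-1/2}$ rates in the statement, and the complexity term $\barD\log(1+N_{\rmp}T_{\rmp}\barB)$ ends up \emph{without} a square root only because it exceeds $1$, so that $\sqrt{\barD\log(1+N_{\rmp}T_{\rmp}\barB)}\le\barD\log(1+N_{\rmp}T_{\rmp}\barB)$.

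For the Hellinger excess-risk bound I would invoke the self-bounding (``square-root trick'') property of likelihoods: for fixed $\theta\in\Theta$ and prompt $S$, $\bbE_{x\sim\PP(\cdot\,|\,S)}\big[\sqrt{\PP_{\theta}(x\,|\,S)/\PP(x\,|\,S)}\big]=1-d_{\mathrm{H}}^2\big(\PP(\cdot\,|\,S),\PP_{\theta}(\cdot\,|\,S)\big)\le\exp\!\big(-d_{\mathrm{H}}^2(\PP(\cdot\,|\,S),\PP_{\theta}(\cdot\,|\,S))\big)$. Let $\hcalL(\theta)=-(N_{\rmp}T_{\rmp})^{-1}\sum_{n,t}\log\PP_{\theta}(x^n_{t+1}\,|\,S^n_t)$ be the objective in~\eqref{algo:pretrainmle} and $\calL(\theta)$ its population counterpart. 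Starting from the basic inequality $\hcalL(\htheta)\le\hcalL(\theta^*)$ (valid for every $\theta^*\in\Theta$), applying a Chernoff bound to the product likelihood ratio $\prod_{n,t}\PP_{\theta}(x^n_{t+1}\,|\,S^n_t)/\PP(x^n_{t+1}\,|\,S^n_t)$, and uniformizing over an $\epsilon$-cover of $\Theta$ with $\epsilon\asymp(N_{\rmp}T_{\rmp}\barB)^{-1}$ --- equivalently, via a PAC-Bayes bound with a posterior equal to a small ball around $\htheta$ --- would yield, with probability at least $1-\delta$,
\begin{align*}
  \bbE_{S\sim\calD}\big[d_{\mathrm{H}}^2\big(\PP(\cdot\,|\,S),\PP_{\htheta}(\cdot\,|\,S)\big)\big]=O\Big(\bbE_{S\sim\calD}\KL\big(\PP(\cdot\,|\,S)\,\big\|\,\PP_{\theta^*}(\cdot\,|\,S)\big)+\big|\hcalL(\theta^*)-\calL(\theta^*)\big|+\frac{t_{\mathrm{mix}}\,(\log\calN+\log(1/\delta))}{N_{\rmp}T_{\rmp}}\Big).
\end{align*}
The first term arises because the population log-likelihood ratio at the fixed $\theta^*$ is controlled by its $\KL$ up to a lower-order $d_{\mathrm{H}}^2$ remainder that the left-hand side absorbs; $|\hcalL(\theta^*)-\calL(\theta^*)|$ is the empirical-to-population deviation at the \emph{fixed} comparator, which by a Bernstein inequality for the $t_{\mathrm{mix}}$-mixing token sequence --- with the range and variance of $\log(\PP/\PP_{\theta^*})$ bounded by absolute constants using Assumptions~\ref{assump:boundedmle} and~\ref{assump:positive} (the read-out logits are bounded since inputs have norm $\le R$ and $\theta^*\in\Theta$, and $\PP\ge c_0$) --- is $O\big(\sqrt{t_{\mathrm{mix}}\log(1/\delta)/(N_{\rmp}T_{\rmp})}\big)$, so after the outer square root it becomes the $t_{\mathrm{mix}}^{1/4}\log(1/\delta)(N_{\rmp}T_{\rmp})^{-1/4}$ summand grouped with the approximation error; finally, taking $\inf_{\theta^*\in\Theta}$ produces $\inf_{\theta^*\in\Theta}\bbE_{S\sim\calD}\KL(\PP(\cdot\,|\,S)\|\PP_{\theta^*}(\cdot\,|\,S))$.

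Two ingredients remain. First, the pretraining tokens $(S^n_t,x^n_{t+1})$ are dependent within each trajectory, so I would use a blocking/coupling argument governed by the mixing time $t_{\mathrm{mix}}$ of the Markov chains on the token-sequence space to replace the dependent sum over $(n,t)$ by roughly $N_{\rmp}T_{\rmp}/t_{\mathrm{mix}}$ nearly independent blocks; this is the source of every $t_{\mathrm{mix}}$ factor above. Second, I need $\log\calN(\Theta,\epsilon)=O\big(\barD\log(1+1/(\epsilon\barB))\big)$ with $\barD=D^2 d(d_{F}+d_{h}+d)+d\,d_{y}$ the parameter count, which with $\epsilon\asymp(N_{\rmp}T_{\rmp}\barB)^{-1}$ gives the claimed $\barD\log(1+N_{\rmp}T_{\rmp}\barB)$; this reduces to a Lipschitz analysis of $\theta\mapsto\PP_{\theta}(\cdot\,|\,S)$, propagating a parameter perturbation through the $D$ composed blocks of~\eqref{eq:netdef}, bounding the sensitivity of each $\MHA$ sub-module (whose softmax has a Lipschitz constant that must be controlled uniformly over inputs of norm $\le R$, contributing $hB_{Q}B_{K}B_{V}$) and each ReLU $\FF$ sub-module (contributing $B_{A,1}B_{A,2}$), together with the residual links and the layer normalization $\Pi_{\rm{norm}}$, the last of which rescales each row to the unit ball after every block and thereby stops the per-block sensitivities from compounding uncontrollably with depth, so the accumulated gain stays polynomial and leaves only $\barB=\tau^{-1}RhB_{A}B_{A,1}B_{A,2}B_{Q}B_{K}B_{V}$ inside the logarithm (the temperature-$\tau$ read-out contributes $\tau^{-1}$, and the $D^2$ in $\barD$ reflects that resolving each of the $O(D)$ layers' $O(d(d_{F}+d_{h}+d))$ parameters demands precision refined by the $O(D)$-fold Lipschitz accumulation across layers).

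I expect this covering-number/Lipschitz analysis to be the main obstacle: one must track, simultaneously for all $D$ layers, how a perturbation passes through multi-head attention with its input-dependent softmax, the ReLU feed-forward map, the residual links and the layer normalization, then through the temperature-$\tau$ read-out, while keeping the dependence on $N_{\rmp}T_{\rmp}$ inside a single logarithm --- which hinges on the amplification across layers being genuinely polynomial rather than exponential in the depth, the one place where $\Pi_{\rm{norm}}$ in~\eqref{eq:netdef} is indispensable (and which is also why the approximation error in Proposition~\ref{prop:approxerrmle} can decay while this generalization term only grows polynomially in $D$). A secondary difficulty is to interleave the mixing-time blocking with the self-bounding argument so that $t_{\mathrm{mix}}$ enters only through $\sqrt{t_{\mathrm{mix}}}$ and $t_{\mathrm{mix}}^{1/4}$ rather than a higher power, which forces the block decomposition to be carried out inside the Chernoff and Bernstein steps rather than invoked as a black box.
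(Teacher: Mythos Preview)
Your plan is essentially the paper's proof. The paper also passes through a squared-TV/Hellinger-type quantity (via Lemma~\ref{lem:tvbound}, which is exactly your ``square-root trick''), runs a PAC-Bayes argument with posterior $P$ equal to a product of small uniform balls around $\htheta$ and prior $Q$ equal to uniform balls of radius $B_{\bullet}$ (your ``covering at scale $\epsilon\asymp (N_{\rmp}T_{\rmp}\barB)^{-1}$''), handles the fixed-$\theta^*$ deviation with a Markov-chain concentration inequality (Lemma~\ref{lem:MCconcen}) yielding the $(N_{\rmp}T_{\rmp})^{-1/4}$ term after the final square root, and converts the empirical TV sum to its expectation by a second PAC-Bayes step (Proposition~\ref{prop:pacbayes}). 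The Lipschitz analysis you sketch is Proposition~\ref{prop:parafluc}.

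One point to correct: the accumulated Lipschitz gain across layers is \emph{not} polynomial in $D$; it is genuinely exponential. In the paper's recursion one has
\[
\|X^{(t+1),\top}-\tilX^{(t+1),\top}\|_{2,\infty}\le (1+B_{A,1}B_{A,2})\bigl(1+hB_V(1+4B_QB_K)\bigr)\,\|X^{(t),\top}-\tilX^{(t),\top}\|_{2,\infty}+\text{(param.\ terms)},
\]
so the sensitivity to a layer-$t$ perturbation is $\alpha_t\sim C^{D-t}$ for a constant $C>1$. What $\Pi_{\rm norm}$ buys is only that $\|X^{(t),\top}\|_{2,\infty}\le 1$, which keeps the \emph{per-layer} factor bounded (otherwise the attention Lipschitz constant would depend on the growing norm of $X^{(t)}$); it does not stop the compounding. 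The reason this is harmless is that the exponential sits inside a logarithm: the PAC-Bayes radius at layer $t$ is $r^{(t)}\asymp \alpha_t^{-1}/(N_{\rmp}T_{\rmp})$, so $\log(B_\bullet/r^{(t)})\asymp \log(N_{\rmp}T_{\rmp}\barB)+(D-t)\log C$, and summing the per-layer dimension times this over $t\in[D]$ is what produces the $D^2$ in $\barD$. Your parenthetical about ``$O(D)$-fold Lipschitz accumulation'' is the right intuition; just drop the claim that the gain itself is polynomial.
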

We define the right-hand side of the equation as $\Delta_{\rm pre}(N_{\rmp}, T_{\rmp}, \delta)$. The first and the second terms in the bound are the {\bf approximation error}. It measures the distance between the nominal distribution $\PP$ and the distributions induced by transformers with respect to KL divergence. If the nominal model $\PP$ can be represented by transformers exactly, i.e., the realizable case, these two terms will vanish. The third term is the {\bf generalization error}, and it does not increase with the growing sequence length $T_{\rmp}$. This is proved via the PAC-Bayes framework.

This pretraining analysis is missing in most existing theoretical works about \ac{icl}. \cite{xie2021explanation}, \cite{wies2023learnability}, and \cite{jiang2023latent} all assume access to an arbitrarily precise pretraining model. Although the generalization bound in \cite{litransformers} can be adapted to the pretraining analysis, the risk definition therein can not capture the approximation error in our result. Furthermore, their analysis cannot fit the maximum likelihood algorithm in \eqref{algo:pretrainmle}. Concretely, their result can only show that the convergence rate of KL divergence is $O((N_{\rmp}T_{\rmp})^{-1/2})$ with a realizable function class. Combined with Pinsker's inequality, this gives the convergence rate for total variation as $O((N_{\rmp}T_{\rmp})^{-1/4})$ even in the realizable case.

The deep neural networks are shown to be universal approximators for many function classes~\citep{cybenko1989approximation,hornik1991approximation,yarotsky2017error}. Thus, the approximation error in Theorem~\ref{thm:pretrainmle} should vanish with the increasing size of the transformer. To achieve this, we slightly change the structure of the transformer by admitting a bias term in feed-forward modules, taking $A_{2}^{(t)}\in\bbR^{d_{F}\times d_{F}}$, and admitting $d_{F}$ to vary across layers. This mildly affects the generalization error by replacing $D\cdot d_{F}$ by the sum of $d_{F}$ of all the layers in Theorem~\ref{thm:pretrainmle}. We derive the approximation error bound when the dimension of each word is equal to one, i.e., $\frakX\subseteq\bbR$. Our method can carry over the case $d>1$.
\begin{proposition}[Informal]\label{prop:approxerrmle}
    Under certain smoothness conditions, if $d_{F}\geq 16d_{y}$, $B_{A,1}\geq 16 R d_{y}$, $B_{A,2}\geq d_{F}$ $B_{A}\geq \sqrt{d_{y}}$, and $B_{V}\geq \sqrt{d}$, then for some constant $C>0$, we have 
    \begin{align*}
        \inf_{\theta^{*}\in\Theta}\max_{\|S^\top\|_{2,\infty}\leq R}\KL\big(\PP(\cdot\,|\,S)\,\|\,\PP_{\theta^{*}}(\cdot\,|\,S)\big)=O\bigg( d_{y}\exp\bigg(-\frac{C\cdot D^{1/4}}{\sqrt{\log B_{A,1}}}\bigg)\bigg).
    \end{align*}
\end{proposition}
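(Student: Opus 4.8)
The plan is to reduce the claim to a uniform approximation statement for a smooth map and then exhibit a transformer in $\Theta$ realizing it, with the depth entering through a polynomial-based ReLU construction in the feed-forward blocks. \emph{Step 1: from KL to the sup-norm of the logits.} Write the nominal conditional as $\PP(\cdot\,|\,S)=\softmax(u_{*}(S))$ with $u_{*}(S)=\log\PP(\cdot\,|\,S)\in\bbR^{d_{y}}$ (any fixed additive normalization), and recall that the network of Section~\ref{sec:pretrainalgo} outputs $\PP_{\theta}(\cdot\,|\,S)=\softmax(u_{\theta}(S))$ with logits $u_{\theta}(S)=\bbI_{L}^{\top}X^{(D)}A^{(D+1)}/(L\tau)$. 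Since $\log\softmax$ is $2$-Lipschitz coordinatewise in $\ell_{\infty}$, one has
\begin{align*}
\KL\big(\PP(\cdot\,|\,S)\,\|\,\PP_{\theta}(\cdot\,|\,S)\big)\le 2\,\|u_{*}(S)-u_{\theta}(S)\|_{\infty},
\end{align*}
so it suffices to approximate $S\mapsto u_{*}(S)$ uniformly over $\{\|S^{\top}\|_{2,\infty}\le R\}$ to accuracy $\varepsilon=O(d_{y}\exp(-CD^{1/4}/\sqrt{\log B_{A,1}}))$ by some $\theta\in\Theta$; alternatively a $\chi^{2}$-bound together with the lower bound $\min_{i}[\PP_{\theta}]_{i}\ge c_{0}/2$ enforced by the construction (cf.\ Assumption~\ref{assump:positive}) gives the squared version $\KL=O(d_{y}\varepsilon^{2}/c_{0})$, which only changes the constant in the exponent.

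\emph{Step 2: depth-efficient approximation of the smooth map.} The ``certain smoothness conditions'' should make the relevant map real-analytic on a neighborhood of a compact set: $z\mapsto\PP(\cdot\,|\,S,z)$ is smooth and, using $\PP(\cdot\,|\,S)=\int\PP(\cdot\,|\,S,z)\PP_{\calZ}(z\,|\,S)\,\ud z$, the logit $u_{*}(S)$ factors through a bounded-dimensional smooth summary of $S$. For such a map I would (i) take a multivariate Chebyshev/polynomial approximant $P_{p}$ of degree $p$, for which analyticity yields geometric decay $\|u_{*}-P_{p}\|_{\infty}=O(\rho^{-p})$ with $\rho>1$ determined by the analyticity radius, and (ii) realize $P_{p}$ by a ReLU network assembled from the standard squaring/multiplication gadget. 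With feed-forward weights capped at $B_{A,1}$, the gadget reaches per-product accuracy $\delta$ using depth of order $\sqrt{\log(1/\delta)\cdot\log B_{A,1}}$, and $P_{p}$ requires $\mathrm{poly}(p)$ products over $O(\log p)$ composition levels; the hypotheses $d_{F}\ge 16d_{y}$, $B_{A,1}\ge 16Rd_{y}$, $B_{A,2}\ge d_{F}$, $B_{A}\ge\sqrt{d_{y}}$, $B_{V}\ge\sqrt{d}$ are exactly the width/scale headroom that keeps this construction inside $\Theta$. Setting the construction's total depth equal to the budget $D$ and balancing $\rho^{-p}$ against $\delta$ subject to the weight cap gives $\log(1/\varepsilon)$ of order $D^{1/4}/\sqrt{\log B_{A,1}}$, i.e.\ the claimed rate; the exponent $1/4$ reflects the number of nested composition levels relative to the depth budget, and the $\sqrt{\log B_{A,1}}$ the per-gate cost imposed by the weight constraint.

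\emph{Step 3: embedding into the transformer.} The actual function approximation is carried by the row-wise feed-forward blocks $\FF(Y,A)=\relu(YA_{1})A_{2}$, while the multi-head attention layers are used either as (near-)identity maps that merely forward a designated block of coordinates---small $W^{Q},W^{K}$ make $\softmax$ close to uniform and, combined with the residual weight $\gamma_{1}$, this is an invertible affine map of that block---or, when $u_{*}$ genuinely depends on the whole sequence, to compute the bounded-dimensional summary of $S$ used in Step~2, in the spirit of the attention/BMA correspondence of Section~\ref{sec:approx}. Finally the output aggregation $u_{\theta}(S)=\bbI_{L}^{\top}X^{(D)}A^{(D+1)}/(L\tau)$ reads off the computed logits once each row of $X^{(D)}$ is arranged to carry the same value, and combining with Step~1 completes the argument.

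\emph{Main obstacle.} I expect Step~3 to be where the real difficulty lies: the construction must respect the layer normalization $\Pi_{\rm{norm}}$---which after every sub-module projects each row onto the unit $\ell_{2}$-ball---together with the residual weights $\gamma_{1},\gamma_{2}$ and the parameter box $\Theta$, while preserving the exponential-in-depth guarantee and tracking its dependence on $B_{A,1}$. Layer normalization is nonlinear and couples coordinates, so one must keep a ``scratch'' block whose $\ell_{2}$-norm is predictable a priori (this is precisely where $\|S^{\top}\|_{2,\infty}\le R$ and the lower bounds on $B_{A,1},B_{A,2},B_{A},B_{V}$ enter) so that $\Pi_{\rm{norm}}$ acts as a known scalar rescaling the next layer can absorb. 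Re-deriving the squaring-gadget error with weights truncated to $B_{A,1}$---rather than anything about the attention---is what produces the $\sqrt{\log B_{A,1}}$ factor, and keeping the many nested layers inside $\Theta$ while maintaining geometric error decay is the main bookkeeping burden.
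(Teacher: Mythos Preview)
Your plan is in the right spirit but takes a different and more fragile route than the paper, and the mechanism you describe for the exponent $D^{1/4}/\sqrt{\log B_{A,1}}$ is not the one that actually produces it.

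The paper's argument is structurally simpler. First, it does not treat ``a bounded-dimensional smooth summary of $S$'' as something attention must compute: since $d=1$ and the alphabet is finite, the Deep Sets representation of \cite{zaheer2017deep} gives a deterministic factorization $g^{*}(X)=\rho^{*}\bigl(\tfrac{1}{L}\sum_{i}\phi^{*}(x_{i})\bigr)$ with scalar $\phi^{*}$ and $\rho_{i}^{*}$. This reduces everything to one-dimensional approximation. Second, instead of building polynomials via squaring gadgets, the paper invokes an off-the-shelf result (Lemma~\ref{lem:approx}, from \cite{elbrachter2021deep}): for $f$ in the analytic-type class $\calS_{B}$, a ReLU net of depth $O((\log\varepsilon^{-1})^{2})$, width $\le 16$, and weights $\le 1$ achieves $\varepsilon$-accuracy. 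No per-gate cost analysis under a weight cap is needed, and your claim that the squaring gadget requires depth $\sqrt{\log(1/\delta)\cdot\log B_{A,1}}$ is nonstandard and not how $\sqrt{\log B_{A,1}}$ enters.

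The $D^{1/4}$ and $\sqrt{\log B_{A,1}}$ come from a depth-splitting argument: one stacks $\Psi_{\phi^{*}}$ (depth $D'$), then a single attention layer with $W^{Q}=W^{K}=0$, $W^{V}=I$ to compute the exact average, then $\Psi_{\rho^{*}}$ (depth $D''$), with $D'+D''\le D$. The $\varepsilon_{\phi}$-error of $\Psi_{\phi^{*}}$ propagates through $\Psi_{\rho^{*}}$ with Lipschitz constant $(B_{A,1})^{D''}$, so one must keep $D''$ small; taking $D''\asymp\sqrt{D}/\log B_{A,1}$ and using the depth-squared relation gives $\log(1/\varepsilon_{\rho})\asymp\sqrt{D''}\asymp D^{1/4}/\sqrt{\log B_{A,1}}$, which is the claimed rate. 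This is the correct source of both the $D^{1/4}$ exponent and the $\sqrt{\log B_{A,1}}$ denominator---error amplification through the second subnet, not a per-gate weight constraint.

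Finally, the obstacle you flag as ``main'' is handled in one sentence: the construction of Lemma~\ref{lem:approx} keeps all intermediate outputs within the input range, so $\Pi_{\rm norm}$ never clips and can be ignored. Attention is used only as pass-through ($W^{V}=0$, $\gamma_{1}=1$) during the feed-forward approximation stages and once for averaging; no scratch-block bookkeeping is required.
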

The formal statement and proof are deferred to Appendix~\ref{app:formalapprox}. This proposition states that the {\bf approximation error decays exponentially with the increasing depth}. Combined with this result, Theorem~\ref{thm:pretrainmle} provides the full description of the pretraining performance.

\section{ICL Regret under Practical Settings}\label{sec:comb}

\subsection{ICL Regret with an Imperfectly Pretrained Model}

xIn Section~\ref{sec:icl_bma}, we study the \ac{icl} regret with a  perfect pretrained model. In what follows, we characterize the ICL regret when the  pretrained model has an error. Note that the distribution $\cD_\icl$ of the prompts of \ac{icl} tasks can be different from that of pretraining. We impose the following assumption on their relation.
\begin{assumption}
    \label{asp:coverage}
    We assume that there exists an absolute constant $\kappa > 0$ such that for any ICL prompt, it holds that $\PP_{\cD_\icl}(\pt) \le \kappa \cdot \PP_{\cD}(\pt)$.
\end{assumption}
This assumption states that the prompt distribution is covered by the pretraining distribution. Intuitively, the pretrained model cannot precisely inference on the datapoint that is outside the support of the pretraining distribution. For example, if the pretraining data does not contain any mathematical symbols and numbers, it is difficult for the pretrained model to calculate $2\times 3$ in \ac{icl} precisely. We then have the following theorem characterizing the ICL regret of the pretrained model.
\begin{theorem}[ICL Regret of Pretrained Model]
    \label{th:iclpretrain}
    We assume that the underlying hidden concept $z_*$ maximizes $\sum_{i=1}^t \log \PP(r_i \given \pt_{i-1}, z)$ for any $t\in [T]$ and there exists an absolute constant $\beta >0$ such that $\log (1/p_0(z_*)) \le \beta$.
    Under Assumptions \ref{assump:boundedmle}, \ref{assump:positive}, and \ref{asp:coverage}, we have with probability at least $1- \delta$ that
    \begin{align*}
        & \EE_{\pt\sim \cD_\icl} \Bigl[ T^{-1} \cdot\sum_{t=1}^{T} \log \PP(r_t \given z^*, \pt_{t-1})- T^{-1} \cdot\sum_{t= 1}^T \log \PP_{\hat \theta} (r_{t} \given \pt_{t-1})  \Bigr]  \\
        & \quad \le \cO\bigl(\beta / T + \kappa \cdot b^*\cdot \Delta_{\rm pre}(N_{\rmp}, T_{\rmp}, \delta) \bigr).
    \end{align*}
    Here we denote by $\Delta_{\rm pre}(N_{\rmp}, T_{\rmp}, \delta)$ the pretraining error in Theorem \ref{thm:pretrainmle}.
\end{theorem}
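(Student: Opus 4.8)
The plan is to combine the online-learning regret bound for the Bayes-optimal predictor (Corollary \ref{th:bma_reg}) with the pretraining error bound (Theorem \ref{thm:pretrainmle}), paying careful attention to the change of measure from the pretraining distribution $\cD$ to the ICL distribution $\cD_\icl$. First, I would decompose the left-hand side by inserting the perfectly pretrained predictor $\PP(r_t \given \pt_{t-1})$:
\begin{align*}
    & T^{-1}\sum_{t=1}^T \log \PP(r_t \given z^*, \pt_{t-1}) - T^{-1}\sum_{t=1}^T \log \PP_{\hat\theta}(r_t \given \pt_{t-1}) \\
    & \quad = \underbrace{T^{-1}\sum_{t=1}^T \log \PP(r_t \given z^*, \pt_{t-1}) - T^{-1}\sum_{t=1}^T \log \PP(r_t \given \pt_{t-1})}_{\text{(I): Bayes regret}} + \underbrace{T^{-1}\sum_{t=1}^T \log \frac{\PP(r_t \given \pt_{t-1})}{\PP_{\hat\theta}(r_t \given \pt_{t-1})}}_{\text{(II): pretraining gap}}.
\end{align*}
Taking $\EE_{\pt\sim\cD_\icl}$ of both sides, term (I) is bounded by $\cO(\beta/T)$ directly from Corollary \ref{th:bma_reg}, using the assumption that $z_*$ maximizes the cumulative log-likelihood at every $t$ and that $\log(1/p_0(z_*)) \le \beta$ (note this bound is deterministic in $\pt$, so the expectation is harmless).

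Term (II) is the main work. After taking the expectation under $\cD_\icl$, it is a telescoping sum of expected log-likelihood ratios between $\PP$ and $\PP_{\hat\theta}$ along the prompt. I would bound each summand $\EE_{\pt_{t-1}, r_t \sim \cD_\icl}[\log(\PP(r_t \given \pt_{t-1})/\PP_{\hat\theta}(r_t \given \pt_{t-1}))]$ by the KL divergence $\EE_{\pt_{t-1}\sim\cD_\icl}[\KL(\PP(\cdot\given \pt_{t-1}) \,\|\, \PP_{\hat\theta}(\cdot \given \pt_{t-1}))]$. The next step is to convert this KL into a total variation quantity that Theorem \ref{thm:pretrainmle} controls: Assumption \ref{assump:positive} guarantees $\PP_{\hat\theta}(\cdot \given \pt)$ and $\PP(\cdot\given\pt)$ are bounded below (the lower bound on $\PP_{\hat\theta}$ requires a separate short argument from the softmax output structure, giving a constant I would call $b^*$), so $\KL(\PP\|\PP_{\hat\theta}) \le C\cdot b^*\cdot \TV(\PP,\PP_{\hat\theta})$ via the standard reverse-Pinsker inequality. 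Finally, I would apply Assumption \ref{asp:coverage} to change measure: $\EE_{\pt\sim\cD_\icl}[\TV(\PP(\cdot\given\pt),\PP_{\hat\theta}(\cdot\given\pt))] \le \kappa\cdot \EE_{S\sim\cD}[\TV(\PP(\cdot\given S),\PP_{\hat\theta}(\cdot\given S))] = \kappa\cdot\Delta_{\rm pre}(N_{\rmp},T_{\rmp},\delta)$ with probability at least $1-\delta$, where I use that every prefix $\pt_{t-1}$ of an ICL prompt is itself a token sequence whose marginal under $\cD_\icl$ is dominated by its marginal under $\cD$. Averaging over $t\in[T]$ and collecting the bounds on (I) and (II) yields the stated $\cO(\beta/T + \kappa\cdot b^*\cdot \Delta_{\rm pre})$.

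The main obstacle is the reverse-Pinsker step together with the uniform lower bound $b^*$ on $\PP_{\hat\theta}$. Assumption \ref{assump:positive} is a statement about the nominal model $\PP$, not about $\PP_{\hat\theta}$, so I cannot directly invoke it for the learned model; I would instead argue that the softmax output layer with temperature $\tau \in (0,1]$ and bounded logits (which follow from the parameter bounds in $\Theta$ together with the layer-normalization projections onto the unit ball) forces $\PP_{\hat\theta}(x\given S) \ge b^*$ for an explicit constant $b^*$ depending on $\bar B$, $d_y$, and $\tau$. A secondary subtlety is ensuring the change of measure in Assumption \ref{asp:coverage} applies at the level of each prefix length $t$ simultaneously and interacts correctly with the averaging $\frac{1}{T_{\rm p}}\sum_t$ that defines $\EE_{S\sim\cD}$; I would handle this by noting that $\cD$ averages over all prefix lengths up to $T_{\rm p}$, so as long as $T \le T_{\rm p}$ (or more generally the ICL prompt lengths are covered), the bound $\PP_{\cD_\icl}(\pt)\le \kappa\PP_\cD(\pt)$ transfers the per-length TV bounds with an extra harmless factor absorbed into $\kappa$.
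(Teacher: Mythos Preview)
Your proposal is correct and follows essentially the same approach as the paper: the same decomposition into Bayes regret plus pretraining gap, the same use of Corollary~\ref{th:bma_reg} for term (I), and for term (II) the same chain of KL $\to$ TV via a reverse-Pinsker inequality (the paper's Lemma~\ref{lem:tv-kl}) followed by the change of measure from Assumption~\ref{asp:coverage} and Theorem~\ref{thm:pretrainmle}. You have also correctly identified that the lower bound on $\PP_{\hat\theta}$ comes from the bounded softmax logits (this is exactly the argument in the paper's \eqref{ieq:probbd}), so there is no gap.
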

Theorem \ref{th:iclpretrain} shows that the expected ICL regret for the pretrained model is upper bounded by the sum of two terms: {\bf (a) the ICL regret for the underlying true model}  and {\bf(b) the pretraining error}. These two terms are separately bounded in Sections~\ref{sec:icl_bma}~and~\ref{sec:pretrain}. 

\vspace{-2mm}
\subsection{Prompting with Wrong Input-Output Mappings}

In the real-world implementations of \ac{icl}, the provided input-output examples may not conform to the nominal distribution induced by $z_{*}$, and the outputs in examples can be \emph{perturbed}. We temporarily take concept space $\frakZ$ as a finite space,
and our results can be generalized with a  covering number argument. We denote the prompt considered in Section~\ref{sec:icl_bma} as $\pt_t=(S_{t},\tilc_{t+1})$, $S_{t}=(\tilc_{1},r_{1},\cdots,\tilc_{t},r_{t})\in\frakX^{*}$, and $(\tilc_{i+1},r_{i+1})\sim \PP(\cdot\,|\,S_{i},z_{*})$ for $i\in[t-1]$. Here, each input $\tilc_{i}\in\frakX^{l}$ is a $l$-length token sequence, and each output $r_{i}\in\frakX$ is a single token. The perturbed prompt is then denoted as $\pt^{\prime}=(S_{t}^{\prime},\tilc_{t+1})$, where $S_{t}^{\prime}=(\tilc_{1},r_{1}^{\prime},\cdots,\tilc_{t},r_{t}^{\prime})\in\frakX^{*}$, and $r_{i}^{\prime}$ for $i\in[t]$ is the modified output. We denote the perturbed prompt distribution as $\PP^{\prime}$. Then the performance of \ac{icl} with wrong input-output mappings can be stated as follows.

\begin{proposition}[Informal]\label{prop:wronglabelinf}
    Under certain assumptions, including the distinguishability assumption ($\min_{z\neq z^{*}}\KL_{\rm pair}\big(\PP(\cdot\,|\,z^{*})\,\|\,\PP(\cdot\,|\,z)\big)>2\log 1/c_{0}$), the pretrained model $\PP_{\htheta}$ in \eqref{algo:pretrainmle} predicts the outputs with the prompt containing wrong mappings as
    \begin{align*}
        &\bbE_{\pt^{\prime}}\Big[\KL\big(\PP(\cdot\,|\,\tilc_{t+1},z_{*})\|\PP_{\htheta}(\cdot\,|\,S_{t}^{\prime},\tilc_{t+1})\big)\Big]\\
        &\quad=\!\cO\bigg(\!\Delta_{\rm pre}(N_{\rmp},T_{\rmp},\delta)\!+\!\exp\bigg(\!-\!\frac{\sqrt{t}}{2(1+l)\log 1/c_0}\!\bigg(\min_{z\neq z^{*}}\KL_{\rm pair}\big(\PP(\cdot\,|\,z^{*})\,\|\,\PP(\cdot\,|\,z)\big)+2\log c_{0}\bigg)\!\bigg)\!\bigg)
    \end{align*}
    with probability at least $1-\delta$.
\end{proposition}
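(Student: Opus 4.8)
The plan is to split the left‑hand side into a pretraining term, handled by Theorem~\ref{thm:pretrainmle}, and a Bayesian‑model‑averaging term that is small as soon as the posterior over concepts induced by the corrupted prompt concentrates on $z_*$. First I would apply Proposition~\ref{th:bma} to the corrupted prompt, together with the conditional‑independence structure assumed here (given the concept and the query, the response does not depend on the shown examples, so the per‑example likelihood $\PP(\tilc_i,r_i\,|\,z)$ is well defined), to write the nominal predictor as the posterior mixture $\PP(r_{t+1}=\cdot\,|\,S_t',\tilc_{t+1})=\sum_{z\in\frakZ}\PP(r_{t+1}=\cdot\,|\,\tilc_{t+1},z)\PP(z\,|\,S_t')$. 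Since every distribution in sight is bounded below by $c_0$ (Assumption~\ref{assump:positive}) and the transformer outputs are bounded below by a constant depending on the parameter bounds and $\tau$, the divergences $\KL$, $\TV$, $\chi^2$ among them are mutually equivalent up to such constants; a first‑order expansion of $q\mapsto\KL(p\,\|\,q)$ about $q=\PP(\cdot\,|\,S_t',\tilc_{t+1})$ then gives, up to $c_0$‑dependent constants,
$$\KL\big(\PP(\cdot\,|\,\tilc_{t+1},z_*)\,\big\|\,\PP_{\htheta}(\cdot\,|\,S_t',\tilc_{t+1})\big)\;\le\;\cO\!\Big(\KL\big(\PP(\cdot\,|\,\tilc_{t+1},z_*)\,\big\|\,\PP(\cdot\,|\,S_t',\tilc_{t+1})\big)+\TV\big(\PP(\cdot\,|\,S_t',\tilc_{t+1}),\PP_{\htheta}(\cdot\,|\,S_t',\tilc_{t+1})\big)\Big).$$
Taking $\EE_{\pt'}$ and using a coverage hypothesis $\PP'(\pt)\le\kappa\,\PP_{\calD}(\pt)$ (the analogue of Assumption~\ref{asp:coverage}) to pass from the corrupted‑prompt law $\PP'$ to the pretraining law $\calD$, the $\TV$ term is bounded by $\cO(\kappa\,\Delta_{\rm pre}(N_{\rmp},T_{\rmp},\delta))$ on an event of probability at least $1-\delta$, by Theorem~\ref{thm:pretrainmle}.

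It remains to control $\EE_{\pt'}[\KL(\PP(\cdot\,|\,\tilc_{t+1},z_*)\,\|\,\PP(\cdot\,|\,S_t',\tilc_{t+1}))]$. Because the posterior mixture is itself bounded below by $c_0$, this $\KL$ is at most $\log(1/c_0)$ pointwise, and expanding around its $z_*$‑component gives $\KL(\PP(\cdot\,|\,\tilc_{t+1},z_*)\,\|\,\PP(\cdot\,|\,S_t',\tilc_{t+1}))\le \cO(c_0^{-1}(1-\PP(z_*\,|\,S_t')))$; hence it suffices to show $\EE_{\pt'}[1-\PP(z_*\,|\,S_t')]$ decays exponentially. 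For each competitor $z\neq z_*$ I would write the log posterior odds as a telescoping sum
$$\log\frac{\PP(z_*\,|\,S_t')}{\PP(z\,|\,S_t')}=\log\frac{\PP_{\calZ}(z_*)}{\PP_{\calZ}(z)}+\sum_{i=1}^{t}Z_i^{(z)},\qquad Z_i^{(z)}:=\log\frac{\PP(\tilc_i,r_i'\,|\,S_{i-1}',z_*)}{\PP(\tilc_i,r_i'\,|\,S_{i-1}',z)},$$
and treat $\{Z_i^{(z)}\}$ as a bounded martingale‑difference‑plus‑drift sequence: $|Z_i^{(z)}|\le(1+l)\log(1/c_0)$ since $(\tilc_i,r_i')$ has $1+l$ tokens with conditional probabilities in $[c_0,1]$, the covariate block has conditional mean $\KL(\PP(\tilc\,|\,z_*)\,\|\,\PP(\tilc\,|\,z))$ (covariates being drawn from $z_*$), and an arbitrary corruption of the response token can lower the response block by at most $2\log(1/c_0)$ (numerator $\ge c_0$, denominator $\le 1$, and symmetrically). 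The chain rule for $\KL_{\rm pair}$ then shows the conditional drift of $Z_i^{(z)}$ is at least $\min_{z\neq z_*}\KL_{\rm pair}(\PP(\cdot\,|\,z_*)\,\|\,\PP(\cdot\,|\,z))-2\log(1/c_0)$, strictly positive by the distinguishability hypothesis. Running Azuma's inequality on $\sum_i Z_i^{(z)}$ with a deviation threshold of order $\sqrt{t}$ (which balances the "good event" bound against the tail probability across the full range of parameters, at the cost of a conservative rate) and then a union bound over the finite set $\frakZ$ yields
$$\EE_{\pt'}\big[1-\PP(z_*\,|\,S_t')\big]\le \cO\!\left(|\frakZ|\exp\!\Big(-\frac{\sqrt t}{2(1+l)\log(1/c_0)}\Big(\min_{z\neq z_*}\KL_{\rm pair}\big(\PP(\cdot\,|\,z_*)\,\|\,\PP(\cdot\,|\,z)\big)+2\log c_0\Big)\Big)\right),$$
and combining with the previous paragraph gives the claimed bound.

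I expect the main obstacle to be the drift estimate in the second step. The covariates $\tilc_i$ are generated along the \emph{true} history $(\tilc_1,r_1,\dots,\tilc_{i-1},r_{i-1})$, whereas the likelihood ratio $Z_i^{(z)}$ is evaluated along the \emph{corrupted} history $S_{i-1}'$; without the conditional‑independence structure the conditional mean of $Z_i^{(z)}$ is a difference of two KL‑type quantities rather than a single nonnegative KL, and one must argue—using that structure together with the $c_0$ lower bound and the per‑token decomposition producing the factor $1+l$—that the mismatch cannot overcome the distinguishability gap, so that a uniformly positive drift survives. This is exactly where the hypothesis $\min_{z\neq z_*}\KL_{\rm pair}(\PP(\cdot\,|\,z_*)\,\|\,\PP(\cdot\,|\,z))>2\log(1/c_0)$ is used and cannot be relaxed. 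A secondary nuisance is keeping the directions of the divergences straight through the $\KL\leftrightarrow\TV$ conversions and checking that $\Delta_{\rm pre}$ survives the change of measure to the (possibly non‑stationary) corrupted‑prompt distribution.
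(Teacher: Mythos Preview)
Your proposal is correct and tracks the paper's proof closely; the decomposition into a pretraining term plus a posterior–concentration term, the $2\log(1/c_0)$ penalty for swapping $r_i'$ back to $r_i$, the $(1+l)\log(1/c_0)$ per‑pair range, the union bound over $\frakZ$, and the $\sqrt{t}$ balance are all exactly what the paper does.

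A few differences worth noting. First, the paper makes the conditional independence of input–output pairs an explicit assumption (Assumption~\ref{assump:indep}), so the sum $\sum_i Z_i^{(z)}$ is literally a sum of i.i.d.\ bounded variables under $\PP(\cdot\,|\,z_*)$ and plain Hoeffding applies; your martingale/Azuma framing and the ``main obstacle'' you flag (mismatch between the true history generating $\tilc_i$ and the corrupted history in the likelihood ratio) both disappear under that assumption, since there is no history dependence at all. Second, the paper does not use your first‑order $\KL$ expansion. It instead bounds $\TV(\PP(\cdot\,|\,S_t',\tilc_{t+1}),\PP(\cdot\,|\,\tilc_{t+1},z_*))$ directly from the BMA identity~\eqref{eq:bmaerr} via the ratios $\PP(S_t'\,|\,z)/\PP(S_t'\,|\,z_*)$, adds the pretraining $\TV$ by the triangle inequality, and only at the very end converts $\TV$ to $\KL$ using the $c_0$ lower bound; your route via $\KL\leftrightarrow\TV$ equivalences is equally valid but carries a bit more bookkeeping. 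Third, you are actually more careful than the paper on the pretraining piece: the paper simply writes ``Let $\EE_{\pt'}[\TV(\PP,\PP_{\htheta})]\le\Delta_{\rm pre}$'' without invoking a coverage hypothesis, whereas you correctly note that a bound of the form $\PP'\le\kappa\,\PP_{\calD}$ is what is needed to transport Theorem~\ref{thm:pretrainmle} to the corrupted‑prompt law.
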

The first term is the pretraining error in Theorem~\ref{thm:pretrainmle}, which is related to the size of the pretraining set and the capacity of the neural networks. The second term is the \ac{icl} error. Intuitively, this term represents the concept identification error. If the considered task $z_{*}$ is distinguishable, i.e., satisfying Assumption~\ref{assump:distinguish}, this term decays to $0$ exponentially in $\sqrt{t}$. The required assumptions and formal statement are in Appendix~\ref{app:wonglabel}.

\newpage
\bibliographystyle{ims}
\bibliography{ref.bib}

\newpage
\appendix
\begin{center}
{\Large {\bf Appendix for \\ ``What and How does In-Context Learning Learn? Bayesian Model Averaging, Parameterization, and Generalization''}}
\end{center}

\setcounter{equation}{0}
\counterwithin*{equation}{section}

\renewcommand{\theequation}{\thesection.\arabic{equation}}

\section{Conclusion}
In this paper, we investigated the theoretical foundations of \ac{icl} for the pretrained language models. We proved that the perfectly pretrained \ac{llm}s implicitly implements \ac{bma} with regret $\cO(1/t)$ over a general response generation modeling, which subsumes the models in previous works. Based on this, we showed that the attention mechanism parameterizes the \ac{bma} algorithm. Analyzing the pretraining process, we demonstrated that the total variation between the pretrained model and the nominal distribution consists of the approximation error and the generalization error. The combination of the \ac{icl} regret and the pretraining performance gives the full description of \ac{icl} ability of pretrained \ac{llm}s. We mainly focus on the prompts that comprise several examples in this work and leave the analysis of instruction-based prompts for future works.
\section{More Related Works}
\textbf{Transformers.} Our work is also related to the works that theoretically analyze the performance of transformers. For the analytic properties of transformers, \cite{vuckovic2020mathematical} proved that attention is Lipschitz-continuous via the view of interacting particles. \cite{noci2022signal} provided the theoretical justification of the rank collapse phenomenon in transformers. \cite{yun2019transformers} demonstrated that transformers are universal approximators. For the statistical properties of transformers, \cite{malladi2022kernel}, \cite{hron2020infinite}, and \cite{yang2020tensor} analyzed the training of transformers within the neural tangent kernel framework. \cite{wei2022statistically} presented the approximation and generalization bounds for learning boolean circuits and Turing machines with transformers. \cite{edelman2021inductive} and \cite{litransformers} derived the generalization error bound of transformers. In our work, we analyze transformers from both the analytic and statistical sides. We show that attention essentially implements the \ac{bma} algorithm in the \ac{icl} setting. Furthermore, we derive the approximation and generalization bounds for transformers in the pretraining phase.

\textbf{Generalization.} Our analysis of the pretraining is also related to the generalization analysis of the neural networks. This topic has attracted a lot of interests for a long time. \cite{anthony1999neural} derived the uniform generalization bound for fully-connected neural networks with the help pf VC dimension. \cite{bartlett2017spectrally} sharpened this generalization bound for classification problem by adopting the Dudley's integral and calculating of the covering number of neural network class. At the same time, \cite{neyshabur2017pac} derived a similar as \cite{bartlett2017spectrally} from PAC-Bayes framework. Following this line, \cite{liao2020pac} , \cite{ledent2021norm} and \cite{lin2019generalization} built the generalization bound for graph neural networks and convolutional neural network. These results respected the underlying graph structure and the translation-invariance in the networks. \cite{edelman2021inductive} established the generalization bound for transformer, but this result did not reflect the permutation-invariance, still depending on the channel number. Our work focuses on the analysis of \ac{mle} with transformer function class, which is not covered by previous works. Our bounds are sharper than that of \cite{edelman2021inductive} on the channel number dependency.

\section{Figure for Pretraining and \ac{icl}}\label{app:fig}
\begin{figure}[H] 
    \centering 
    \includegraphics[width=0.9\textwidth]{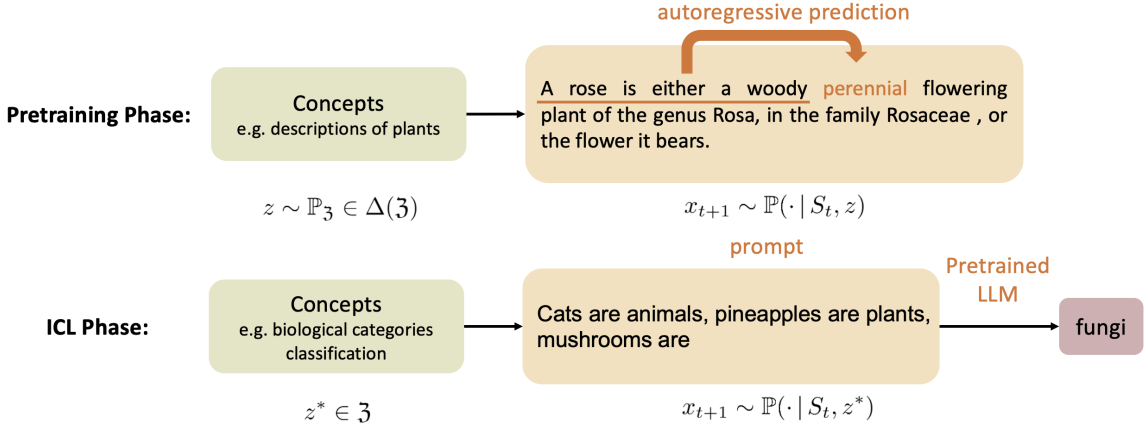} 
    \caption{To form the pretraining dataset, a hidden concept $z$ is first sampled according to $\PP_{\frakZ}$, and a document is generated from the concept. Taking the token sequence $S_{t}$ up to position $t\in[T]$ as the input, the \ac{llm} is pretrained to maximize the next token $x_{t+1}$. During the \ac{icl} phase, the pretrained \ac{llm} is prompted with several examples to predict the response of the query.} 
    \label{fig:pipeline}
\end{figure}

\section{Discussion About The Experimental Results in Existing Works}\label{app:prac_discuss}

We note that \cite{akyurek2022learning} trains the transformer from scratch with the ICL distribution. This is a special case of the pretraining setting in our work, where they set the pretraining distribution as the ICL distribution. Figure 2 in \cite{akyurek2022learning} indicates that the ICL behavior of the transformer matches the behavior of the Bayesian predictor. This is exactly the result we prove in Theorem 4.1.

We note that that dependency on the size of the hidden variable space is also verified in the experiments in \cite{garg2022can}. Figure 2 in \cite{garg2022can} indicates that the ICL of LLMs only has a significant error when $T\leq d$, where $d$ is the dimension of the hidden variable. This implies that the regret of the ICL by LLMs is at most linear in $O(d/T)$. From the view of our theoretical result, discretizing the set $\{z\in\mathbb{R}^{d} \,|\, \|z\|_{2}\leq d \}$ with approximation error $\delta>0$ will result in a set with $(C/\delta)^{d}$ elements, where $C>0$ is an absolute constant. Corollary 4.2 implies that the regret is $\log |\mathfrak{Z}|/T = d\log (C/\delta)/T$, which matches Figure 2 in \cite{garg2022can}.


\section{Proofs for Section~\ref{sec:approx}}
\subsection{Introduction of Conditional Mean Embedding}\label{app:cme}
Let $\cH_k$ and $\cH_v$ be the two \ac{rkhs}s over the spaces $\frakQ$ and $\frakV$ with the kernels $\fk$ and $\fl$, respectively. We denote by $\phi: \frakQ \rightarrow \ell_2$ and $\varphi: \frakV \rightarrow \ell_2$ the feature mappings associated with $\cH_k$ and $\cH_v$, respectively. Here $\l_{2}$ is the space of the square-integrable function class. Then it holds for any $k, k' \in \frakQ$ and $v, v' \in \frakV$ that
\begin{align}
	\label{eq:feature-kernel}
	\phi(k)^\top \phi(k') = \fk(k, k'), \qquad \varphi(v)^\top \varphi(v) = \fl(v, v').
\end{align}
Let $\PP_{\cK, \cV}$ be the joint distribution of the two random variables $\cK$ and $\cV$ taking values in $\frakQ$ and $\frakV$, respectively. Then the conditional mean embedding $\cme(q, \PP_{\cK, \cV}) \in \cH_v$ of the conditional distribution $\PP_{\cV \given \cK}$ is defined as 
\begin{align*}
	\cme(q, \PP_{\cK, \cV}) = \EE \bigl[\fl(\cV, \cdot) \biggiven \cK= q\bigr].
\end{align*}
The conditional mean embedding operator $C_{\cV \given \cK}: \cH_k \rightarrow \cH_v$ is a linear operator such that
\begin{align*}
	C_{\cV \given \cK} \fk(q, \cdot) = \cme(q, \PP_{\cK, \cV}),
\end{align*}
for any $q \in \frakQ$.
We define the (uncentered) covariance operator $C_{\cK\cK}: \cH_k \rightarrow \cH_k$ and the (uncentered) cross-covariance operator $C_{\cV\cK} : \cH_k \rightarrow \cH_v$ as follows, 
\begin{align*}
	C_{\cK\cK} = \EE \bigl[ \fk(\cK, \cdot) \otimes \fk(\cK, \cdot)\bigr], \qquad C_{\cV\cK} = \EE \bigl[ \fl(\cV, \cdot) \otimes \fk(\cK, \cdot)\bigr].
\end{align*}
Here $\otimes$ is the tensor product. \cite{song2009hilbert} shows that $C_{\cV \given \cK} = C_{\cV\cK} C_{\cK\cK}^{-1}$. Thus, we have that
\begin{align}\label{eq:cme-cov}
	\cme(c, \PP_{\cK, \cV}) = C_{\cV\cK} C_{\cK\cK}^{-1} \fk(c, \cdot).
\end{align}

For i.i.d.\ samples $\{(k^\ell, v^\ell)\}_{\ell \in [L]}$ of $\PP_{\cK, \cV}$, $\norm{\cdot}_{\mathrm{HS}}$ denotes the Hilbert-Schmidt norm, we write $\phi(K) = (\phi(k^1), \ldots, \phi(k^L))^\top \in \RR^{L \times d_\phi}$ and $\varphi(V) = (\phi(v^1), \ldots, \phi(v^L))^\top \in \RR^{L \times d_\varphi}$. Then the empirical covariance operator $\hat C_{\cK\cK}$ and empirical cross-covariance operator $\hat C_{\cV\cK}$  are defined as
\begin{align}
	\label{eq:emp-cov}
	\hat C_{\cK\cK} &= L^{-1} \sum_{\ell=1}^{L}\phi(k^\ell) \phi(k^\ell)^\top = L^{-1}\phi(K)^\top \phi(K) \in \RR^{d_\phi\times d_\phi} \nonumber \\
	\hat C_{\cV\cK} &= L^{-1} \sum_{\ell=1}^{L} \varphi(v^\ell)\varphi(k^\ell)^\top = L^{-1}\varphi(V)\phi(K)^\top \in \RR^{d_\varphi\times d_\phi}.
\end{align}
The empirical version of the conditional operator is
\begin{align*}
	\hat C_{\cV \given \cK}^\lambda = \varphi(Y)^\top \phi(X)\bigl(\phi(X)^\top \phi(X) + \lambda \cI \bigr)^{-1} =  \hat C_{\cV\cK}(\hat C_{\cK\cK} + L^{-1}\lambda \cI)^{-1} \in \RR^{d_\varphi \times d_\phi}.
\end{align*}

\subsection{Proof of Proposition \ref{th:bma}} \label{sec:pf-th-bma}
\begin{proof}
By \eqref{eq:model}, we have that
\begin{align}
    \label{eq:model00}
    \PP(r_{t+1}\given \pt_t)  &= \int \PP(r_{t+1} \given c_{t+1}, h_{t+1}) \PP(h_{t+1} \given S_t) \ud h_{t+1}  \noend
    & = \int \PP(r_{t+1} \given c_{t+1}, h_{t+1}) \PP(h_{t+1} \given S_t, z) \PP(z \given S_t) \ud h_{t+1} \ud z \noend 
    &= \int \PP(r_{t+1} \given c_{t+1}, S_t, z) \PP(z \given S_t) \ud z,
\end{align}
where the first and the last equalities results from model \eqref{eq:model-bma}, and the second equality results from Bayes' theorem.
\end{proof}

\subsection{Proof of Corollary \ref{th:bma_reg}} \label{sec:pf-th-bma-reg}
\begin{proof}
Note that 
\begin{align*}
	\PP(z \given S_t) &= \frac{\PP(S_t\given z) \PP_{\calZ}(z) }{\int \PP(S_t\given z') \PP_{\calZ}(z') \ud z'}  = \frac{\prod_{i=1}^{t} \PP(r_i \given z, S_t, c_i ) \PP_{\calZ}(z) }{\int \prod_{i=1}^{t} \PP(r_i \given z', S_{i-1}, c_i) \PP_{\calZ}(z') \ud z'}.
\end{align*}
Then, by Bayesian model averaging, we have the following density estimation,
\begin{align*}
	\PP(r_{t+1} \given S_t, c_{t+1}) &= \int \PP(r_{t+1} \given z, S_t, c_{t+1}) \PP(z \given S_t) \ud z \\
	& = \frac{ \int \prod_{i=1}^{t +1} \PP(r_i \given z, S_{i-1}, c_i) \PP_{\calZ}(z) \ud z }{\int \prod_{i=1}^{t} \PP(r_i \given z', S_{i-1}, c_i) \PP_{\calZ}(z') \ud z'}.
\end{align*}
Thus, it holds that
\begin{align*}
	-\sum_{t= 0}^T \log \PP(r_{t+1} \given c_{t+1}, S_t) & = -\sum_{i = 1}^{t} \biggl(\log \int \prod_{i=1}^{t +1} \PP(r_i \given z, S_{i-1}, c_i) \PP_{\calZ}(z) \ud z - \log \int \prod_{i=1}^{t} \PP(r_i \given z, S_{i-1}, c_i) \PP_{\calZ}(z) \ud z \biggr) \\
	& = -\log \int \prod_{i=1}^{T +1} \PP(r_i \given z, S_{i-1}, c_i) \PP_{\calZ}(z) \ud z \\
	& = \inf_q \EE_{z\sim q} \biggl[ -  \sum_{i=1}^{T +1} \log \PP(r_i \given z, S_{i-1}, c_i)\biggr] +   \EE_{z\sim q} \biggl[\log\frac{q(z)}{\PP_{\calZ}(z)} \biggr].
\end{align*}
We consider $q$ to be in the class of all Dirac measures. Then, we have that
\begin{align*}
    -\frac{1}{T}\sum_{t= 1}^T \log \PP(r_{t} \given c_{t}, S_{t-1}) \le \frac{1}{T} \inf_z\Bigl( -  \sum_{t=1}^{T} \log \PP(r_t \given z, S_{t-1}, c_t) - \log \PP_{\calZ}(z) \Bigr).
\end{align*}
Thus, the statistical convergence rate of the Bayesian posterior averaging is $\cO(1/T)$.
\end{proof}

\subsection{Proof of Proposition \ref{prop:cme-att-limit}} \label{sec:pf-prop-cme}
\begin{proof}
    The proof of Proposition~\ref{prop:cme-att-limit} mainly involves two steps
    \begin{itemize}
        \item Build the relationship between $\att_\dagger$ and conditional mean embedding.
        \item Build the relationship between the $\att$ and conditional mean embedding.
    \end{itemize}
    
    \textbf{Step 1: Build the relationship between $\att_\dagger$ and conditional mean embedding.}
    
    In the following, we adopt $\cH_k$ and $\cH_v$ to denote the \ac{rkhs}s for the key and the value with the kernel functions $\fk$ and $\fl$, respectively. Also, we use $\|\cdot\|$ to denote the norm of \ac{rkhs} for an element in the corresponding \ac{rkhs} and the operator norm of the operators that transform elements between \ac{rkhs}s. For the value space, we adopt the Euclidean kernel $\fl(v, v') = v^\top v'$, and the feature mapping $\varphi$ is the identity mapping. Recall the definition of the empirical covariance operator and the empirical cross-covariance operator in Appendix~\ref{app:cme}. For keys and values, we correspondingly define them as
    \begin{align*}
	    \hat C_{\cK\cK} = L^{-1}\phi(K)^\top \phi(K) , \, \hat C_{\cV\cK} = L^{-1}\varphi(V)^\top\phi(K) , \, \hat C_{\cV\cV} = L^{-1}\varphi(V)^\top \varphi(V),
    \end{align*}
    where $\phi(K) = (\phi(k^1), \ldots, \phi(k^L))^\top \in \RR^{L\times d_\phi}$ and $\varphi(V) = (\varphi(v^1), \ldots, \varphi(v^L))^\top \in \RR^{L\times d_\varphi}$
    By the definition of the newly defined attention in Section~\ref{sec:approx}, we have that
    \begin{align*}
	    \att_\dagger(q, K, V) = \hatC_{\cV\cK} (\hat C_{\cK\cK} + L^{-1}\lambda \cI)^{-1} \phi(q),
    \end{align*}
    which implies that $\att_\dagger$ recovers the empirical conditional mean embedding.
    By \eqref{eq:cme-cov}, it holds that
    \begin{align}\label{eq:tac1}
		&\norm[\big]{\att_\dagger(q, K, V) - \cme(q, \PP_{\cK, \cV})} \noend 
		&\quad \le \underbrace{\norm[\big]{ \hatC_{\cV\cK} (\hat C_{\cK\cK} + L^{-1}\lambda \cI)^{-1} \phi(q) - C_{\cV\cK} (C_{\cK\cK} + L^{-1}\lambda \cI)^{-1} \phi(q)}}_{\displaystyle\text{(i)}} \noend 
		&\qquad + \underbrace{\norm[\big]{ C_{\cV\cK} (C_{\cK\cK} + L^{-1}\lambda \cI)^{-1} \fk(q, \cdot) - C_{\cV\cK} C_{\cK\cK}^{-1} \fk(q, \cdot)}}_{\displaystyle\text{(ii)}}.
	\end{align}

\vskip4pt

\noindent{\bf Upper bounding term (i) of \eqref{eq:tac1}.} 
Following the proof from \cite{song2009hilbert}, we only need to upper bound $\norm{ \hatC_{\cV\cK} (\hat C_{\cK\cK} + L^{-1}\lambda \cI)^{-1} - C_{\cV\cK} (C_{\cK\cK} + L^{-1}\lambda \cI)^{-1}}$. It holds that 
\begin{align}\label{eq:tac11}
    &\norm[\big]{ \hatC_{\cV\cK} (\hat C_{\cK\cK} + L^{-1}\lambda \cI)^{-1} - C_{\cV\cK} (C_{\cK\cK} + L^{-1}\lambda \cI)^{-1}} \\
    & \quad \le \norm[\big]{ \hat C_{\cV\cK} (\hat C_{\cK\cK} + L^{-1}\lambda \cI)^{-1}(\hat C_{\cK\cK} - C_{\cK\cK} ) (C_{\cK\cK} + L^{-1}\lambda \cI)^{-1} } + \norm[\big]{ (\hatC_{\cV\cK} - C_{\cV\cK}) ( C_{\cK\cK} + L^{-1}\lambda \cI)^{-1} }. \nonumber
\end{align}
Considering the first term on the right-hand side of \eqref{eq:tac11}, we have the operator decomposition $\hatC_{\cV\cK} = \hatC_{\cV\cV}^{1/2} \cW \hat C_{\cK\cK}^{1/2}$ for $\cW$ such that  $\norm{\cW} \le 1$.
This decomposition implies that
\begin{align}\label{eq:tac12}
    & \norm[\big]{ \hat C_{\cV\cK} (\hat C_{\cK\cK} + L^{-1}\lambda \cI)^{-1}(\hat C_{\cK\cK} - C_{\cK\cK} ) (C_{\cK\cK} + L^{-1}\lambda \cI)^{-1} } \noend
    & \, \le \norm{\hatC_{\cV\cV}}^{1/2} \cdot \norm[\big]{ \hat C_{\cK\cK}^{1/2} (\hat C_{\cK\cK} + L^{-1}\lambda \cI)^{-1/2} } \cdot \norm[\big]{(\hat C_{\cK\cK} + L^{-1}\lambda \cI)^{-1/2}} \cdot \norm[\big]{ (\hat C_{\cK\cK} - C_{\cK\cK} ) (C_{\cK\cK} + L^{-1}\lambda \cI)^{-1} } \noend
    & \, \le (L^{-1}\lambda)^{-1/2} \cdot \norm[\big]{ (\hat C_{\cK\cK} - C_{\cK\cK} ) (C_{\cK\cK} + L^{-1}\lambda \cI)^{-1} },
\end{align}
where the last inequality follows from the fact that
\begin{align*}
	\norm{\hatC_{\cV\cV}}^2 = L^{-1} \sum_{\ell = 1}^{L} \norm{v^\ell}_2^2 \le 1, \quad 
	\hat C_{\cK\cK} (\hat C_{\cK\cK} + L^{-1}\lambda \cI)^{-1} \le \cI, \quad 
	(\hat C_{\cK\cK} + L^{-1}\lambda \cI)^{-1} \le (L^{-1}\lambda)^{-1} \cI.
\end{align*}
Combining \eqref{eq:tac12} and \eqref{eq:tac11}, we have
\begin{align}\label{eq:tac13}
    &\norm[\big]{ \hatC_{\cV\cK} (\hat C_{\cK\cK} + L^{-1}\lambda \cI)^{-1} - C_{\cV\cK} (C_{\cK\cK} + L^{-1}\lambda \cI)^{-1}} \\
    &\quad \le (L^{-1}\lambda)^{-1/2} \cdot \norm[\big]{ (\hat C_{\cK\cK} - C_{\cK\cK} ) (C_{\cK\cK} + L^{-1}\lambda \cI)^{-1} } + \norm[\big]{ (\hatC_{\cV\cK} - C_{\cV\cK}) ( C_{\cK\cK} + L^{-1}\lambda \cI)^{-1} }. \nonumber
\end{align}
In the following, we will upper bound the second term on the right-hand side of \eqref{eq:tac13} with Lemma \ref{lem:cme-concen}. For this purpose, we define $\xi: \RR^{d_\tp} \times \RR^d \rightarrow  \cH_k \otimes \cH_v$ as follows,
	\$
		\xi(k, v) = \varphi(v) \phi(k)^\top (C_{\cK\cK} + L^{-1}\lambda \cI)^{-1}.
	\$
	Since the operator norm of $(C_{\cK\cK} + L^{-1}\lambda \cI)^{-1}$ is upper bounded by $ (L^{-1}\lambda)^{-1}$, we have that
	\begin{align*}
		\norm[\big]{\xi(k, v)} = \norm[\big]{(C_{\cK\cK} + L^{-1}\lambda \cI)^{-1}} \cdot \norm[\big]{\varphi(v)} \cdot \norm[\big]{\phi(k)} \le C \cdot (L^{-1}\lambda)^{-1},
	\end{align*}
	where $C >0$ is an absolute constant.
	Additionally, we can bound the expectation of the squared norm of $\xi(k, v)$ as 
	\begin{align*}
		\EE\Bigl[\norm[\big]{\xi(k, v)}^2\Bigr] & = \EE\Bigl[ \norm[\big]{  \phi(k)^\top(  C_{\cK\cK} + L^{-1}\lambda \cI)^{-1}}^2 \cdot \norm[\big]{\varphi(v)}^2 \Bigr] \noend
		                                  & \le  \EE\Bigl[ \norm[\big]{(  C_{\cK\cK} + L^{-1}\lambda \cI)^{-1}  \phi(k)}^2 \Bigr] \noend
		                                  & \le (L^{-1}\lambda)^{-1} \cdot \EE\Bigl[ \inp[\big]{(  C_{\cK\cK} + L^{-1}\lambda \cI)^{-1}  \phi(k)}{  \phi(k)} \Bigr].
	\end{align*}
	Using the definition of the trace operator, we have
	\begin{align*}
		\EE\Bigl[\norm[\big]{\xi(k, v)}^2\Bigr] & \le \EE\Bigl[ \Tr\bigl((  C_{\cK\cK} + L^{-1}\lambda \cI)^{-2}  \phi(k)  \phi(k)^\top \bigr) \Bigr] \noend
		                                  & \le (L^{-1}\lambda)^{-1} \cdot  \Tr\bigl((  C_{\cK\cK} + L^{-1}\lambda \cI)^{-1}  C_{\cK\cK}\bigr) \\
		                                  &= (L^{-1}\lambda)^{-1} \cdot \Gamma(L^{-1}\lambda).
	\end{align*}
	Here $\Gamma(L^{-1}\lambda)$ is the effective dimension of $C_{\cK\cK}$ in \cite{caponnetto2007optimal}, which is defined as follows,
	\begin{align*}
		\Gamma(L^{-1}\lambda) =  \Tr\bigl((  C_{\cK\cK} + L^{-1}\lambda \cI)^{-1}  C_{\cK\cK}\bigr).
	\end{align*}
	We apply Lemma \ref{lem:cme-concen} with $B = C (L^{-1}\lambda)^{-1}$ and $\sigma^2 = (L^{-1}\lambda)^{-1} \cdot \Gamma(L^{-1}\lambda)$, then we have that with probability at least $1- \delta$, the following holds
	\begin{align}
		\label{eq:ce20}
		\norm[\big]{\hat C_{\cV\cK} (  C_{\cK\cK} + L^{-1}\lambda \cI)^{-1} -   C_{\cV\cK} (  C_{\cK\cK} + L^{-1}\lambda \cI)^{-1}} \le C \cdot \biggl( \frac{2}{\lambda} + \sqrt{\frac{\Gamma(L^{-1}\lambda)}{\lambda}} \biggr) \log\frac{2}{\delta},
	\end{align}
	where $C>0$ is an absolute constant. Similarly, we can prove that with probability at least $1-\delta$, the following holds
	\begin{align}
		\label{eq:ce221}
		\norm[\big]{\hat C_{\cK\cK} (  C_{\cK\cK} + L^{-1}\lambda \cI)^{-1} -   C_{\cK\cK} (  C_{\cK\cK} + L^{-1}\lambda \cI)^{-1}} \le C' \cdot \biggl( \frac{2}{\lambda} + \sqrt{\frac{\Gamma(L^{-1}\lambda)}{\lambda}} \biggr)  \log\frac{2}{\delta}.
	\end{align}
Here $C'>0$ is an absolute constant.
Combining \eqref{eq:tac13}, \eqref{eq:ce20}, and \eqref{eq:ce221}, we have with probability at least $1- \delta$ that
	\begin{align}\label{eq:tac1100}
	    &\norm[\big]{ \hatC_{\cV\cK} (\hat C_{\cK\cK} + L^{-1}\lambda \cI)^{-1} - C_{\cV\cK} (C_{\cK\cK} + L^{-1}\lambda \cI)^{-1}} \noend 
	    &\quad \le C'' \cdot \sqrt{\frac{L}{\lambda}} \cdot \biggl( \frac{2}{\lambda} + \sqrt{\frac{\Gamma(L^{-1}\lambda)}{\lambda}} \biggr)  \log\frac{2}{\delta}.
	\end{align}

\vskip4pt	

\noindent{\bf Upper bounding term (ii) of \eqref{eq:tac1}.}
We follow the procedures in the proof from \cite{fukumizu2015nonparametric}. 
For any $g \in \cH_k$, we have that
\begin{align*}
	\inp{C_{\cV\cK} (g)}{ C_{\cV\cK} (g)} & = \EE\bigl[\fl(\cV, \bar \cV) g(\cK) g(\bar \cK)\bigr] \\
	& = \inp[\Big]{(C_{\cK\cK} \otimes C_{\cK\cK})\EE\bigl[\fl(\cV, \bar \cV) \biggiven \cK = \cdot , \bar \cK = \ddagger \bigr]}{g \otimes g}.
\end{align*}
Similarly, for any $q \in \RR^{d_\tp}$ and any $g \in \cH_k$, we have that
\begin{align*}
	\inp[\Big]{ C_{\cV\cK}}{\EE\bigl[\fl(\cV, \cdot) \biggiven \cK = q\bigr] } &= \inp[\Big]{\EE\bigl[\fl(\cV, \bar\cV) \biggiven \cK = q, \cK = \ddagger\bigr]}{C_{\cK\cK} g} \\
	& = \inp[\Big]{(\cI \otimes C_{\cK\cK}) \EE\bigl[\fl(\cV, \bar \cV) \biggiven \cK = \cdot , \bar \cK = \ddagger \bigr]}{\fl(\cdot, q) \otimes g}.
\end{align*}
Taking $g = (C_{\cK\cK} + L^{-1}\lambda \cI)^{-1} \fk(q, \cdot)$, we have that
\begin{align*}
    &\norm[\big]{ C_{\cV\cK} (C_{\cK\cK} + L^{-1}\lambda \cI)^{-1} \fk(q, \cdot) - C_{\cV\cK} C_{\cK\cK}^{-1} \fk(q, \cdot)}^2 \noend
    & \quad = \bigg\langle \Big( (C_{\cK\cK} + L^{-1}\lambda \cI)^{-1} C_{\cK\cK} \otimes (C_{\cK\cK} + L^{-1}\lambda \cI)^{-1} C_{\cK\cK} - \cI \otimes  (C_{\cK\cK} + L^{-1}\lambda \cI)^{-1} C_{\cK\cK}    \noend
    &   \quad \qquad (C_{\cK\cK} + L^{-1}\lambda \cI)^{-1} C_{\cK\cK} \otimes \cI + \cI \otimes \cI \Big) \EE\bigl[\fl(\cV, \bar \cV) \biggiven \cK = \cdot , \bar \cK = \ddagger \bigr], \fk(q, \cdot) \otimes \fk(q, \dagger) \bigg\rangle. 
\end{align*}
We note that $\EE[ \fl(v, \bar v) \given k = \cdot , \bar k = \ddagger ] \in \cH_k \otimes \cH_k$ is in the range spanned by $C_{\cK\cK}\otimes C_{\cK\cK}$. Thus, we can define $ \tilde \cC \in \cH_k \times \cH_k$ such that $(C_{\cK\cK}\otimes C_{\cK\cK})\tilde \cC = \EE[ \fl(v, \bar v) \given k = \cdot , \bar k = \ddagger ]$. Let $\{\lambda_i\}_{i = 1}^\infty$ and $\{\varphi_i\}_{i =1}^\infty$ be the eigenvalues and eigenvectors of $C_{\cK\cK}$, respectively. We then have that
\begin{align*}
	&\norm[\big]{ C_{\cV\cK} (C_{\cK\cK} + L^{-1}\lambda \cI)^{-1} \fk(q, \cdot) - C_{\cV\cK} C_{\cK\cK}^{-1} \fk(q, \cdot)}^4 \\
    & \quad \le \bigg\|\Big( (C_{\cK\cK} + L^{-1}\lambda \cI)^{-1} C_{\cK\cK} \otimes (C_{\cK\cK} + L^{-1}\lambda \cI)^{-1} C_{\cK\cK} - \cI \otimes  (C_{\cK\cK} + L^{-1}\lambda \cI)^{-1} C_{\cK\cK}    \noend
    &   \quad \qquad (C_{\cK\cK} + L^{-1}\lambda \cI)^{-1} C_{\cK\cK} \otimes \cI + \cI \otimes \cI \Big) \EE\bigl[\fl(\cV, \bar \cV) \biggiven \cK = \cdot , \bar \cK = \ddagger \bigr] \bigg\|^2 \noend
    & \quad = \sum_{i,j} \biggl( \frac{\lambda_i \lambda_j (L^{-1}\lambda)^2}{(\lambda_i + L^{-1}\lambda)(\lambda_j + L^{-1}\lambda)} \biggr)^2 \cdot  \inp{\varphi_i \otimes \varphi_j}{ \tilde \cC}^2 \noend
    & \quad \le (L^{-1}\lambda)^4 \cdot \norm{\tilde \cC}^2.
\end{align*}
Thus, we have 
\#\label{eq:tac223}
    \norm[\big]{ C_{\cV\cK} (C_{\cK\cK} + \lambda \cI)^{-1} \fk(q, \cdot) - C_{\cV\cK} C_{\cK\cK}^{-1} \fk(q, \cdot)}_2 \le C\cdot \lambda L^{-1},
\#
where $C > 0$ is an absolute constant.

Combining \eqref{eq:tac1}, \eqref{eq:tac1100}, and \eqref{eq:tac223}, we have with probability at least $ 1- \delta$, the following holds
\begin{align}
	\norm[\big]{\att_\dagger(q, K, V) - \cme(q, \PP_{\cK, \cV})} & \le \cO\biggl( \sqrt{\frac{L}{\lambda}} \cdot \biggl( \frac{2}{\lambda} + \sqrt{\frac{\Gamma(L^{-1}\lambda)}{\lambda}} \biggr)  \log\frac{1}{\delta} + \lambda L^{-1}\biggr).\label{ieq:attdaggercme}
\end{align}
Since $\frakK$ is Gaussian RBF kernel, we have that $\Gamma(L^{-1}\lambda)=\cO(L/\lambda)$.

\textbf{Step 2: Build the relationship between the $\att$ and conditional mean embedding.}

We achieve our goal in two sub-steps. In the first step, we prove that there exists a constant $C>0$ such that
\begin{align}
    \att_\sm(q, K, V)=C\int_{\bbS^{d-1}} v \hat\PP^{\fk}_{\cV\given \cK}(v\given q) \ud v,\label{eq:smattreformulate}
\end{align}
where $\SSS^{d-1}$ is the $(d-1)$-dimensional unit sphere. Here $\hat\PP^{\fk}_{\cV\given \cK}$ is the kernel conditional density estimation of $\PP_{\cV\given \cK}$ defined as follows,
\begin{align*}
    \hat\PP^{\fk}_{\cV\given \cK}(v\given q) = \frac{\sum^L_{\ell=1}\fk(k^\ell, q)\cdot\fk(v^\ell, v)}{\sum^L_{\ell=1}\fk(k^\ell, q)},
\end{align*}
where $\iota = 1/\int_{\SSS^{d - 1}} \fk(k, q) \rd q $ is a normalization constant. Note that $\iota$ does not depend on the value of $k$ by symmetry.
We transform the right-hand side of this equality as
\#\label{eq::pf_lem_kernel_story1_eq1}
\int v \hat\PP^{\fk}_{\cV\given \cK}(v\given q) \ud v 
&= \iota \cdot \int_{\SSS^{d - 1}} v\cdot \frac{\sum^L_{\ell=1}\fk(k^\ell, q)\cdot\fk(v^\ell, v)}{\sum^L_{\ell=1}\fk(k^\ell, q)} \ud v \notag\\
&= \frac{\iota \cdot \sum^L_{\ell = 1} \fk(k^\ell, q)\cdot\int_{\SSS^{d - 1}} v \cdot\fk(v^\ell, v) \ud v }{\sum^L_{\ell=1}\fk(k^\ell, q)}.
\#
Thus, it suffices to calculate the integration term$\int_{\SSS^{d - 1}} v \cdot\fk(v^\ell, v) \ud v$.
To this end, we have the following lemma.
\begin{proposition}
\label{prop:calculation_of_integral}
Let $\fk(a, b) = \exp(a^\top b / \gamma)$ be the exponential kernel with a fixed $\gamma >0$. It holds for any $b\in\SSS^{d - 1}$ that
\$
\int_{\SSS^{d - 1}} a\cdot \fk(a, b)\ud a = C_1 \cdot b,
\$
where $C_1>0$ is an absolute constant.

\end{proposition}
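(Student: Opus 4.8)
The plan is to derive the identity purely from the rotational symmetry of the surface measure on $\SSS^{d-1}$, reducing the computation to a one-dimensional sign check. Set $I(b) = \int_{\SSS^{d-1}} a \cdot \fk(a, b) \ud a \in \RR^{d}$ for $b \in \SSS^{d-1}$. The first step is to record the equivariance of $I$: for any orthogonal matrix $Q$, the substitution $a = Q a'$ leaves the surface measure invariant and satisfies $(Q a')^{\top} Q b = (a')^{\top} b$, so
\begin{align*}
    I(Q b) = \int_{\SSS^{d-1}} a \exp\bigl( a^{\top} Q b / \gamma \bigr) \ud a = \int_{\SSS^{d-1}} Q a' \exp\bigl( (a')^{\top} b / \gamma \bigr) \ud a' = Q \, I(b).
\end{align*}
Given an arbitrary $b \in \SSS^{d-1}$, complete $b$ to an orthonormal basis to obtain $Q$ with $Q e_{1} = b$, where $e_{1}$ is the first standard basis vector; then $I(b) = Q \, I(e_{1})$. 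Hence it suffices to prove that $I(e_{1}) = C_{1} e_{1}$ for a constant $C_{1} > 0$, since that yields $I(b) = C_{1} Q e_{1} = C_{1} b$.

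The second step computes $I(e_{1})$ coordinate by coordinate. For each index $j \geq 2$, the reflection that sends $a_{j} \mapsto -a_{j}$ and fixes every other coordinate maps $\SSS^{d-1}$ onto itself, preserves the surface measure, and leaves the factor $\exp(a_{1}/\gamma)$ unchanged while flipping the sign of the integrand $a_{j} \exp(a_{1}/\gamma)$; therefore the $j$-th component of $I(e_{1})$ vanishes. Consequently $I(e_{1}) = C_{1} e_{1}$ with $C_{1} = \int_{\SSS^{d-1}} a_{1} \exp(a_{1}/\gamma) \ud a$, a quantity that depends only on $d$ and $\gamma$ and in particular not on $b$, which is all that is required here.

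It remains to verify $C_{1} > 0$, which is the only step that uses more than a symmetry argument. Pairing each point $a$ with its reflection $a' = (-a_{1}, a_{2}, \dots, a_{d})$ across the hyperplane $\{ a_{1} = 0 \}$ --- again a measure-preserving involution of $\SSS^{d-1}$ --- the combined contribution of the pair to the integrand is $a_{1} \exp(a_{1}/\gamma) + (-a_{1}) \exp(-a_{1}/\gamma) = 2 a_{1} \sinh(a_{1}/\gamma)$, which is strictly positive whenever $a_{1} \neq 0$; since $\{ a_{1} \neq 0 \}$ has positive surface measure, integrating over the open half-sphere $\{ a_{1} > 0 \}$ gives $C_{1} > 0$. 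The whole argument is elementary, so there is no substantial obstacle; the one point that needs care is simply confirming that each rotation and reflection invoked genuinely preserves the surface measure on $\SSS^{d-1}$, so that the reductions remain valid in every dimension $d \geq 1$.
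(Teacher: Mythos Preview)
Your proof is correct and uses essentially the same symmetry idea as the paper: the component of the integral perpendicular to $b$ vanishes by a reflection that fixes $b$ and the kernel value but flips that component. The paper does this coordinate-free via the single reflection $a \mapsto 2(a^\top b)b - a$ across the line through $b$, whereas you first rotate $b$ to $e_{1}$ and then kill each perpendicular coordinate by its own sign flip; your version also explicitly verifies $C_{1} > 0$ via the pairing $a_{1} \mapsto -a_{1}$, a point the paper leaves to the reader.
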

\begin{proof}
See Section~\ref{app:calculation_of_integral} for a detailed proof.
\end{proof}
Thus, it holds for the right-hand side of \eqref{eq::pf_lem_kernel_story1_eq1} that
\$
\iota \cdot C_1 \cdot \frac{\sum^L_{\ell = 1} \fk(k^\ell, q)\cdot v^\ell}{\sum^L_{\ell=1}\fk(k^\ell, q)} = \iota \cdot C_1  \cdot V^\top\softmax(Kq / \gamma)=\iota \cdot C_1 \cdot \att_\sm(q, K, V),
\$
where the first equality follows from the definition of the softmax function and the second equality follows from the definition of the softmax attention.

The second step is to relate the right-hand side of \eqref{eq:smattreformulate} to conditional mean embedding. In fact, under the condition that $\hat\PP^{\fk}_{\cV\given \cK}(v\given q) \rightarrow \PP(v \given q)$ uniformly for any $q \in \SSS^{d_\tp -1}$ as $L\rightarrow \infty$, we have 
\$
\int v \hat\PP^{\fk}_{\cV\given \cK}(v\given q) \ud v \rightarrow \EE[\cV \given \cK = q] \qquad \text{as} \quad L\rightarrow \infty.
\$
Thus, we have that
\begin{align}
    \att_\sm(q, K, V)\rightarrow C\cdot\EE[\cV \given \cK = q] \qquad \text{as} \quad L\rightarrow \infty \label{eq:attcmeasy}
\end{align}
for some constant $C>0$. Combining \eqref{eq:attcmeasy} and \eqref{ieq:attdaggercme} and choosing $\lambda=L^{3/4}$, we complete the proof of Proposition~\ref{prop:cme-att-limit}.

\end{proof}

\section{Appendix for Section~\ref{sec:pretrain}}

\subsection{Supplemental Definitions for Markov Chains}\label{app:MC}
We follow the notations in \cite{paulin2015concentration}. Let $\Omega$ be a Polish space. The transition kernel for a time-homogeneous Markov chain $\{X_{i}\}_{i=1}^{\infty}$ supported on $\Omega$ is a probability distribution $\PP(x,\rmd y)$ for every $x\in\Omega$. Given $X_{1}=x_{1},\cdots,X_{t-1}=x_{t-1}$, the conditional distribution of $X_{t}$ equals $\PP(x_{t-1},\rmd y)$. A distribution $\pi$ is said to be a stationary distribution of this Markov chain if $\int_{x\in\Omega}\PP(x,\rmd y)\pi(\rmd x)=\pi(\rmd y)$. We adopt $\PP^{t}(x,\cdot)$ to denote the distribution of $X_{t}$ conditioned on $X_{1}=x$. The \emph{mixing time} of the chain is defined by
\begin{align*}
    d(t)=\sup_{x\in\Omega}\TV\big(P^{t}(x,\cdot),\pi\big), \quad t_{\rm mix}(\varepsilon)=\min\{t\,|\,d(t)\leq\varepsilon\},\quad t_{\rm mix}=t_{\rm mix}(1/4).
\end{align*}

\subsection{Proof of Theorem~\ref{thm:pretrainmle}}
\begin{proof}[Proof of Theorem~\ref{thm:pretrainmle}]
    Our proof mainly involves three steps.
    \begin{itemize}
        \item Error decomposition with the PAC-Bayes framework.
        \item Control each term in the error decomposition.
        \item Conclude the proof.
    \end{itemize}
    
    \textbf{Step 1: Error decomposition with the PAC-Bayes framework.}
    
    For ease of notation, we temporarily write $T_{\rmp}$ and $N_{\rmp}$ as $T$ and $N$, respectively. Recall that the pretraining dataset is $\calD=\{(S_{t}^{n},x_{t+1}^{n})\}_{n,t=1}^{N,T}$, which consists of $N$ trajectories (essays), and each essay have $T+1$ words. Given $S_{t}^{n}$, the next word is generated as $x_{t+1}^{n}\sim \PP(\cdot\,|\,S_{t}^{n})$, and $S_{t+1}^{n}=(S_{t}^{n},x_{t+1}^{n})$. Here, we construct a ghost sample $\tilde{\calD}=\{(\tilS_{t}^{n},\tilx_{t+1}^{n})\}_{n,t=1}^{N,T}$ as $\tilS_{t}^{n}=S_{t}^{n}$ and $\tilx_{t+1}^{n}\sim \PP(\cdot\,|\,\tilS_{t}^{n})$ independently from $\calD$. We define function $g(\theta)=L(\theta,D)-\log \bbE_{\tilde{\calD}}[\exp(L(\theta,\tilde{\calD}))\,|\,\calD]$, where 
    \begin{align*}
        L(\theta,\tilD)=-\frac{1}{4}\sum_{n=1}^{N}\sum_{t=1}^{T}\log\frac{\PP(\tilx_{t+1}^{n}\,|\,S_{t}^{n})}{\PP_{\theta}(\tilx_{t+1}^{n}\,|\,S_{t}^{n})}.
    \end{align*}
    For distributions $Q,P\in\Delta(\Theta)$, where $P$ can potentially depends on $\calD$, Lemma~\ref{lem:DV} shows that
    \begin{align*}
        \bbE_{P}\big[g(\theta)\big]\leq \KL(P\|Q)+\log\bbE_{Q}\big[\exp\big(g(\theta)\big)\big].
    \end{align*}
    Substituting the definition of $g(\theta)$ and taking expectation with respect to the distribution of $\calD$ on the both sides of the inequality, we can derive that
    \begin{align*}
        \bbE_{\calD}\bigg[\exp\Big\{\bbE_{P}\Big[L(\theta,\calD)-\log\bbE_{\tilde{\calD}}\big[ \exp\big( L(\theta,\tilde{\calD})\big)\,|\,\calD\big]\Big]-\KL(P\,\|\,Q)\Big\}\bigg]\leq 1.
    \end{align*}
    With Chernoff inequality, we can show that with probability at least $1-\delta$, the following holds
    \begin{align}
        -\bbE_{\theta\sim P}\Big[\log\bbE_{\tilde{\calD}}\big[ \exp\big( L(\theta,\tilde{\calD})\big)\,|\,\calD\big]\Big]\leq -\bbE_{P}\big[L(\theta,\calD)\big]+ \KL(P\,\|\,Q)+\log\frac{1}{\delta}.\label{ieq:1}
    \end{align}
    We first cope with the left-hand side of \eqref{ieq:1}.
    \begin{align}
        &-\bbE_{P}\Big[\log\bbE_{\tilde{\calD}}\big[ \exp\big( L(\theta,\tilde{\calD})\big)\,|\,\calD\big]\Big]\nonumber\\
        &\quad\geq -\frac{1}{2}\log\bbE_{\tilde{\calD}}\bigg[ \exp\bigg(-\frac{1}{2}\sum_{n=1}^{N}\sum_{t=1}^{T}\log\frac{\PP(\tilx_{t+1}^{n}\,|\,S_{t}^{n})}{\PP_{\htheta}(\tilx_{t+1}^{n}\,|\,S_{t}^{n})}\bigg)\,\bigg|\,\calD\bigg]\nonumber\\
        &\quad\qquad-\frac{1}{2}\bbE_{\theta\sim P}\bigg[\log\bbE_{\tilde{\calD}}\bigg[ \exp\bigg(-\frac{1}{2}\sum_{n=1}^{N}\sum_{t=1}^{T}\log\frac{\PP_{\htheta}(\tilx_{t+1}^{n}\,|\,S_{t}^{n})}{\PP_{\theta}(\tilx_{t+1}^{n}\,|\,S_{t}^{n})}\bigg)\,\bigg|\,\calD\bigg]\bigg]\nonumber\\
        &\quad=-\frac{1}{2}\sum_{n=1}^{N}\sum_{t=1}^{T}\log\bbE_{\tilx_{t+1}^{n}\sim \PP(\cdot\,|\,S_{t}^{n})}\bigg[ \exp\bigg(-\frac{1}{2}\log\frac{\PP(\tilx_{t+1}^{n}\,|\,S_{t}^{n})}{\PP_{\htheta}(\tilx_{t+1}^{n}\,|\,S_{t}^{n})}\bigg)\,\bigg|\,\calD\bigg]\nonumber\\
        &\quad\qquad-\frac{1}{2}\bbE_{\theta\sim P}\bigg[\log\bbE_{\tilde{\calD}}\bigg[ \exp\bigg(-\frac{1}{2}\sum_{n=1}^{N}\sum_{t=1}^{T}\log\frac{\PP_{\htheta}(\tilx_{t+1}^{n}\,|\,S_{t}^{n})}{\PP_{\theta}(\tilx_{t+1}^{n}\,|\,S_{t}^{n})}\bigg)\,\bigg|\,\calD\bigg]\bigg]\nonumber\\
        &\quad\geq \frac{1}{4}\sum_{n=1}^{N}\sum_{t=1}^{T}\TV\big(\PP(\cdot\,|\,S_{t}^{n}),\PP_{\htheta}(\cdot\,|\,S_{t}^{n})\big)^{2}-\frac{1}{2}\bbE_{\theta\sim P}\bigg[\log\bbE_{\tilde{\calD}}\bigg[ \exp\bigg(-\frac{1}{2}\sum_{n=1}^{N}\sum_{t=1}^{T}\log\frac{\PP_{\htheta}(\tilx_{t+1}^{n}\,|\,S_{t}^{n})}{\PP_{\theta}(\tilx_{t+1}^{n}\,|\,S_{t}^{n})}\bigg)\,\bigg|\,\calD\bigg]\bigg], \label{ieq:2}
    \end{align}
    where the first inequality results from the definition of $L(\theta,\calD)$ and Cauchy-Schwarz inequality, the equality results from that the transitions of $\tilx_{t+1}^{n}$ are independent given $\calD$, and the last inequality results from Lemma~\ref{lem:tvbound}. The second term in the right-hand side of \eqref{ieq:2} can be controlled if the distribution $P$ is chosen to concentrate around $\htheta$. This will be done in Step 2. Now we consider the right-hand side of \eqref{ieq:1}. For any $\theta^{*}\in\Theta$, we can decompose it as
    \begin{align}
        &-\bbE_{P}\big[L(\theta,\calD)\big]\nonumber\\
        &\quad = \bbE_{P}\bigg[\frac{1}{4}\sum_{n=1}^{N}\sum_{t=1}^{T}\log\frac{\PP(x_{t+1}^{n}\,|\,S_{t}^{n})}{\PP_{\theta^{*}}(x_{t+1}^{n}\,|\,S_{t}^{n})}+\log\frac{\PP_{\theta^{*}}(x_{t+1}^{n}\,|\,S_{t}^{n})}{\PP_{\htheta}(x_{t+1}^{n}\,|\,S_{t}^{n})}+\log\frac{\PP_{\htheta}(x_{t+1}^{n}\,|\,S_{t}^{n})}{\PP_{\theta}(x_{t+1}^{n}\,|\,S_{t}^{n})}\bigg]\nonumber\\
        &\quad\leq \frac{1}{4}\sum_{n=1}^{N}\sum_{t=1}^{T}\log\frac{\PP(x_{t+1}^{n}\,|\,S_{t}^{n})}{\PP_{\theta^{*}}(x_{t+1}^{n}\,|\,S_{t}^{n})}+\frac{1}{4}\sum_{n=1}^{N}\sum_{t=1}^{T}\bbE_{P}\bigg[\log\frac{\PP_{\htheta}(x_{t+1}^{n}\,|\,S_{t}^{n})}{\PP_{\theta}(x_{t+1}^{n}\,|\,S_{t}^{n})}\bigg],\label{ieq:3}
    \end{align}
    where the inequality results from the fact that $\htheta$ maximizes the likelihood. We will choose $\theta^{*}$ as the projection of $\PP$ onto $\{\PP_{\theta}\,|\,\theta\in\Theta\}$, i.e., $\PP_\theta^{*}$ is the best approximation of $\PP$ with respect to the KL divergence. Thus, the first term in the right-hand side of \eqref{ieq:3} is the approximation error. The second term in the right-hand side of \eqref{ieq:3} can be controlled in the same way as the second term in the right-hand side of \eqref{ieq:2}. Combining inequalities~\eqref{ieq:1}, \eqref{ieq:2}, and \eqref{ieq:3}, we have that
    \begin{align}
        &\frac{1}{4}\sum_{n=1}^{N}\sum_{t=1}^{T}\TV\big(\PP(\cdot\,|\,S_{t}^{n}),\PP_{\htheta}(\cdot\,|\,S_{t}^{n})\big)^{2}\nonumber\\
        &\quad\leq \underbrace{\frac{1}{2}\bbE_{\theta\sim P}\bigg[\log\bbE_{\tilde{\calD}}\bigg[ \exp\bigg(-\frac{1}{2}\sum_{n=1}^{N}\sum_{t=1}^{T}\log\frac{\PP_{\htheta}(x_{t+1}^{n}\,|\,S_{t}^{n})}{\PP_{\theta}(x_{t+1}^{n}\,|\,S_{t}^{n})}\bigg)\,\bigg|\,\calD\bigg]\bigg]+\frac{1}{4}\sum_{n=1}^{N}\sum_{t=1}^{T}\bbE_{P}\bigg[\log\frac{\PP_{\htheta}(x_{t+1}^{n}\,|\,S_{t}^{n})}{\PP_{\theta}(x_{t+1}^{n}\,|\,S_{t}^{n})}\bigg]}_{(\rmI)}\nonumber\\
        &\quad\qquad+\underbrace{\frac{1}{4}\sum_{n=1}^{N}\sum_{t=1}^{T}\log\frac{\PP(x_{t+1}^{n}\,|\,S_{t}^{n})}{\PP_{\theta^{*}}(x_{t+1}^{n}\,|\,S_{t}^{n})}}_{(\rm II)}+ \underbrace{\KL(P\,\|\,Q)}_{(\rm III)}+\log\frac{1}{\delta},\label{ieq:4}
    \end{align}
    where term (I) is the fluctuation error induced by $\theta\sim P$, term (II) is the approximation error, and term (III) is the KL divergence between $P$ and $Q$.
    
    \textbf{Step 2: Control each term in the error decomposition.}
    
    We first consider term (I). We need to quantify the fluctuation of $\PP_{\theta}$ when $\theta$ is changing.
    \begin{proposition}\label{prop:parafluc}
        For any input $X\in\bbR^{L\times d}$ and $\theta,\tiltheta\in\Theta$, we have that
        \begin{align*}
            &\TV\big(\PP_{\theta}(\cdot\,|\,X),\PP_{\tiltheta}(\cdot\,|\,X)\big)\\
            &\quad\leq \frac{2}{\tau}\big\|A^{(D+1),\top}-\tilA^{(D+1),\top}\big\|_{1,2}+\sum_{t=1}^{D}\alpha_{t}(\beta_{t}+\iota_{t}+\kappa_{t}+\rho_{t}),
        \end{align*}
        where 
        \begin{align*}
            \alpha_{t}&=\frac{2}{\tau}B_{A}(1+B_{A,1}\cdot B_{A,2})\big(1+hB_{V}(1+4 B_{Q}B_{K})\big)^{D-t}\\
            \beta_{t}&=|\gamma_{2}^{(t)}-\tilgamma_{2}^{(t)}|+(1+B_{A,1}\cdot B_{A,2})\cdot\big(1+(\|X^\top\|_{2,\infty}-1)\bbI_{t=1}\big)\cdot|\gamma_{1}^{(t)}-\tilgamma_{1}^{(t)}|\\
            \iota_{t}&=B_{A,2}\cdot\|A_{1}^{(t)}-\tilA_{1}^{(t)}\|_{\rmF}+B_{A,1}\cdot\|A_{2}^{(t)}-\tilA_{2}^{(t)}\|_{\rmF}\\
            \kappa_{t}&=(1+B_{A,1}\cdot B_{A,2})\cdot\big(1+(\|X^\top\|_{2,\infty}-1)\bbI_{t=1}\big)\cdot\sum_{i=1}^{h}\big\|W_{i}^{V,(t)}-\tilde{W}_{i}^{V,(t)}\|_{\rmF}\\
            \rho_{t}&=2(1+B_{A,1}\cdot B_{A,2})\cdot\big(1+(\|X^\top\|_{2,\infty}-1)\bbI_{t=1}\big)\cdot B_{V}\\
            &\quad\qquad\cdot\sum_{i=1}^{h}B_{K}\cdot\|W_{i}^{Q,(t+1)}-\tilde{W}_{i}^{Q,(t+1)}\|_{\rmF}+B_{Q}\cdot\|W_{i}^{K,(t+1)}-\tilde{W}_{i}^{K,(t+1)}\|_{\rmF}
        \end{align*}
        for all $t\in[D]$.
    \end{proposition}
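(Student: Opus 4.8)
The plan is to prove Proposition~\ref{prop:parafluc} by a deterministic, layer-by-layer perturbation analysis: propagate the mismatch between $\theta$ and $\tiltheta$ forward through the network \eqref{eq:netdef}, using at each layer the nonexpansiveness of the normalization together with Lipschitz estimates for the attention and feed-forward blocks in both their input and their own weights. The first step is to reduce the total variation distance to a perturbation of the readout logits. Since $\PP_{\theta}(\cdot\,|\,X)=\softmax(\bbI_{L}^\top X^{(D)}A^{(D+1)}/(L\tau))$, and softmax is Lipschitz with $\TV$ bounded by an absolute constant times the $\ell_\infty$ distance of the arguments, it suffices to control $\tau^{-1}\|\bbI_{L}^\top(X^{(D)}A^{(D+1)}-\tilde X^{(D)}\tilde A^{(D+1)})/L\|_{\infty}$. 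Writing this as $\tau^{-1}\bbI_{L}^\top X^{(D)}(A^{(D+1)}-\tilde A^{(D+1)})/L+\tau^{-1}\bbI_{L}^\top(X^{(D)}-\tilde X^{(D)})\tilde A^{(D+1)}/L$ and using that, after $\Pi_{\rm{norm}}$, every row of $X^{(D)}$ lies in the unit $\ell_{2}$-ball, so $\|\bbI_{L}^\top X^{(D)}/L\|_{2}\le 1$, Hölder's inequality bounds the first summand by $\tfrac{2}{\tau}\|A^{(D+1),\top}-\tilde A^{(D+1),\top}\|_{1,2}$ — exactly the first term of the claim — and bounds the second summand by $\tfrac{2}{\tau}B_{A}\cdot e_{D}$, where $e_{t}:=\|X^{(t),\top}-\tilde X^{(t),\top}\|_{2,\infty}$. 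The prefactor $\tfrac{2}{\tau}B_{A}$ is precisely the common prefactor of every $\alpha_{t}$, so it remains to bound $e_{D}$ by $\sum_{t=1}^{D}(1+hB_{V}(1+4B_{Q}B_{K}))^{D-t}(1+B_{A,1}B_{A,2})(\beta_{t}+\iota_{t}+\kappa_{t}+\rho_{t})$.

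The heart of the proof is a one-step recursion $e_{t}\le g_{t}\,e_{t-1}+a_{t}$, where the multiplicative gain $g_{t}$ collects the per-layer input-Lipschitz constants and the additive term $a_{t}$ collects the contributions of perturbing the layer-$t$ weights. For the gain: $\Pi_{\rm{norm}}$ is nonexpansive, being the row-wise Euclidean projection onto the unit ball; $\FF(\cdot,A)=\relu(\cdot A_{1})A_{2}$ is $B_{A,1}B_{A,2}$-Lipschitz in its input because $\relu$ is $1$-Lipschitz and $\|A_{1}\|_{\mathrm{op}}\le B_{A,1}$, $\|A_{2}\|_{\mathrm{op}}\le B_{A,2}$; the multi-head softmax attention $\MHA(\cdot,W)=\sum_{i=1}^{h}\softmax(\cdot W_{i}^{Q}(\cdot W_{i}^{K})^{\top})\cdot W_{i}^{V}$ is $hB_{V}(1+4B_{Q}B_{K})$-Lipschitz in its input, using that the rows feeding layer $t\ge 2$ have $\ell_{2}$-norm at most $1$, that the softmax Jacobian has bounded $\ell_{\infty}\!\to\!\ell_{1}$ norm, and that the score matrix is bilinear in the input so its differential splits into two terms each bounded by a constant times $B_{Q}B_{K}$; and the residual links contribute only the factor $1$ since $|\gamma_{1}^{(t)}|,|\gamma_{2}^{(t)}|\le 1$. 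For the additive term: perturbing $\gamma_{1}^{(t)},\gamma_{2}^{(t)}$ yields $\beta_{t}$, perturbing $A_{1}^{(t)},A_{2}^{(t)}$ yields $\iota_{t}$, perturbing $W_{i}^{V,(t)}$ yields $\kappa_{t}$, and perturbing $W_{i}^{Q,(t)},W_{i}^{K,(t)}$ yields $\rho_{t}$, the extra factors $2B_{V}$ in $\rho_{t}$ again arising from the softmax Jacobian bound times the value multiplication. The prefactor $1+(\|X^{\top}\|_{2,\infty}-1)\bbI_{t=1}$ in $\beta_{t},\kappa_{t},\rho_{t}$ is because the input to layer $t$ has row-norm at most $1$ when $t\ge2$, having passed through $\Pi_{\rm{norm}}$ of layer $t-1$, but only at most $\|X^{\top}\|_{2,\infty}$ when $t=1$. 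Unrolling $e_{t}\le g_{t}e_{t-1}+a_{t}$ from $t=1$ to $D$ and feeding the result into the last-layer reduction of the first paragraph collects the stated $\alpha_{t}$ and proves the bound.

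I expect the main obstacle to be the attention Lipschitz estimate — establishing the constant $hB_{V}(1+4B_{Q}B_{K})$ in the input and the matching factors $2B_{V}$, $B_{K}$, $B_{Q}$ in $\kappa_{t},\rho_{t}$. This requires differentiating $\softmax(XW^{Q}(XW^{K})^{\top})$, a composition of softmax with a map quadratic in $X$, so that the $X$-differential has two summands; bounding the softmax Jacobian uniformly over all inputs; and checking that all of these estimates go through with the post-normalization bound $\|X^{(t-1),\top}\|_{2,\infty}\le 1$, so that the per-layer gains and the terms $\kappa_{t},\rho_{t}$ depend on the data only through the $t=1$ input norm. One must also be careful to pass between operator and Frobenius norms precisely where the statement allows or requires it, and to verify that every estimate holds uniformly over $\theta,\tiltheta\in\Theta$. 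Beyond this, the argument reduces to assembling matrix-norm inequalities into the single telescoped sum.
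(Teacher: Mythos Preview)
Your proposal is correct and follows essentially the same route as the paper: reduce $\TV$ to an $\ell_\infty$ bound on the logits via the softmax Lipschitz estimate (the paper's Lemma~\ref{lem:smlip}), split off the readout-weight term, and then derive the one-step recursion $e_{t}\le (1+B_{A,1}B_{A,2})(1+hB_{V}(1+4B_{Q}B_{K}))\,e_{t-1}+(\beta_{t}+\iota_{t}+\kappa_{t}+\rho_{t})$ from the nonexpansiveness of $\Pi_{\rm norm}$ together with Lipschitz estimates for $\FF$ and $\MHA$ in both input and weights. The only substantive difference is that the paper does not derive the attention Lipschitz constant itself but imports it as a black box (Lemma~\ref{lem:mhalip}, Propositions~20--21 of \cite{zhang2022relational}), whereas you plan to establish it by hand; either way the bookkeeping and the final unrolling are identical.
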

    \begin{proof}[Proof of Proposition~\ref{prop:parafluc} ]
        See Appendix~\ref{app:parafluc}.
    \end{proof}
    With the help of Proposition~\ref{prop:parafluc}, we set the distribution $P$ as 
    \begin{align}
        P&=\prod_{t=1}^{D+1}\calL_{P}\big(\theta^{(t)}\big)\label{eq:distP}\\
        \calL_{P}\big(\theta^{(D+1)}\big)&=\unif\Big(\bbB\big(\hatA^{(D+1)},r^{(D+1)},\|\cdot\|_{1,2}\big)\Big)\nonumber\\
        \calL_{P}\big(\theta^{(t)}\big)&=\unif\Big(\bbB\big(\hat\gamma_{1}^{(t)},r_{\gamma,1}^{(t)},|\cdot|\big)\Big)\cdot\unif\Big(\bbB\big(\hat\gamma_{2}^{(t)},r_{\gamma,2}^{(t)},|\cdot|\big)\Big)\cdot\calL_{P}(A^{(t)})\cdot\calL_{P}(W^{(t)})\nonumber\\
        \calL_{P}(A^{(t)})&=\unif\Big(\bbB\big(\hatA_{1}^{(t)},r_{A,1}^{(t)},\|\cdot\|_{\rmF}\big)\Big)\cdot\unif\Big(\bbB\big(\hatA_{2}^{(t)},r_{A,2}^{(t)},\|\cdot\|_{\rmF}\big)\Big)\nonumber\\
        \calL_{P}(W^{(t)})&=\prod_{i=1}^{h}\unif\Big(\bbB\big(\hatW_{i}^{Q,(t)},r_{Q}^{(t)},\|\cdot\|_{\rmF}\big)\Big)\cdot\unif\Big(\bbB\big(\hatW_{i}^{K,(t)},r_{K}^{(t)},\|\cdot\|_{\rmF}\big)\Big)\cdot\unif\Big(\bbB\big(\hatW_{i}^{V,(t)},r_{V}^{(t)},\|\cdot\|_{\rmF}\big)\Big)\nonumber
    \end{align}
    for $t\in[D]$, where $\unif$ denotes the uniform distribution on the set, $\bbB(a,r,\|\cdot\|)=\{x\,|\, \|x-a\|\leq r\}$ denotes the ball centered in $a$ with radius $r$, the radius is set as
    \begin{alignat*}{2}
        r_{\gamma,1}^{(t)}&=R^{-1}(1+B_{A,1}\cdot B_{A,2})^{-1}\alpha_{t}^{-1}/NT,&\quad r_{\gamma,2}^{(t)}&=R^{-1}\alpha_{t}^{-1}/NT\\
        r_{A,1}^{(t)}&=R^{-1}B_{A,2}^{-1}\alpha_{t}^{-1}/NT,&\quad r_{A,2}^{(t)}&=R^{-1}B_{A,1}^{-1}\alpha_{t}^{-1}/NT,\\
        r_{V}^{(t)}&=R^{-1}h^{-1}(1+B_{A,1}\cdot B_{A,2})^{-1}\alpha_{t}^{-1}/NT,&\quad r_{Q}^{(t)}&=R^{-1}h^{-1}(1+B_{A,1}\cdot B_{A,2})^{-1}B_{V}^{-1}B_{K}^{-1}\alpha_{t}^{-1}/NT\\
        r_{K}^{(t)}&=R^{-1}h^{-1}(1+B_{A,1}\cdot B_{A,2})^{-1}B_{V}^{-1}B_{Q}^{-1}\alpha_{t}^{-1}/NT, &\quad r^{(D+1)}&=\tau B_{A}^{-1}/NT.
    \end{alignat*}
    Under this assignment, we now bound $|\log \PP_{\htheta}(x\,|\,S)/\PP_{\theta}(x\,|\,S)|$ for any $S\in\bbR^{L\times d}$ and $x\in\bbR^{d_{y}}$. We first note that 
    \begin{align}
        \PP_{\htheta}(x\,|\,S)\geq b_{y}=(1+d_{y}\exp (B_{A}/\tau))^{-1} \label{ieq:probbd}
    \end{align}
    for any $S$ and $x$. This results from the fact that
    \begin{align*}
        \bigg\|\frac{1}{L\tau}\bbI_{L}^\top X^{(D)}A^{(D+1)}\bigg\|_{1}\leq \big\|A^{(D+1),\top}\big\|_{1,2}\leq B_{A}.
    \end{align*}
    If $\TV(\PP_{\theta}(\cdot\,|\,S),\PP_{\tiltheta}(\cdot\,|\,S))=\varepsilon\leq b_{y}/2$, some basic calculations show that
    \begin{align*}
        \frac{b_{y}}{b_{y}+\varepsilon}\leq\frac{\PP_{\htheta}(x\,|\,S)}{\PP_{\theta}(x\,|\,S)}\leq 1+\frac{2\varepsilon}{b_{y}}.
    \end{align*}
    Thus, we have
    \begin{align*}
        \bigg|\log \frac{\PP_{\htheta}(x\,|\,S)}{\PP_{\theta}(x\,|\,S)}\bigg|\leq \frac{2\varepsilon}{b_{y}}=\cO\bigg(\frac{1}{NT}\bigg)\quad  \text{ for } P\text{ a.s. }
    \end{align*}
    Based on this, we conclude that 
    \begin{align}
        (\rmI)=\cO(1)\label{ieq:I}.
    \end{align}
    Next, we control term (III) in \eqref{ieq:4}. We take $Q$ as
    \begin{align}
        Q&=\prod_{t=1}^{D+1}\calL_{Q}\big(\theta^{(t)}\big)\label{eq:distQ}\\
        \calL_{Q}\big(\theta^{(D+1)}\big)&=\unif\Big(\bbB\big(0,B_{A},\|\cdot\|_{1,2}\big)\Big)\nonumber\\
        \calL_{Q}\big(\theta^{(t)}\big)&=\unif\Big(\bbB\big(1/2,1/2,|\cdot|\big)\Big)\cdot\unif\Big(\bbB\big(1/2,1/2,|\cdot|\big)\Big)\cdot\calL_{Q}(A^{(t)})\cdot\calL_{Q}(W^{(t)})\nonumber\\
        \calL_{Q}(A^{(t)})&=\unif\Big(\bbB\big(0,B_{A,1},\|\cdot\|_{\rmF}\big)\Big)\cdot\unif\Big(\bbB\big(0,B_{A,2},\|\cdot\|_{\rmF}\big)\Big)\nonumber\\
        \calL_{Q}(W^{(t)})&=\prod_{i=1}^{h}\unif\Big(\bbB\big(0,B_{Q},\|\cdot\|_{\rmF}\big)\Big)\cdot\unif\Big(\bbB\big(0,B_{K},\|\cdot\|_{\rmF}\big)\Big)\cdot\unif\Big(\bbB\big(0,B_{V},\|\cdot\|_{\rmF}\big)\Big).\nonumber
    \end{align}
    Then the KL divergence between $P$ and $Q$ is
    \begin{align}
        \KL(P\,\|\,Q)=\cO\Big((D^{2}\cdot d\cdot (d_{F}+d_{h}+d)+d\cdot d_{y})\cdot\log\big( 1+NT\tau^{-1}RhB_{A}B_{A,1}B_{A,2}B_{Q}B_{K}B_{V}\big)\Big).\label{ieq:III}
    \end{align}
    
    Finally, we control term (II) in \eqref{ieq:4}. This term can be controlled as
    \begin{align*}
        &\frac{1}{NT}\sum_{n=1}^{N}\sum_{t=1}^{T}\log\frac{\PP(x_{t+1}^{n}\,|\,S_{t}^{n})}{\PP_{\theta^{*}}(x_{t+1}^{n}\,|\,S_{t}^{n})}\\
        &\quad = \frac{1}{NT}\sum_{n=1}^{N}\sum_{t=1}^{T}\log\frac{\PP(x_{t+1}^{n}\,|\,S_{t}^{n})}{\PP_{\theta^{*}}(x_{t+1}^{n}\,|\,S_{t}^{n})}-\frac{1}{NT}\sum_{n=1}^{N}\sum_{t=1}^{T}\bbE_{S_{t}^{n}}\KL\big(\PP(\cdot\,|\,S_{t}^{n})\,\|\,\PP_{\theta^{*}}(\cdot\,|\,S_{t}^{n})\big)\\
        &\quad\qquad +\frac{1}{NT}\sum_{n=1}^{N}\sum_{t=1}^{T}\bbE_{S_{t}^{n}}\KL\big(\PP(\cdot\,|\,S_{t}^{n})\,\|\,\PP_{\theta^{*}}(\cdot\,|\,S_{t}^{n})\big).
    \end{align*}
    The first two terms in the right-hand side of the equality is the generalization error, which can be bounded with Lemma~\ref{lem:MCconcen}. With Assumption~\ref{assump:positive}, we note that
    \begin{align}\label{eq:logbound}
        \bigg|\log \frac{\PP(x\,|\,S)}{\PP_{\theta^{*}}(x\,|\,S)}\bigg|\leq b^{*}= \log\max\{c_{0}^{-1},b_{y}^{-1}\},
    \end{align}
    so the function satisfies the condition in Lemma~\ref{lem:MCconcen} with $c_{i}=2b^{*}$. Using the moment generating function bound in Lemma~\ref{lem:MCconcen} and Chernoff bound, we have that
    \begin{align}
        \frac{1}{NT}\sum_{n=1}^{N}\sum_{t=1}^{T}\log\frac{\PP(x_{t+1}^{n}\,|\,S_{t}^{n})}{\PP_{\theta^{*}}(x_{t+1}^{n}\,|\,S_{t}^{n})}-\frac{1}{NT}\sum_{n=1}^{N}\sum_{t=1}^{T}\bbE_{S_{t}^{n}}\KL\big(\PP(\cdot\,|\,S_{t}^{n})\,\|\,\PP_{\theta^{*}}(\cdot\,|\,S_{t}^{n})\big)\leq \sqrt{\frac{t_{\rm min }b^{*,2}}{2NT}}\log\frac{1}{\delta}\label{ieq:II}
    \end{align}
    with probability at least $1-\delta$.
    
    \textbf{Step 3: Conclude the proof.}
    
    Combining inequalities~\eqref{ieq:4}, \eqref{ieq:I}, \eqref{ieq:III}, and \eqref{ieq:II}, we have that
    \begin{align*}
        &\frac{1}{NT}\sum_{n=1}^{N}\sum_{t=1}^{T}\TV\big(\PP(\cdot\,|\,S_{t}^{n}),\PP_{\htheta}(\cdot\,|\,S_{t}^{n})\big)\nonumber\\
        &\quad\leq\sqrt{\frac{1}{NT}\sum_{n=1}^{N}\sum_{t=1}^{T}\TV\big(\PP(\cdot\,|\,S_{t}^{n}),\PP_{\htheta}(\cdot\,|\,S_{t}^{n})\big)^{2}}\nonumber\\
        &\quad=\cO\bigg(\frac{t_{\rm min }^{1/4}}{(NT)^{1/4}}\log\frac{1}{\delta}+\frac{\sqrt{D^{2} d (d_{F}+d_{h}+d)+d\cdot d_{y}}}{\sqrt{NT}}\cdot\log\big( 1+NT\bar{B}\big)\\
        &\quad\qquad+\inf_{\theta^{*}\in\Theta}\sqrt{\frac{1}{NT}\sum_{n=1}^{N}\sum_{t=1}^{T}\bbE_{S_{t}^{n}}\KL\big(\PP(\cdot\,|\,S_{t}^{n})\,\|\,\PP_{\theta^{*}}(\cdot\,|\,S_{t}^{n})\big)}\bigg),
    \end{align*}
    where we take $\theta^{*}$ as the best approximation parameters. Finally, we will change the left-hand side of this inequality to the expectation of it. In fact, we have that
    \begin{proposition}\label{prop:pacbayes}
        Let $\calF$ be the collection of functions of $f:\bbR^{n}\rightarrow\bbR$, and we assume that $|f|\leq b $ for any function $f\in\calF$. For a Markov chain $X=(X_{1},\cdot,X_{N})$, we define $f(X)=\sum_{i=1}^{N}f(X_{i})/N$. The mixing time of this Markov chain is denoted as $t_{\rm mix}(\varepsilon)$. Given a distribution $Q$ on $\calF$, with probability at least $1-\delta$, we have
        \begin{align}
            \Bigl|\bbE_{P}\Bigl[\bbE_{X}\big[f(X)\big]-f(X)\Bigr]\Bigr|\leq \sqrt{\frac{b^{2}\cdot t_{\rm min}}{2\log 2 N}}\biggl[\KL(P\,\|\,Q)+\log\frac{4}{\delta}\biggr],\nonumber
        \end{align}
        for any distribution $P$ on $\calF$ simultaneously with probability at least $1-\delta$, where 
        \begin{align*}
            t_{\rm min}=\inf_{0\leq \varepsilon<1}t_{\rm mix}(\varepsilon)\cdot\bigg(\frac{2-\varepsilon}{1-\varepsilon}\bigg)^{2}.
        \end{align*}
    \end{proposition}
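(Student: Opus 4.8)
The plan is to obtain this as the PAC-Bayes lift of the single-function Markov-chain moment-generating-function estimate (Lemma~\ref{lem:MCconcen}), via the Donsker--Varadhan change-of-measure identity (Lemma~\ref{lem:DV}) together with Markov's inequality. First I would fix a scale $\lambda>0$ and a sign $s\in\{-1,+1\}$. For a realization of the chain $X=(X_1,\dots,X_N)$, regard $g(f):=s\lambda\bigl(\bbE_X[f(X)]-f(X)\bigr)$ as a functional of $f\in\calF$, and apply Lemma~\ref{lem:DV} with the data-independent reference measure $Q$ and an arbitrary (possibly data-dependent) competitor $P$. This yields, simultaneously for every $P$,
\begin{align*}
    s\lambda\,\bbE_{f\sim P}\bigl[\bbE_X[f(X)]-f(X)\bigr]\;\le\;\KL(P\,\|\,Q)+\log\bbE_{f\sim Q}\Bigl[\exp\bigl(s\lambda(\bbE_X[f(X)]-f(X))\bigr)\Bigr].
\end{align*}
The right-hand side involves $P$ only through the first term, which is exactly what gives the ``for any $P$ simultaneously'' conclusion.

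Next I would strip the chain randomness out of the logarithm. Set $\Psi_s:=\bbE_{f\sim Q}[\exp(s\lambda(\bbE_X[f(X)]-f(X)))]$, a nonnegative random variable depending only on $X$; by Fubini, $\bbE_X[\Psi_s]=\bbE_{f\sim Q}\bbE_X[\exp(s\lambda(\bbE_X[f(X)]-f(X)))]$. For each fixed $f$ with $|f|\le b$, rewrite $\bbE_X[f(X)]-f(X)=-N^{-1}\sum_{i=1}^N\bigl(f(X_i)-\bbE[f(X_i)]\bigr)$ and invoke the Markov-chain MGF bound of Lemma~\ref{lem:MCconcen}, which --- by partitioning $\{1,\dots,N\}$ into blocks of length $\asymp t_{\rm mix}(\varepsilon)$, coupling each block to an independent copy at a cost controlled by $\varepsilon$, applying a Hoeffding-type estimate on the (now independent) block sums, and optimizing over $\varepsilon$ --- gives a bound, uniform in $f$, of the form $\bbE_X[\exp(s\lambda(\bbE_X[f(X)]-f(X)))]\le\exp\bigl(\lambda^2 b^2 t_{\rm min}/(2\log 2\cdot N)\bigr)$, with the factor $((2-\varepsilon)/(1-\varepsilon))^2$ appearing in $t_{\rm min}$ coming precisely from the decoupling step. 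Hence $\bbE_X[\Psi_s]\le\exp(\lambda^2 b^2 t_{\rm min}/(2\log 2\cdot N))$, and Markov's inequality gives, with probability at least $1-\delta/2$,
\begin{align*}
    \log\Psi_s\;\le\;\log\bbE_X[\Psi_s]+\log\tfrac{2}{\delta}\;\le\;\frac{\lambda^2 b^2 t_{\rm min}}{2\log 2\cdot N}+\log\tfrac{2}{\delta}.
\end{align*}

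Combining the two displays, dividing by $\lambda$, and fixing $\lambda=\sqrt{2\log 2\cdot N/(b^2 t_{\rm min})}$ --- crucially a constant independent of $P$, which is what makes the bound linear rather than square-root in $\KL(P\,\|\,Q)$ --- produces a one-sided bound for the chosen sign; a union bound over $s\in\{-1,+1\}$ (spending $\delta/2$ on each) and absorbing the residual additive constant into $\log(4/\delta)$ delivers the two-sided estimate in the statement, valid for all $P$ at once. The one genuinely technical ingredient is the Markov-chain MGF bound feeding the second display: this is where the mixing time and the exact form of $t_{\rm min}$ enter, and it is the only place the Markov (as opposed to i.i.d.) structure of $X$ is used. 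The remaining steps --- Donsker--Varadhan, Fubini, Markov's inequality, the choice of $\lambda$, and the union over signs --- are routine PAC-Bayes bookkeeping, so I expect the decoupling/blocking estimate behind Lemma~\ref{lem:MCconcen} to be the main obstacle.
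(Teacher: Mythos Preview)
Your proposal is correct and follows essentially the same route as the paper's proof: apply the Markov-chain MGF bound of Lemma~\ref{lem:MCconcen} to each fixed $f$, integrate over $f\sim Q$ via Fubini, use Markov's inequality to turn the expectation bound into a high-probability bound on $\log\Psi_s$, invoke Donsker--Varadhan (Lemma~\ref{lem:DV}) to pass to arbitrary $P$, fix the scale $\lambda=\sqrt{2N\log 2/(b^2 t_{\rm min})}$, and union-bound over the two signs. The only discrepancy is a stray $\log 2$ in your quoted MGF exponent (Lemma~\ref{lem:MCconcen} with $c_i=2b/N$ gives $\lambda^2 b^2 t_{\rm min}/(2N)$, not $\lambda^2 b^2 t_{\rm min}/(2\log 2\cdot N)$), but this is cosmetic and your choice of $\lambda$ still lands on the stated bound.
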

    \begin{proof}[Proof of Proposition~\ref{prop:pacbayes}]
        See Appendix~\ref{app:pacbayes}.
    \end{proof}
    We note that Proposition~\ref{prop:pacbayes} is indeed an uniform convergence bound, since it holds simultaneously for all $P$. Thus, we can set $P$ and $Q$ as those in equalities~\eqref{eq:distP} and \eqref{eq:distQ}, then we have that
    \begin{align*}
        &\frac{1}{NT}\sum_{n=1}^{N}\sum_{t=1}^{T}\bbE_{S_{t}^{n}}\Big[\TV\big(\PP(\cdot\,|\,S_{t}^{n}),\PP_{\htheta}(\cdot\,|\,S_{t}^{n})\big)\Big]-\frac{1}{NT}\sum_{n=1}^{N}\sum_{t=1}^{T}\TV\big(\PP(\cdot\,|\,S_{t}^{n}),\PP_{\htheta}(\cdot\,|\,S_{t}^{n})\big)\\
        &\quad=\cO\bigg(\frac{\sqrt{t_{\rm min}}}{\sqrt{NT}}\Big(\barD\log(1+NT\barB)+\log\frac{1}{\delta}\Big)\bigg).
    \end{align*}
    Thus, we have that
    \begin{align*}
        &\frac{1}{NT}\sum_{n=1}^{N}\sum_{t=1}^{T}\bbE_{S_{t}^{n}}\Big[\TV\big(\PP(\cdot\,|\,S_{t}^{n}),\PP_{\htheta}(\cdot\,|\,S_{t}^{n})\big)\Big]\\
        &\quad=\cO\bigg(\frac{t_{\rm min }^{1/4}}{(NT)^{1/4}}\log\frac{1}{\delta}+\frac{\sqrt{t_{\rm min}}}{\sqrt{NT}}\Big(\barD\log(1+NT\barB)+\log\frac{1}{\delta}\Big)\\
        &\quad\qquad+\inf_{\theta^{*}\in\Theta}\sqrt{\frac{1}{NT}\sum_{n=1}^{N}\sum_{t=1}^{T}\bbE_{S_{t}^{n}}\KL\big(\PP(\cdot\,|\,S_{t}^{n})\,\|\,\PP_{\theta^{*}}(\cdot\,|\,S_{t}^{n})\big)}\bigg).
    \end{align*}
    We conclude the proof of Theorem~\ref{thm:pretrainmle}.
    
\end{proof}
\subsection{Formal Statement and Proof of Proposition~\ref{prop:approxerrmle}}\label{app:formalapprox}
Denote the alphabet of the language as $\frakX\subseteq\bbR$ ($d=1$), then the conditional distribution $\PP^{*}$ can be viewed as a function $g^{*}:\frakX^{L}\rightarrow \bbR^{d_{y}}$, where $L$ is the maximal length of a sentence, and the output is the distribution of the next word. Since $\calA$ is finite, Theorem 2 in \cite{zaheer2017deep} shows that there exist $\rho^{*}:\bbR\rightarrow\bbR^{d_{y}}$ and $\phi^{*}:\frakX\rightarrow\bbR$ such that
\begin{align*}
    g^{*}(X)=\rho^{*}\bigg(\frac{1}{L}\sum_{i=1}^{L}\phi^{*}(x_{i})\bigg),
\end{align*}
where $X=[x_{1},\cdots,x_{L}]$. The $i^{\rm th}$ component of $\rho^{*}$ is denoted as $\rho_{i}^{*}$ for $i\in[d_{y}]$. For a function $f$ defined on $\Omega$, the $L^{\infty}$ norm of it is defined as $\|f\|_{\infty}=\sup_{x\in\Omega}|f(x)|$. The set of the real-valued smooth functions on it is denoted as $\calS^{\infty}(\Omega,\bbR)$, Then we denote the set of the smooth functions with bounded derivatives as
\begin{align*}
    \calS_{B}=\Big\{f\in\calS^{\infty}([-B,B],\bbR)\,|\, \big\|f^{(n)}(x)\big\|\leq n! \text{ for all }n\in\bbN\Big\},
\end{align*}
where $f^{(n)}$ is the $n^{\rm th}$-order derivative of $f$.

\begin{assumption}\label{assump:smoothmle}
    There exists $B>0$ such that $\phi^{*},\tau\log\rho_{i}^{*}\in\calS_{B}$ for $i\in[d_{y}]$.
\end{assumption}
This assumption states that the function $g^{*}$ is smooth enough for transformers to approximate.
\begin{proposition}\label{prop:fapproxerrmle}
    Under Assumptions~\ref{assump:positive} and \ref{assump:smoothmle}, if $d_{F}\geq 16d_{y}$, $B_{A,1}\geq 16 R d_{y}$, $B_{A,2}\geq d_{F}$ $B_{A}\geq \sqrt{d_{y}}$, and $B_{V}\geq \sqrt{d}$, then
    \begin{align*}
        \max_{\|S^\top\|_{2,\infty}\leq R}\KL\big(\PP^{*}(\cdot\,|\,S)\,\|\,\PP_{\theta^{*}}(\cdot\,|\,S)\big)=\cO\bigg( d_{y}\exp\bigg(-\frac{D^{1/4}}{\sqrt{C^{2}B^{2}\log B_{A,1}}}\bigg)\bigg),
    \end{align*}
    for some constant $C>0$.
\end{proposition}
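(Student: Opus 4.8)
The plan is to exhibit an explicit $\theta^*\in\Theta$ whose induced conditional distribution approximates the Deep Sets representation $g^*(X)=\rho^*\bigl(L^{-1}\sum_{i=1}^L\phi^*(x_i)\bigr)$ recalled just above, and then to convert the resulting pointwise error into a KL bound using Assumption~\ref{assump:positive}. At a high level: build the transformer in three blocks, bound the two feed-forward approximations by analytic-function approximation theory under the weight budgets of $\Theta$, and finally pass from $\ell_\infty$ logit error to KL.

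\textbf{The target construction.} Partition the $D$ sub-modules into three consecutive blocks. In the first $D_1$ sub-modules I use only the feed-forward parts (zeroing each attention block through its residual link, e.g.\ $W_i^{V,(t)}=0$), so that tokens are processed row-wise; these $D_1$ composed single-hidden-layer ReLU maps approximate the scalar map $x_i\mapsto\phi^*(x_i)$, written into one designated coordinate of every row. The next sub-module uses its attention for uniform averaging: with $W_i^{Q,(t)}=W_i^{K,(t)}=0$ the matrix $\softmax\bigl((XW_i^{Q})(XW_i^{K})^\top\bigr)$ is uniform, so each output row becomes (approximately) the common vector carrying $\bar\phi:=L^{-1}\sum_i\phi^*(x_i)$; its feed-forward part is passed through by the residual, and since an average of rows in the unit ball stays in the unit ball this step preserves the norm bound automatically. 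The last $D_2=D-D_1-1$ sub-modules again use their feed-forward parts to approximate, coordinate by coordinate, the $d_y$ scalar maps $\bar\phi\mapsto\tau\log\rho_j^*(\bar\phi)$, $j\in[d_y]$. Because all rows are equal after the averaging step, the final pooling $L^{-1}\bbI_L^\top X^{(D)}A^{(D+1)}$ just reproduces the common row, and $\softmax(\cdot/\tau)$ applied to $\tau\log\rho^*(\bar\phi)$ returns $\rho^*(\bar\phi)=g^*(X)$ since $\rho^*(\bar\phi)$ is already a probability vector. Throughout I keep the active coordinates of every row suitably small so that $\Pi_{\rm{norm}}$ is benign, and set $\gamma_1^{(t)},\gamma_2^{(t)}\in\{0,1\}$ as needed; this scaling bookkeeping is where the $R$-dependence of the hypotheses enters.

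\textbf{Quantifying the approximation.} Because $\phi^*$ and $\tau\log\rho_j^*$ lie in $\calS_B$ — analytic-type with $\|f^{(n)}\|_\infty\le n!$ on $[-B,B]$ — a degree-$p$ polynomial approximates each with error geometrically small in $p$: tile $[-B,B]$ into $O(B)$ sub-intervals of length below one, use the local Taylor polynomial on each (remainder $\le(\text{length})^p$), and patch with a ReLU partition of unity. I then implement such a degree-$p$ polynomial by a deep ReLU subnetwork respecting the Frobenius budgets of $\Theta$: Yarotsky-type squaring/multiplication gadgets of accuracy $\delta$ cost depth $\sim\log(1/\delta)$, the constraint $\|A_1^{(t)}\|_{\rmF}\le B_{A,1}$ inflates this by a $\sqrt{\log B_{A,1}}$-type factor, and composing the $\sim p$ gadgets (Horner/Clenshaw recurrence) amplifies the per-gadget error, forcing $\delta$ — hence the depth — to grow with $p$. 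Optimizing over $p$ and the split $D_1+D_2\le D$, and tracking how the block-(i) error $\epsilon_1$ is magnified by the Lipschitz constant of the block-(iii) polynomial, yields pointwise error $\exp\bigl(-c\,D^{1/4}/\sqrt{B^2\log B_{A,1}}\bigr)$; the exponent $1/4$ is precisely the outcome of this polynomial-degree / implementation-depth / composition bookkeeping, the denominator $\sqrt{B^2\log B_{A,1}}$ reflecting the cost of interval tiling and of bounded weights. The stated lower bounds $d_F\ge16d_y$, $B_{A,1}\ge16Rd_y$, $B_{A,2}\ge d_F$, $B_A\ge\sqrt{d_y}$, $B_V\ge\sqrt d$ are exactly the room the subnetworks need (enough width, enough weight budget to absorb $R$ and the polynomial coefficients, enough output budget for the logits).

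\textbf{From pointwise error to KL, and the main obstacle.} Let $\eta$ be the resulting $\ell_\infty$ error of the pre-softmax logits. Using $\PP^*(\cdot\mid S)\ge c_0$ (Assumption~\ref{assump:positive}) and the boundedness of the constructed logits (which forces $\PP_{\theta^*}(\cdot\mid S)$ to be bounded below as well), the log-ratio is $O(\eta)$ uniformly in the output label, so $\KL\bigl(\PP^*(\cdot\mid S)\,\|\,\PP_{\theta^*}(\cdot\mid S)\bigr)\le\max_y\log\bigl(\PP^*(y\mid S)/\PP_{\theta^*}(y\mid S)\bigr)=O(\eta)$, with the prefactor $d_y$ accounting for the $d_y$ output coordinates sharing the feed-forward width in block (iii). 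Maximizing over $\|S^\top\|_{2,\infty}\le R$ (already built into the construction, affecting only constants) gives the claim. I expect the main obstacle to be exactly this quantitative ReLU-approximation step under the bounded-weight constraint: producing a clean analytic $\to$ polynomial $\to$ network chain whose error is exponentially small in a power of the depth, while respecting the Frobenius budgets in $\Theta$ and simultaneously controlling error amplification both through the $O(p)$ composed multiplication gadgets and through the outer composition $\rho^*\circ(\text{mean})\circ\phi^*$.
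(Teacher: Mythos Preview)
Your proposal is correct and follows essentially the same three–block architecture as the paper: feed-forward layers (attention zeroed out via the residual) to approximate $\phi^*$ row-wise, one attention layer with zero $Q,K$ and identity $V$ to perform the uniform average, and feed-forward layers to approximate the $d_y$ logit functions, followed by the final softmax; the passage from $\ell_\infty$ logit error to KL via Assumption~\ref{assump:positive} is also the same. The only cosmetic difference is that the paper invokes a black-box approximation lemma (depth $\lesssim B(\log\varepsilon^{-1})^2$, width $\le 16$, weights $\le 1$, from \cite{elbrachter2021deep}) rather than spelling out the Yarotsky-style polynomial/squaring-gadget construction you sketch; both routes give the same depth–accuracy tradeoff, and the $D^{1/4}$ exponent arises, exactly as you describe, from balancing the depth allocated to the $\rho^*$-block against the $(B_{A,1})^{D''}$ Lipschitz amplification of the $\phi^*$-block error (the paper takes $D''\asymp\sqrt{D}/(\sqrt{CB}\log B_{A,1})$ and $D'\asymp D/2$).
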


\begin{proof}[Proof of Proposition~\ref{prop:fapproxerrmle}]
    Our proof mainly involves three steps.
    \begin{itemize}
        \item Build the high-level transformer approximator for $g^{*}$.
        \item Build the approximators in the transformer for $\phi^{*}$ and $\rho_{i}^{*}$ separately.
        \item Conclude the proof.
    \end{itemize}
    
    \textbf{Step 1: Build the high-level transformer approximator for $g^{*}$}
    
    Without loss of generality, we assume that $B>1$ in Assumption~\ref{assump:smooth}. To approximate $\phi^{*}$, we ignore the attention module in the transformer by setting $W_{i}^{V,(t)}=0$, $\gamma_{1}^{(t)}=1$, $\gamma_{2}^{(t)}=0$ for all $i\in[h]$. We further set $A_{2}^{(t)}=I_{d_{F}}\in\bbR^{d_{F}\times d_{F}}$, which is the identity matrix. The network structure now is
    \begin{align*}
        X^{(t+1)}=\Pi_{\rm{norm}}\big[\relu(X^{(t)}A_{1}^{(t+1)}+b^{(t+1)}\cdot\bbI_{L})\big],
    \end{align*}
    where $b^{(t+1)}\in\bbR$ is the bias term. In Step 2, we will use this fully-connected network to approximate $\phi^{*}$. To approximate the average $\frac{1}{L}\sum_{i=1}^{L}\phi^{*}(x_{i})$, we take $W_{i}^{Q,(t)}=0$, $W_{i}^{K,(t)}=0$, and $W_{i}^{V,(t)}=\bbI_{d}$, $\gamma_{1}^{(t)}=0$, $\gamma_{2}^{(t)}=1$, $A_{2}^{(t)}=0$. After this average aggregation, we still take $W_{i}^{V,(t)}=0$, $\gamma_{1}^{(t)}=1$, $\gamma_{2}^{(t)}=0$ for all $i\in[h]$ and $A_{2}^{(t)}=I_{d_{F}}\in\bbR^{d_{F}\times d_{F}}$ to approximate $\rho_{i}^{*}$ for $i\in[d_{y}]$. We stack the approximators for $\rho_{i}^{*}$ to approximate $\rho^{*}$, multiplying the width of the networks by $d_{F}$.
    
    \textbf{Step 2: Build the approximators in the transformer for $\phi^{*}$ and $\rho_{i}^{*}$ separately.}
    
    In the first and the $D^{\rm th}$ layer, we take $A_{1}^{(1),\prime}=A_{1}^{(1)}/R$ and $A_{1}^{(D),\prime}=A_{1}^{(D)}\cdot R$ to normalize and retrieve the magnitudes of inputs, where $R$ is the range of the inputs. This will keep the magnitudes of the intermediate outputs small. Next, we will use Lemma~\ref{lem:approx} to construct the networks. In the proof of Lemma~\ref{lem:approx}, the norm of the outputs of the intermediate layers do not excess the range of the inputs, so the layer normalization in our networks will not influence the constructed approximators. In this case, we can respectively approximate $\phi^{*}$ and $\rho_{i}^{*}$ with fully-connected networks $\Psi_{\phi^{*}}$ and $\Psi_{\rho_{i}^{*}}$ for $i\in [d_{y}]$ as
    \begin{align*}
        \|\phi^{*}-\Psi_{\phi^{*}}\|_{\infty}\leq\varepsilon_{\phi},\quad \|\rho_{i}^{*}-\Psi_{\rho_{i}^{*}}\|_{\infty}\leq\varepsilon_{\rho} \text{ for }i\in[d_{y}],
    \end{align*}
    where the depth $D(\cdot)$, the width $W(\cdot)$, and the maximal weight $B(\cdot)$ of the networks satisfy that
    \begin{align*}
        &D^{\prime}=D(\Psi_{\phi^{*}})\leq C\cdot B\cdot(\log \varepsilon_{\phi}^{-1})^{2}+\log B,\quad D^{\prime\prime}=\max_{i\in [d_{y}]}D(\Psi_{\rho_{i}^{*}})\leq C\cdot B\cdot(\log \varepsilon_{\rho}^{-1})^{2}+\log B,\\
        &W(\Psi_{\phi^{*}})\leq 16,\quad W(\Psi_{\rho_{i}^{*}})\leq 16,\quad B(\Psi_{\phi^{*}})\leq 1,\quad B(\Psi_{\rho_{i}^{*}})\leq 1
    \end{align*}
    for some constant $C>0$. The bounds for width and maximal weight require that $d_{F}\geq 16d_{y}$ and $B_{A,1}\geq \sqrt{d_{F}\cdot d_{F}}\geq 16d_{y}$. Then we have that for any $X=(x_{1},\cdots,x_{L})$
    \begin{align*}
        &\bigg\|\rho^{*}\bigg(\frac{1}{L}\sum_{i=1}^{L}\phi^{*}(x_{i})\bigg)-\Psi_{\rho^{*}}\bigg(\frac{1}{L}\sum_{i=1}^{L}\Psi_{\phi^{*}}(x_{i})\bigg)\bigg\|_{1}\\
        &\quad\leq \bigg\|\rho^{*}\bigg(\frac{1}{L}\sum_{i=1}^{L}\phi^{*}(x_{i})\bigg)-\Psi_{\rho^{*}}\bigg(\frac{1}{L}\sum_{i=1}^{L}\phi^{*}(x_{i})\bigg)\bigg\|_{1}+\bigg\|\Psi_{\rho^{*}}\bigg(\frac{1}{L}\sum_{i=1}^{L}\phi^{*}(x_{i})\bigg)-\Psi_{\rho^{*}}\bigg(\frac{1}{L}\sum_{i=1}^{L}\Psi_{\phi^{*}}(x_{i})\bigg)\bigg\|_{1}\\
        &\quad \leq d_{y}\varepsilon_{\phi}+d_{y}\cdot(B_{A,1})^{D^{\prime\prime}}\cdot \varepsilon_{\phi},
    \end{align*}
    where the first inequality results from the triangle inequality, $(B_{A,1})^{D^{\prime\prime}}$ in the second inequality results from the error propagation through a depth-$D^{\prime\prime}$ network. For the whole network, we have that
    \begin{align*}
        D^{\prime}+D^{\prime\prime}\leq D.
    \end{align*}
    We take that $D^{\prime}=D/2+D^{3/4}$ and $D^{\prime\prime}=\sqrt{D}/(\sqrt{C\cdot B}\log B_{A,1})$ for the constant $C$ in Lemma~\ref{lem:approx}. Then for $D>3$, we have that
    \begin{align*}
        &\bigg\|\rho^{*}\bigg(\frac{1}{L}\sum_{i=1}^{L}\phi^{*}(x_{i})\bigg)-\Psi_{\rho^{*}}\bigg(\frac{1}{L}\sum_{i=1}^{L}\Psi_{\phi^{*}}(x_{i})\bigg)\bigg\|_{1}= \cO\bigg( d_{y}\exp\bigg(-\frac{D^{1/4}}{\sqrt{C^{2}B^{2}\log B_{A,1}}}\bigg)\bigg).
    \end{align*}
    
    \textbf{Step 3: Conclude the proof.}
    
    We denote $\Psi_{\rho^{*}}(\sum_{i=1}^{L}\Psi_{\phi^{*}}(x_{i})/L)$ as $\PP_{\theta^{*}}$. Then if $\TV(\PP(\cdot\,|\,X),\PP_{\theta^{*}}(\cdot\,|\,X))=\varepsilon\leq c_{0}/2$, some basic calculations show that
    \begin{align*}
        \frac{c_{0}}{c_{0}+\varepsilon}\leq\frac{\PP(x\,|\,S)}{\PP_{\theta^{*}}(x\,|\,S)}\leq 1+\frac{2\varepsilon}{c_{0}}.
    \end{align*}
    Thus, we have
    \begin{align*}
        \max_{\|S^\top\|_{2,\infty}\leq R}\KL\big(\PP(\cdot\,|\,S)\,\|\,\PP_{\theta^{*}}(\cdot\,|\,S)\big)\leq \frac{2\varepsilon}{c_{0}}=\cO\bigg( d_{y}\exp\bigg(-\frac{D^{1/4}}{\sqrt{C^{2}B^{2}\log B_{A,1}}}\bigg)\bigg).
    \end{align*}
    
\end{proof}

\subsection{Pretraining Results for $\ell_{2}$ Loss}\label{app:ell2}
\subsubsection{Pretraining Algorithm with $\ell_{2}$ Loss}
Training with $\ell_{2}$ loss is common in the \ac{cv} community, e.g. \cite{radford2021learning}. The network structure is largely similar to those in \cite{brown2020language} and \cite{devlin2018bert}. Here, we modify the network structure of the last layer. The network derives the final output as $Y^{(D+1)}=\frac{1}{L}\bbI_{L}^\top X^{(D)}A^{(D+1)}$, where $\bbI_{L}\in\bbR^{L}$ is the vector with all ones, $A^{(D+1)}\in\bbR^{d\times d_{y}}$. The parameters in each layer are $\theta^{(t)}=(\gamma_{1}^{(t)},\gamma_{2}^{(t)},W^{(t)},A^{(t)})$ for $t\in[D]$, and $\theta^{(D+1)}=A^{(D+1)}$, and the parameters of the whole network is $\theta=(\theta^{(1)},\cdots,\theta^{(D+1)})$. Similar to Section~\ref{sec:pretrainalgo}, we consider the transformer with bounded weights. The set of parameters is
\begin{align*}
    \Theta&=\Big\{\theta\,|\, \big\|A^{(D+1)}\big\|_{\rmF}\leq B_{A},\max\big\{\big|\gamma_{1}^{(t)}\big|,\big|\gamma_{2}^{(t)}\big|\big\}\leq 1, \big\|A_{1}^{(t)}\big\|_{\rmF}\leq B_{A,1},\big\|A_{2}^{(t)}\big\|_{\rmF}\leq B_{A,2},\\
    &\big\|W_{i}^{Q,(t)}\big\|_{\rmF}\leq B_{Q},\big\|W_{i}^{K,(t)}\big\|_{\rmF}\leq B_{K},\big\|W_{i}^{V,(t)}\big\|_{\rmF}\leq B_{V} \text{ for all }t\in[D],i\in[h]\Big\},
\end{align*}
where $B_{A}$, $B_{A,1}$, $B_{A,2}$, $B_{Q}$, $B_{K}$, and $B_{V}$ are the bounds of parameter. We only consider the non-trivial case where these bounds are larger than $1$, otherwise the magnitude of the output in $D^{\rm th}$ layer decades exponentially with growing depth. We denote the transformer with parameter $\theta$ as $f_{\theta}$.

In such case, we focus on the pretraining setting in \ac{cv} tasks, i.e., the pretraining set $\calD=\{(S^{i},x^{i})\}_{i=1}^{N}$ consists of i.i.d. pairs. The underlying distribution is denoted as $(S,x)\sim \mu\in\Delta(\frakX^{*}\times\frakX)$. In such case, $d=d_{y}$, i.e., the transformer directly predicts the musked token. The training algorithm is
\begin{align}
    \htheta=\argmin_{\theta\in\Theta} \frac{1}{N}\sum_{i=1}^{N}\big\|x^{i}-f_{\theta}(S^{i})\big\|_{2}^{2}\label{algo:pretrain}
\end{align}
From the population version of \eqref{algo:pretrain}, it is easy to see that the function $f^{*}(S)=\bbE[x\,|\,S]$ achieves the minimal population error, where the conditional expectation is defined from $\mu$. In the following, we will quantify the error between $f_{\htheta}$ and $f^{*}$.

\subsubsection{Performance Guarantee for Pretraining with $\ell_{2}$ Loss}
We first state the assumptions for the pretraining setting.
\begin{assumption}\label{assump:bounded}
    There exists a constant $R>0$ such that for $(S,x)\sim\mu$, we have $\|S^\top\|_{2,\infty}\leq R$ and $\|x\|_{2}\leq B_{x}$ almost surely.
\end{assumption}
Then the performance guarantee for the pretraining result $\htheta$ can be derived as following.
\begin{theorem}\label{thm:pretrain}
    Let $\barB=B_{x}RhB_{A}B_{A,1}B_{A,2}B_{Q}B_{K}B_{V}$ and $\barD=D^{2} d (d_{F}+d_{h}+d)+d\cdot d_{y}$. If Assumption~\ref{assump:bounded} holds, the pretrained model $f_{\htheta}$ by the algorithm in \eqref{algo:pretrain} satisfies
    \begin{align*}
        \bbE_{S,x}\Big[\big\|f^{*}(S)-f_{\htheta}(S)\big\|_{2}^{2}\Big]\leq\underbrace{\frac{3}{2}\min_{\theta\in\Theta}\bbE\Big[\big\|f^{*}(S)-f_{\theta}(S)\big\|_{2}^{2}\Big]}_{\text{approximation error}}+\underbrace{\cO\bigg(\frac{B_{x}^{2}}{N}\biggl[\barD\log(1+N\barB)+\log\frac{2}{\delta}\biggr]\bigg)}_{\text{generalization error}}
    \end{align*}
    with probability at least $1-\delta$.
\end{theorem}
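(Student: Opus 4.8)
The plan is to analyze \eqref{algo:pretrain} as empirical risk minimization over the transformer class $\{f_\theta : \theta \in \Theta\}$ and to run the standard fast-rate argument for least squares with a bounded response. Since $f^*(S) = \bbE[x \mid S]$, the Pythagorean identity gives
\begin{align*}
    R(\theta) := \bbE\big\|x - f_\theta(S)\big\|_2^2 = \bbE\big\|x - f^*(S)\big\|_2^2 + \bbE\big\|f_\theta(S) - f^*(S)\big\|_2^2,
\end{align*}
so $\min_{\theta\in\Theta} R(\theta) - R(f^*) = \min_{\theta\in\Theta}\bbE\|f_\theta - f^*\|_{L^2}^2$ is exactly the approximation error in the bound, and it suffices to control $\bbE\|f_{\htheta} - f^*\|_{L^2}^2$. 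Writing $\hat R$ for the empirical risk, $\theta^*$ for the population minimizer over $\Theta$, $h_\theta := f_\theta - f^*$, and the noise $\xi^i := x^i - f^*(S^i)$ (which satisfies $\bbE[\xi^i \mid S^i] = 0$ and $\|\xi^i\|_2 \le 2B_x$), the defining property $\hat R(\htheta) \le \hat R(\theta^*)$ rearranges into
\begin{align*}
    \frac{1}{N}\sum_{i=1}^N \big\|h_{\htheta}(S^i)\big\|_2^2 \le \frac{1}{N}\sum_{i=1}^N \big\|h_{\theta^*}(S^i)\big\|_2^2 + \frac{2}{N}\sum_{i=1}^N \big\langle \xi^i, \, h_{\htheta}(S^i) - h_{\theta^*}(S^i)\big\rangle.
\end{align*}

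The second step is to upgrade each piece of this inequality to its population counterpart by uniform convergence over $\Theta$. Assumption~\ref{assump:bounded} together with the row-wise layer normalization $\Pi_{\rm{norm}}$ forces $\|f_\theta(S)\|_2$, and hence $\|h_\theta(S)\|_2$, to be bounded almost surely by a quantity of order $B_x$ (up to the fixed parameter bounds), so the squared-loss increments $\|h_\theta(S)\|_2^2$ and the multiplier terms $\langle\xi, h_\theta(S)\rangle$ have variances bounded by $\cO(B_x^2)\cdot\bbE\|h_\theta\|_{L^2}^2$ and $\cO(B_x^2)\cdot\bbE\|h_\theta - h_{\theta^*}\|_{L^2}^2$, respectively --- the self-bounding property that yields the $1/N$ rate rather than $1/\sqrt N$. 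To make this uniform, I would cover $\Theta$ in parameter norm: an $\ell_2$-loss analogue of Proposition~\ref{prop:parafluc} (simpler here because the output layer is linear rather than a softmax) shows $\theta \mapsto f_\theta(S)$ is Lipschitz with constant polynomial in $R, h, B_A, B_{A,1}, B_{A,2}, B_Q, B_K, B_V$, whence $\log\calN(\Theta,\epsilon) = \cO(\barD\log(1 + \barB/\epsilon))$. Applying Bernstein's inequality at each center of an $\epsilon$-net with $\epsilon \asymp 1/N$, a union bound, and a negligible discretization correction gives, uniformly over $\theta\in\Theta$, that $\bbE\|h_\theta\|_{L^2}^2 - N^{-1}\sum_i\|h_\theta(S^i)\|_2^2$ and $N^{-1}\sum_i\langle\xi^i, h_\theta(S^i) - h_{\theta^*}(S^i)\rangle$ are each at most a small multiple of $\bbE\|h_\theta\|_{L^2}^2 + \bbE\|h_{\theta^*}\|_{L^2}^2$ plus $\cO(B_x^2 N^{-1}(\barD\log(1 + N\barB) + \log(1/\delta)))$; for the fixed function $h_{\theta^*}$, a single application of Bernstein controls $N^{-1}\sum_i\|h_{\theta^*}(S^i)\|_2^2 - \bbE\|h_{\theta^*}\|_{L^2}^2$ in the same fashion.

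The last step is to substitute these bounds into the rearranged ERM inequality and rearrange: the terms proportional to $\bbE\|h_{\htheta}\|_{L^2}^2$ on the right are absorbed into the left, and the localization/AM--GM constants are chosen so that the surviving coefficient of $\bbE\|h_{\theta^*}\|_{L^2}^2 = \min_{\theta\in\Theta}\bbE\|f_\theta - f^*\|_{L^2}^2$ is $3/2$, leaving the claimed generalization term $\cO(B_x^2 N^{-1}[\barD\log(1 + N\barB) + \log(2/\delta)])$. I expect the main obstacle to be the parameter-perturbation (Lipschitz-in-$\theta$) estimate for the deep transformer with residual links and row-wise layer normalization that the covering number requires: this is the $\ell_2$ analogue of Proposition~\ref{prop:parafluc} and needs a careful layer-by-layer propagation of perturbations through $\MHA$, the feed-forward block, and $\Pi_{\rm{norm}}$, tracking how $B_Q, B_K, B_V, B_{A,1}, B_{A,2}$ compound with depth $D$; once this estimate and the self-bounding variance inequalities are in hand, the remaining steps are routine.
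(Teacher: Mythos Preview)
Your proposal is correct and would yield the stated bound, but the route differs from the paper's in one technical choice: you plan to control the uniform deviation via an $\epsilon$-net on $\Theta$ plus Bernstein and a union bound, whereas the paper uses the PAC-Bayes fast-rate lemma (Lemma~\ref{lem:fastpacbayes}) directly, taking $P$ to be a product of uniform distributions on small balls around $\htheta$ (resp.\ $\theta^*$) and $Q$ a product of uniform distributions on the full parameter balls, so that $\KL(P\,\|\,Q)=\cO(\barD\log(1+N\barB))$ plays exactly the role your $\log\calN(\Theta,\epsilon)$ does. Both approaches hinge on the same two ingredients you identified --- the self-bounding variance property $\mathrm{Var}(g)\le C B_x^2\,\bbE[g]$ for $g(S,x,\theta)=\|x-f_\theta(S)\|_2^2-\|x-f^*(S)\|_2^2$, and the layer-by-layer Lipschitz-in-$\theta$ estimate (what you call the $\ell_2$ analogue of Proposition~\ref{prop:parafluc}; this is stated and proved as Proposition~\ref{prop:parafluc2} in the paper). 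The paper's decomposition is also slightly coarser: it works with the excess risk $g$ as a single object rather than splitting off the noise cross-term $\langle\xi^i,h_\theta(S^i)-h_{\theta^*}(S^i)\rangle$ as you do; your expansion is fine but not needed, since the variance bound on $g$ already absorbs that term. The PAC-Bayes route has the minor advantage of avoiding an explicit discretization/union-bound step (the ``smoothing'' over $P$ replaces it), while your covering route is perhaps more transparent about where the $\barD$ dimension count and the $\log(1+N\barB)$ resolution come from; the final bounds and constants are the same up to absolute factors.
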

The first term is the approximation error. It measures the proximity between the nominal function $f^{*}$ and the functions induced by the parameter set $\Theta$. The second term is the generalization error. Similar as Theorem~\ref{thm:pretrainmle}, the generalization error is independent of the token sequence length.

Since the neural networks are universal approximators, we will explicitly approximate $f^{*}$ from the transformer function class. Theorem 2 in \cite{zaheer2017deep} shows that there exist $\rho^{*}:\bbR\rightarrow\bbR^{d_{y}}$ and $\phi^{*}:\bbR\rightarrow\bbR$ such that
\begin{align*}
    f^{*}(X)=\rho^{*}\bigg(\frac{1}{L}\sum_{i=1}^{L}\phi^{*}(x_{i})\bigg),
\end{align*}
where $X=[x_{1},\cdots,x_{L}]$. The $i^{\rm th}$ component of $\rho^{*}$ is denoted as $\rho_{i}^{*}$ for $i\in[d_{y}]$. For a function $f$ defined on $\Omega$, the $L^{\infty}$ norm of it is defined as $\|f\|_{\infty}=\sup_{x\in\Omega}|f(x)|$. The set of the real-valued smooth functions on it is denoted as $\calS^{\infty}(\Omega,\bbR)$, Then we denote the set of the smooth functions with bounded derivatives as
\begin{align*}
    \calS_{B}=\Big\{f\in\calS^{\infty}([-B,B],\bbR)\,|\, \big\|f^{(n)}(x)\big\|\leq n! \text{ for all }n\in\bbN\Big\},
\end{align*}
where $f^{(n)}$ is the $n^{\rm th}$-order derivative of $f$.

\begin{assumption}\label{assump:smooth}
    There exists $B>0$ such that $\phi^{*},\rho_{i}^{*}\in\calS_{B}$ for $i\in[d_{y}]$.
\end{assumption}
This assumption states that the function $f^{*}$ is smooth enough. Then we have that
\begin{proposition}\label{prop:fapproxierr}
    Under \ref{assump:smooth}, if $d_{F}\geq 16d_{y}$, $B_{A,1}\geq 16 R d_{y}$, $B_{A,2}\geq d_{F}$ $B_{A}\geq \sqrt{d_{y}}$, and $B_{V}\geq \sqrt{d}$, then
    \begin{align*}
        \max_{\|S^{\top}\|_{2,\infty}\leq R}\big\|f^{*}(S)-f_{\theta^{*}}(S)\big\|_{2}= \cO\bigg( d_{y}\exp\bigg(-\frac{D^{1/4}}{\sqrt{C^{2}B^{2}\log B_{A,1}}}\bigg)\bigg)
    \end{align*}
    for some constant $C>0$.
\end{proposition}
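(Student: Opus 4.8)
The plan is to follow the proof of Proposition~\ref{prop:fapproxerrmle} almost verbatim; the two arguments differ only in the last step, which is strictly simpler here because the transformer in the $\ell_{2}$-loss setting outputs $f_{\theta}(S)$ directly and there is no softmax layer at the end — so no appeal to Assumption~\ref{assump:positive} to convert a distributional discrepancy to a KL divergence is needed. \textbf{Step 1 (permutation-invariant decomposition and high-level architecture).} Since $\frakX$ is finite, Theorem~2 of \cite{zaheer2017deep} gives the Deep-Sets representation $f^{*}(X)=\rho^{*}\bigl(L^{-1}\sum_{i=1}^{L}\phi^{*}(x_{i})\bigr)$ already displayed above, with $\phi^{*}:\bbR\to\bbR$ and $\rho^{*}:\bbR\to\bbR^{d_{y}}$. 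I would realize this by a transformer in three consecutive blocks, after the same structural relaxation used for Proposition~\ref{prop:fapproxerrmle} (a bias term in the feed-forward modules, a square $A_{2}^{(t)}\in\bbR^{d_{F}\times d_{F}}$, and $d_{F}$ allowed to vary across layers): first switch off attention by setting $W_{i}^{V,(t)}=0,\gamma_{1}^{(t)}=1,\gamma_{2}^{(t)}=0$ and $A_{2}^{(t)}=I_{d_{F}}$, so the block is a plain row-wise ReLU network used to approximate $\phi^{*}$; then use one attention layer with $W_{i}^{Q,(t)}=W_{i}^{K,(t)}=0,W_{i}^{V,(t)}=I_{d},\gamma_{1}^{(t)}=0,\gamma_{2}^{(t)}=1,A_{2}^{(t)}=0$, for which the softmax of the zero score matrix is uniform and the layer returns $L^{-1}\sum_{i}\phi^{*}(x_{i})$ in every row; finally another attention-free feed-forward block approximates each coordinate $\rho_{i}^{*}$, and stacking the $d_{y}$ coordinate networks multiplies the feed-forward width by $d_{y}$, which is accommodated by the hypotheses $d_{F}\ge 16d_{y}$, $B_{A,1}\ge 16Rd_{y}$, $B_{A,2}\ge d_{F}$, $B_{A}\ge\sqrt{d_{y}}$, $B_{V}\ge\sqrt{d}$.

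\textbf{Step 2 (smooth-function approximation and normalization bookkeeping).} I then invoke Lemma~\ref{lem:approx}, which approximates a function in $\calS_{B}$ to accuracy $\varepsilon$ in $\|\cdot\|_{\infty}$ by a ReLU network of width $\le 16$, weights $\le 1$, and depth $\cO\bigl(B(\log\varepsilon^{-1})^{2}+\log B\bigr)$; applying it to $\phi^{*}$ and to each $\rho_{i}^{*}$ (legitimate under Assumption~\ref{assump:smooth}) yields $\Psi_{\phi^{*}},\Psi_{\rho_{i}^{*}}$ with $\|\phi^{*}-\Psi_{\phi^{*}}\|_{\infty}\le\varepsilon_{\phi}$, $\|\rho_{i}^{*}-\Psi_{\rho_{i}^{*}}\|_{\infty}\le\varepsilon_{\rho}$ and depths $D'\le CB(\log\varepsilon_{\phi}^{-1})^{2}+\log B$, $D''\le CB(\log\varepsilon_{\rho}^{-1})^{2}+\log B$. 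Two bookkeeping points need care, exactly as in Proposition~\ref{prop:fapproxerrmle}: (i) in the first and last layers I rescale the relevant weight by $1/R$ and $R$ respectively so that all inputs to the smooth-approximation sub-networks lie in $[-1,1]$, keeping the intermediate magnitudes within the range required by Lemma~\ref{lem:approx}; (ii) since the networks constructed in that lemma never inflate the norm beyond the input range, the layer normalization in \eqref{eq:netdef} acts trivially on the relevant subspace and does not corrupt the approximation.

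\textbf{Step 3 (error propagation, depth balancing, conclusion).} Writing $\Psi_{\rho^{*}}$ for the stack of the $\Psi_{\rho_{i}^{*}}$ and using the triangle inequality, for every $X=(x_{1},\dots,x_{L})$ with $\|X^{\top}\|_{2,\infty}\le R$ one gets
\begin{align*}
\Bigl\|\rho^{*}\bigl(L^{-1}\textstyle\sum_{i}\phi^{*}(x_{i})\bigr)-\Psi_{\rho^{*}}\bigl(L^{-1}\textstyle\sum_{i}\Psi_{\phi^{*}}(x_{i})\bigr)\Bigr\|_{1}\le d_{y}\varepsilon_{\rho}+d_{y}\,(B_{A,1})^{D''}\varepsilon_{\phi},
\end{align*}
where $(B_{A,1})^{D''}$ is the Lipschitz amplification of the $\phi^{*}$-error through the depth-$D''$ network realizing $\rho^{*}$. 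Under the depth budget $D'+D''\le D$ I would take $D'=D/2+D^{3/4}$ and $D''=\sqrt{D}/(\sqrt{CB}\log B_{A,1})$, so that $\varepsilon_{\phi}$ may be chosen exponentially small in $D^{3/4}$ while the amplification $(B_{A,1})^{D''}$ grows only like $\exp(\sqrt{D}/\sqrt{CB})$, and a matching choice of $\varepsilon_{\rho}$ produces the bound $\cO\bigl(d_{y}\exp(-D^{1/4}/\sqrt{C^{2}B^{2}\log B_{A,1}})\bigr)$. Denoting by $\theta^{*}$ the parameters of the constructed transformer, we have $f_{\theta^{*}}(S)=\Psi_{\rho^{*}}\bigl(L^{-1}\sum_{i}\Psi_{\phi^{*}}(x_{i})\bigr)$ directly (no softmax layer), and since $\|\cdot\|_{2}\le\|\cdot\|_{1}$ this is already the claimed bound on $\max_{\|S^{\top}\|_{2,\infty}\le R}\|f^{*}(S)-f_{\theta^{*}}(S)\|_{2}$; no KL conversion is required, in contrast to Proposition~\ref{prop:fapproxerrmle}.

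\textbf{Main obstacle.} The delicate step is the depth balancing in Step~3: the $\phi^{*}$-approximation error is amplified \emph{multiplicatively} by $(B_{A,1})^{D''}$ inside the $\rho^{*}$-network, so one cannot split the depth evenly between the two sub-networks, and the $D^{1/4}$ rate — rather than a merely logarithmic improvement — is exactly the optimum of the trade-off between enlarging $D''$ (to shrink $\varepsilon_{\rho}$) and keeping $D''$ small enough that this amplification does not swamp the gain from a larger $D'$. A secondary technical nuisance, inherited from the proof of Proposition~\ref{prop:fapproxerrmle}, is verifying that the layer normalizations and residual links in \eqref{eq:netdef}, under the specific weight assignments of Step~1, leave the constructed sub-networks intact.
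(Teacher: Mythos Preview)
Your proposal is correct and follows essentially the same route as the paper. The paper's own proof explicitly says that Steps~1 and~2 ``follow the procedures of the proof of Proposition~\ref{prop:fapproxerrmle} exactly'' and then, for Step~3, simply sets $A^{(D+1)}=I_{d_{y}}$ so that the output is $\Psi_{\rho^{*}}\bigl(L^{-1}\sum_{i}\Psi_{\phi^{*}}(x_{i})\bigr)$ itself, exactly as you describe; your observation that no KL conversion is needed here is precisely the simplification the paper exploits.
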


\subsubsection{Proof of Theorem~\ref{thm:pretrain}}
\begin{proof}[Proof of Theorem~\ref{thm:pretrain}]
    For ease of notation, we respectively define the empirical risk and the population risk as
    \begin{align*}
        \hcalL(f,\calD)=\frac{1}{N}\sum_{i=1}^{N}\big\|x^{i}-f_{\theta}(S^{i})\big\|_{2}^{2},\quad \calL(f)=\bbE_{S,x}\Big[\big\|x-f_{\theta}(S)\big\|_{2}^{2}\Big].
    \end{align*}
    The our proof mainly involves three steps.
    \begin{itemize}
        \item Error decomposition for the excess population risk.
        \item Control each term in the error decomposition.
        \item Conclude the proof.
    \end{itemize}
    
    \textbf{Step 1: Error decomposition for the excess population risk.}
    The excess population risk for the estimate $\htheta$ can be decomposed to the sum of the generalization error and the approximation error as
    \begin{align}
        &\calL(f_{\htheta})-\calL(f^{*})\nonumber\\
        &\quad=\calL(f_{\htheta})-\calL(f^{*})-2\big(\hcalL(f_{\htheta},\calD)-\hcalL(f^{*},\calD)\big)+2\big(\hcalL(f_{\htheta},\calD)-\hcalL(f_{\theta^{*}},\calD)\big)+2\big(\hcalL(f_{\theta^{*}},\calD)-\hcalL(f^{*},\calD)\big)\nonumber\\
        &\quad\leq\underbrace{\calL(f_{\htheta})-\calL(f^{*})-2\big(\hcalL(f_{\htheta},\calD)-\hcalL(f^{*},\calD)\big)}_{\text{generalization error}}+\underbrace{2\big(\hcalL(f_{\theta^{*}},\calD)-\hcalL(f^{*},\calD)\big)}_{\text{approximation error}},\label{eq:errordecomp}
    \end{align}
    where $\theta^{*}=\argmin_{\theta\in\Theta}\calL(f_{\theta})$, and the inequality results from that $\htheta$ achieves the minimal empirical risk.
    
    \textbf{Step 2: Control each term in the error decomposition.}
    
    We first consider the generalization error and will adapt Lemma~\ref{lem:fastpacbayes} to bound it. Define the function
    \begin{align*}
        g(S,x,\theta)=\big\|x-f_{\theta}(S)\big\|_{2}^{2}-\big\|x-f^{*}(S)\big\|_{2}^{2}.
    \end{align*}
    To verify the conditions in Lemma~\ref{lem:fastpacbayes}, we notice that $|g(S,x,\theta)|\leq (B_{x}+B_{f})^{2}$ and that
    \begin{align*}
        \bbE \big[g(S,x,\theta)\big]&=\bbE\Big[\big\|x-f_{\theta}(S)\big\|_{2}^{2}-\big\|x-f^{*}(S)\big\|_{2}^{2}\Big]\\
        &=\bbE\Big[\big\|f^{*}(S)-f_{\theta}(S)\big\|_{2}^{2}\Big]\\
        \bbE\Big[\big(g(S,x,\theta)-\bbE \big[g(S,x,\theta)\big]\big)^{2}\Big]&\leq \bbE \Big[\big(g(S,x,\theta)\big)^{2}\Big]\\
        &\leq \bbE\Big[\big\|2x-f^{*}(S)-f_{\theta}(S)\big\|_{2}^{2}\cdot\big\|f^{*}(S)-f_{\theta}(S)\big\|_{2}^{2}\Big]\\
        &\leq (3B_{x}+B_{f})^{2}\cdot\bbE\Big[\big\|f^{*}(S)-f_{\theta}(S)\big\|_{2}^{2}\Big],
    \end{align*}
    where the second equality results from the definition of $f^{*}$, the second inequality results from Cauchy–Schwarz inequality, and the last inequality result from the boundedness of $x$, $f^{*}$, and $f_{\theta}$. Then Lemma~\ref{lem:fastpacbayes} shows that for a distribution $Q\in\Delta(\Theta)$ and $0<\lambda\leq 1/(2(B_{x}+B_{f})^{2})$, the following holds with probability at least $1-\delta$ simultaneously for all $P\in\Delta(\Theta)$
    \begin{align*}
        &\biggl|\bbE_{\theta\sim P}\biggl[\bbE \big[g(S,x,\theta)\big]-\frac{1}{N}\sum_{i=1}^{N}g(S^{i},x^{i},\theta)\biggr]\biggr|\\
        &\quad\leq \lambda (3B_{x}+B_{f})^{2}\bbE_{\theta\sim P}\Big[\bbE \big[g(S,x,\theta)\big]\Big]+\frac{1}{N\lambda}\biggl[\KL(P\,\|\,Q)+\log\frac{2}{\delta}\biggr].
    \end{align*}
    Taking $\lambda=1/(2(3B_{x}+B_{f})^{2})$, we have
    \begin{align*}
        &\biggl|\bbE_{\theta\sim P}\Bigl[\calL(f_{\theta})-\calL(f^{*})-\big(\hcalL(f_{\theta},\calD)-\hcalL(f^{*},\calD)\big)\Bigr]\biggr|\\
        &\quad\leq \frac{1}{2}\bbE_{\theta\sim P}\big[\calL(f_{\theta})-\calL(f^{*})\big]+\frac{2(3B_{x}+B_{f})^{2}}{N}\biggl[\KL(P\,\|\,Q)+\log\frac{2}{\delta}\biggr].
    \end{align*}
    Next, we will take proper $P$ and $Q$ to relate this equation and the generalization error. For this purpose, we quantify how the perturbation of network parameters influence the output of the network.
    \begin{proposition}\label{prop:parafluc2}
        For any input $X\in\bbR^{L\times d}$ and $\theta,\tiltheta\in\Theta$, we have that
        \begin{align*}
            \|f_{\theta}(X)-f_{\tilde{\theta}}(X)\|_{2}\leq \big\|A^{(D+1)}-\tilA^{(D+1)}\big\|_{\rmF}+\sum_{t=1}^{D}\alpha_{t}(\beta_{t}+\iota_{t}+\kappa_{t}+\rho_{t}),
        \end{align*}
        where 
        \begin{align*}
            \alpha_{t}&=B_{A}(1+B_{A,1}\cdot B_{A,2})\big(1+hB_{V}(1+4 B_{Q}B_{K})\big)^{D-t}\\
            \beta_{t}&=|\gamma_{2}^{(t)}-\tilgamma_{2}^{(t)}|+(1+B_{A,1}\cdot B_{A,2})\cdot\big(1+(\|X^\top\|_{2,\infty}-1)\bbI_{t=1}\big)\cdot|\gamma_{1}^{(t)}-\tilgamma_{1}^{(t)}|\\
            \iota_{t}&=B_{A,2}\cdot\|A_{1}^{(t)}-\tilA_{1}^{(t)}\|_{\rmF}+B_{A,1}\cdot\|A_{2}^{(t)}-\tilA_{2}^{(t)}\|_{\rmF}\\
            \kappa_{t}&=(1+B_{A,1}\cdot B_{A,2})\cdot\big(1+(\|X^\top\|_{2,\infty}-1)\bbI_{t=1}\big)\cdot\sum_{i=1}^{h}\big\|W_{i}^{V,(t)}-\tilde{W}_{i}^{V,(t)}\|_{\rmF}\\
            \rho_{t}&=2(1+B_{A,1}\cdot B_{A,2})\cdot\big(1+(\|X^\top\|_{2,\infty}-1)\bbI_{t=1}\big)\cdot B_{V}\\
            &\quad\qquad \cdot\sum_{i=1}^{h}B_{K}\cdot\|W_{i}^{Q,(t+1)}-\tilde{W}_{i}^{Q,(t+1)}\|_{\rmF}+B_{Q}\cdot\|W_{i}^{K,(t+1)}-\tilde{W}_{i}^{K,(t+1)}\|_{\rmF}
        \end{align*}
        for all $t\in[D]$.
    \end{proposition}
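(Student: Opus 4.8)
\textbf{Proof plan for Proposition~\ref{prop:parafluc2}.} The plan is to run a block-by-block telescoping (hybrid) argument: write $f_{\theta}(X)-f_{\tiltheta}(X)$ as a sum of $D+1$ differences in which the parameters of exactly one block are swapped from $\tiltheta$ to $\theta$, and bound each difference by the product of (i) the size of the one-step perturbation that swap introduces into the intermediate representation and (ii) the Lipschitz constant with which that perturbation is propagated through the remaining blocks up to the output. The quantity $\alpha_{t}$ is exactly this propagation constant for a perturbation created at block $t$, while $\beta_{t}+\iota_{t}+\kappa_{t}+\rho_{t}$ collects the one-step perturbations. Since the $\ell_{2}$-readout network shares its recursion~\eqref{eq:netdef} with the cross-entropy network, most of the work --- controlling the row-wise drift of $X^{(D)}$ along the trajectory --- is identical to the proof of Proposition~\ref{prop:parafluc}; the only genuinely new piece is the linear/averaging head.

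For the swap of the head parameter $A^{(D+1)}$ I would use $f_{\theta}(X)=\tfrac1L\bbI_{L}^{\top}X^{(D)}A^{(D+1)}$, the triangle inequality, the fact that every row of $X^{(D)}$ lies in the unit $\ell_{2}$-ball (it is an output of $\Pi_{\rm{norm}}$) so that $\|\tfrac1L\bbI_{L}^{\top}X^{(D)}\|_{2}\le1$, and $\|v^{\top}(A-\tilA)\|_{2}\le\|v\|_{2}\|A-\tilA\|_{\rmF}$; this yields the leading term $\|A^{(D+1)}-\tilA^{(D+1)}\|_{\rmF}$ with coefficient one and leaves a term of order $B_{A}$ times the row-wise drift of $X^{(D)}$, which is where the leading factor $B_{A}$ of every $\alpha_{t}$ comes from. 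For a swap at block $t\le D$, with the input $X^{(t-1)}$ held fixed, I would peel off $\Pi_{\rm{norm}}$ (non-expansive as a projection onto a convex set), then the residual feed-forward map $Y\mapsto\FF(Y,A^{(t)})+\gamma_{2}^{(t)}Y$, then the residual multi-head map: changing $\gamma_{1}^{(t)},\gamma_{2}^{(t)}$ contributes linearly in the input norm, giving $\beta_{t}$; changing $A_{1}^{(t)},A_{2}^{(t)}$ is handled by $1$-Lipschitzness of $\relu$ and submultiplicativity of operator norms, giving $\iota_{t}$; changing the value weights $W^{V,(t)}$ leaves the softmax weight matrix fixed, so the attention output is linear in $W^{V}$, giving $\kappa_{t}$; and changing $W^{Q},W^{K}$ perturbs the (row-stochastic) softmax matrix, which is controlled by the Lipschitz estimate for softmax self-attention --- the source of the constant $1+4B_{Q}B_{K}$ --- giving $\rho_{t}$. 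The first-block caveat that $X^{(0)}=X$ has rows of norm at most $R$ rather than at most $1$ is precisely what produces the factor $1+(\|X^{\top}\|_{2,\infty}-1)\bbI_{t=1}$, which is at most $R$ by Assumption~\ref{assump:bounded}.

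Finally, to push a perturbation created at block $t$ through to the output, I would multiply the Lipschitz constants of blocks $t{+}1,\dots,D$ and of the head: $\Pi_{\rm{norm}}$ contributes $1$, a residual feed-forward block contributes $1+B_{A,1}B_{A,2}$, and a residual multi-head block contributes $1+hB_{V}(1+4B_{Q}B_{K})$, and together with the head factor $B_{A}$ this reproduces $\alpha_{t}=B_{A}(1+B_{A,1}B_{A,2})(1+hB_{V}(1+4B_{Q}B_{K}))^{D-t}$; summing over $t\in[D]$ and the head swap gives the stated inequality. The main obstacle is the softmax self-attention Lipschitz bound on the unit-ball domain --- quantifying how far the row-stochastic attention matrix moves under perturbations of $W^{Q},W^{K}$, and doing so uniformly over inputs --- together with keeping the layer-index bookkeeping consistent; in particular the deliberate shift to $W^{Q,(t+1)},W^{K,(t+1)}$ inside $\rho_{t}$ reflects that a query/key perturbation at block $t{+}1$ is most conveniently charged against the propagation constant $\alpha_{t}$ of block $t$, with the attention map of block $t{+}1$ then supplying one of the $D-t$ factors $1+hB_{V}(1+4B_{Q}B_{K})$.
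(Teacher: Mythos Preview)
Your plan is correct and lands on the same bound as the paper, but the paper organizes the argument slightly differently. Rather than a hybrid/telescoping argument that swaps one block's parameters at a time, the paper runs the two networks in parallel on the same input and tracks the single quantity $e_t:=\|X^{(t),\top}-\tilX^{(t),\top}\|_{2,\infty}$, establishing the one-step recursion
\[
e_{t+1}\le (1+B_{A,1}B_{A,2})\bigl(1+hB_{V}(1+4B_{Q}B_{K})\bigr)\,e_t+\bigl(\beta_{t+1}+\iota_{t+1}+\kappa_{t+1}+\rho_{t+1}\bigr)
\]
via Proposition~\ref{prop:fflip} and Lemma~\ref{lem:mhalip}, and then unrolling with $e_0=0$. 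The head is handled exactly as you describe: one application of Lemma~\ref{lem:matvec} together with $\|X^{(D),\top}\|_{2,\infty}\le1$ gives $\|f_\theta(X)-f_{\tiltheta}(X)\|_2\le B_A\,e_D+\|A^{(D+1)}-\tilA^{(D+1)}\|_{\rmF}$, which is the only change relative to Proposition~\ref{prop:parafluc}. Your hybrid decomposition and the paper's direct recursion are two standard bookkeeping schemes for the same layerwise Lipschitz calculus; they invoke identical per-block estimates and produce the same sum, so either is fine. The paper's version has the minor advantage that the propagation factor $\alpha_t$ and the one-step increments fall out of a single recursion without having to argue separately about the mixed-parameter intermediate networks that a hybrid argument introduces.
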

    \begin{proof}[Proof of Proposition~\ref{prop:parafluc2} ]
        See Appendix~\ref{app:parafluc2}.
    \end{proof}
    With the help of Proposition~\ref{prop:parafluc2}, we set the distribution $P$ as 
    \begin{align}
        P&=\prod_{t=1}^{D+1}\calL_{P}\big(\theta^{(t)}\big)\label{eq:distP2}\\
        \calL_{P}\big(\theta^{(D+1)}\big)&=\unif\Big(\bbB\big(\hatA^{(D+1)},r^{(D+1)},\|\cdot\|_{\rmF}\big)\Big)\nonumber\\
        \calL_{P}\big(\theta^{(t)}\big)&=\unif\Big(\bbB\big(\hat\gamma_{1}^{(t)},r_{\gamma,1}^{(t)},|\cdot|\big)\Big)\cdot\unif\Big(\bbB\big(\hat\gamma_{2}^{(t)},r_{\gamma,2}^{(t)},|\cdot|\big)\Big)\cdot\calL_{P}(A^{(t)})\cdot\calL_{P}(W^{(t)})\nonumber\\
        \calL_{P}(A^{(t)})&=\unif\Big(\bbB\big(\hatA_{1}^{(t)},r_{A,1}^{(t)},\|\cdot\|_{\rmF}\big)\Big)\cdot\unif\Big(\bbB\big(\hatA_{2}^{(t)},r_{A,2}^{(t)},\|\cdot\|_{\rmF}\big)\Big)\nonumber\\
        \calL_{P}(W^{(t)})&=\prod_{i=1}^{h}\unif\Big(\bbB\big(\hatW_{i}^{Q,(t)},r_{Q}^{(t)},\|\cdot\|_{\rmF}\big)\Big)\cdot\unif\Big(\bbB\big(\hatW_{i}^{K,(t)},r_{K}^{(t)},\|\cdot\|_{\rmF}\big)\Big)\cdot\unif\Big(\bbB\big(\hatW_{i}^{V,(t)},r_{V}^{(t)},\|\cdot\|_{\rmF}\big)\Big)\nonumber
    \end{align}
    for $t\in[D]$, where $\unif$ denotes the uniform distribution on the set, $\bbB(a,r,\|\cdot\|)=\{x\,|\, \|x-a\|\leq r\}$ denotes the ball centered in $a$ with radius $r$, the radius is set as
    \begin{alignat*}{2}
        r_{\gamma,1}^{(t)}&=(B_{x}+B_{f})^{-1}R^{-1}(1+B_{A,1}\cdot B_{A,2})^{-1}\alpha_{t}^{-1}/N,&\  r_{\gamma,2}^{(t)}&=(B_{x}+B_{f})^{-1}R^{-1}\alpha_{t}^{-1}/N\\
        r_{A,1}^{(t)}&=(B_{x}+B_{f})^{-1}R^{-1}B_{A,2}^{-1}\alpha_{t}^{-1}/N,&\  r_{A,2}^{(t)}&=(B_{x}+B_{f})^{-1}R^{-1}B_{A,1}^{-1}\alpha_{t}^{-1}/N,\\
        r_{V}^{(t)}&=(B_{x}+B_{f})^{-1}R^{-1}h^{-1}(1+B_{A,1}\cdot B_{A,2})^{-1}\alpha_{t}^{-1}/N,&\  r^{(D+1)}&= (B_{x}+B_{f})^{-1}B_{A}^{-1}/N,\\
        r_{K}^{(t)}&=(B_{x}+B_{f})^{-1}R^{-1}h^{-1}(1+B_{A,1}\cdot B_{A,2})^{-1}B_{V}^{-1}B_{Q}^{-1}\alpha_{t}^{-1}/N, \\
        r_{Q}^{(t)}&=(B_{x}+B_{f})^{-1}R^{-1}h^{-1}(1+B_{A,1}\cdot B_{A,2})^{-1}B_{V}^{-1}B_{K}^{-1}\alpha_{t}^{-1}/N.
    \end{alignat*}
    Under this assignment, we now bound $\bbE_{\theta\sim P}[\|x-f_{\theta}(S)\|_{2}^{2}-\|x-f_{\htheta}(S)\|_{2}^{2}]$ as
    \begin{align*}
        &\bigg|\bbE_{\theta\sim P}\Big[\big\|x-f_{\theta}(S)\big\|_{2}^{2}-\big\|x-f_{\htheta}(S)\big\|_{2}^{2}\Big]\bigg|\leq 2(B_{x}+B_{f})\bigg|\bbE_{\theta\sim P}\Big[\big\|f_{\theta}(S)-f_{\htheta}(S)\big\|_{2}\Big]\bigg|= \cO\bigg(\frac{B_{x}+B_{f}}{N}\bigg),
    \end{align*}
    where the inequality results from Cauchy-Schwarz inequality, and the equality results from Proposition~\ref{prop:parafluc2}. Thus, we have that
    \begin{align}
        &\calL(f_{\htheta})-\calL(f^{*})-\big(\hcalL(f_{\htheta},\calD)-\hcalL(f^{*},\calD)\big)\nonumber\\
        &\quad\leq \frac{1}{2}\big(\calL(f_{\htheta})-\calL(f^{*})\big)+\cO\bigg(\frac{B_{x}+B_{f}}{N}\bigg)+\frac{2(3B_{x}+B_{f})^{2}}{N}\biggl[\KL(P\,\|\,Q)+\log\frac{2}{\delta}\biggr].\label{ieq:tI}
    \end{align}
    To access to the value of $\KL(P\,\|\,Q)$, we take $Q$ as the distribution in \eqref{eq:distQ} except that
    \begin{align}
        \calL_{Q}\big(\theta^{(D+1)}\big)&=\unif\Big(\bbB\big(0,B_{A},\|\cdot\|_{\rmF}\big)\Big).\label{eq:distQ2}
    \end{align}
    Then the KL divergence between $P$ and $Q$ is
    \begin{align*}
        \KL(P\,\|\,Q)=\cO\Big((D^{2}\cdot d\cdot (d_{F}+d_{h}+d)+d\cdot d_{y})\cdot\log\big( 1+NB_{x}RhB_{A}B_{A,1}B_{A,2}B_{Q}B_{K}B_{V}\big)\Big).
    \end{align*}
    Combining this equality with \eqref{ieq:tI}, we have that with probability at least $1-\delta$, the generalization error can be bounded as
    \begin{align}
        \calL(f_{\htheta})-\calL(f^{*})-2\big(\hcalL(f_{\htheta},\calD)-\hcalL(f^{*},\calD)\big)=\cO\bigg(\frac{B_{x}^{2}}{N}\biggl[\barD\log(1+N\barB)+\log\frac{2}{\delta}\biggr]\bigg).\label{ieq:tII}
    \end{align}
    
    Next we control the approximation error in \eqref{eq:errordecomp}.
    \begin{align}
        &\hcalL(f_{\theta^{*}},\calD)-\hcalL(f^{*},\calD)\nonumber\\
        &\quad=\hcalL(f_{\theta^{*}},\calD)-\hcalL(f^{*},\calD)-\frac{3}{2}\big(\calL(f_{\theta^{*}})-\calL(f^{*})\big)+\frac{3}{2}\big(\calL(f_{\theta^{*}})-\calL(f^{*})\big)\nonumber\\
        &\quad=\hcalL(f_{\theta^{*}},\calD)-\hcalL(f^{*},\calD)-\frac{3}{2}\big(\calL(f_{\theta^{*}})-\calL(f^{*})\big)+\frac{3}{2}\bbE\Big[\big\|f^{*}(S)-f_{\theta^{*}}(S)\big\|_{2}^{2}\Big],\label{eq:tIII}
    \end{align}
    where the second equality results from the definition of $f^{*}$. To bound the first two terms in the right-hand side of \eqref{eq:tIII}, we use Lemma~\ref{lem:fastpacbayes} and take $P$ and $Q$ as \eqref{eq:distP2} and \eqref{eq:distQ2}, replacing $\htheta$ by $\theta^{*}$. Then we have that
    \begin{align}
        \hcalL(f_{\theta^{*}},\calD)-\hcalL(f^{*},\calD)-\frac{3}{2}\big(\calL(f_{\theta^{*}})-\calL(f^{*})\big)=\cO\bigg(\frac{B_{x}^{2}}{N}\biggl[\barD\log(1+N\barB)+\log\frac{2}{\delta}\biggr]\bigg).\label{eq:tIV}
    \end{align}
    
    \textbf{Step 3: Conclude the proof.}
    
    Combining inequalities~\eqref{eq:errordecomp}, \eqref{ieq:tII}, \eqref{eq:tIII}, and \eqref{eq:tIV}, we have that
    \begin{align*}
        \calL(f_{\htheta})-\calL(f^{*})=\frac{3}{2}\bbE\Big[\big\|f^{*}(S)-f_{\theta^{*}}(S)\big\|_{2}^{2}\Big]+\cO\bigg(\frac{B_{x}^{2}}{N}\biggl[\barD\log(1+N\barB)+\log\frac{2}{\delta}\biggr]\bigg).
    \end{align*}
    Thus, we conclude the proof of Theorem~\ref{thm:pretrain}.
    
\end{proof}
\subsubsection{Proof of Proposition~\ref{prop:fapproxierr}}
\begin{proof}[Proof of Proposition~\ref{prop:fapproxierr}]
    Our proof mainly involves three steps.
    \begin{itemize}
        \item Build the high-level transformer approximator for $f^{*}$.
        \item Build the approximators in the transformer for $\phi^{*}$ and $\rho_{i}^{*}$ separately.
        \item Conclude the proof.
    \end{itemize}
    The first two steps follow the procedures of the proof of Proposition~\ref{prop:fapproxerrmle} exactly. Now we present the final step.
    
    \textbf{Step 3: Conclude the proof.}
    
    In the final layer, we just take $A^{(D+1)}=I_{d_{y}}$ as the identity matrix. Denoting the derived parameters as $\theta^{*}$ we have that
    \begin{align*}
        &\max_{\|X^{\top}\|_{2,\infty}\leq R}\bigg\|\rho^{*}\bigg(\frac{1}{L}\sum_{i=1}^{L}\phi^{*}(x_{i})\bigg)-f_{\theta^{*}}(X)\bigg\|_{2}= \cO\bigg( d_{y}\exp\bigg(-\frac{D^{1/4}}{\sqrt{C^{2}B^{2}\log B_{A,1}}}\bigg)\bigg).
    \end{align*}
    Thus, we conclude the proof of Proposition~\ref{prop:fapproxierr}.
\end{proof}

\section{Proofs and Formal Statements for \S\ref{sec:comb}}\label{app:comb}
\subsection{Proof of Theorem \ref{th:iclpretrain}} \label{sec:pf-th-iclpretrain}
\begin{proof}
By Corollary~\ref{th:bma_reg} and the fact that $\log (1/p_0(z_*)) \le \beta$, we have that
\begin{align}\label{eq:tk1}
     T^{-1} \cdot \EE_{\cD_\icl} \Bigl[ \sum_{t=1}^{T} \log \PP(r_t \given z^*, \pt_{t-1})-\sum_{t= 1}^T \log \PP(r_{t} \given \pt_{t-1})  \Bigr] \le \beta / T.
\end{align}
In addition, we have that
\begin{align}\label{eq:tk2}
    T^{-1} \cdot \EE_{\cD_\icl} \Bigl[\sum_{t= 1}^T \log \PP(r_{t} \given \pt_{t-1}) - \sum_{t= 1}^T \log \PP_{\hat \theta} (r_{t} \given \pt_{t-1}) \Bigr] =  \EE_{\cD_\icl} \Bigl[  \KL\bigl(\PP(\cdot \given \pt) \,\Big \|\, \PP_{\hat \theta}(\cdot \given \pt) \bigr) \Bigr].
\end{align}
Similar to \eqref{eq:logbound}, we have that
\begin{align*}
    \Bigl|\log \bigl(\PP(r \given \pt)  / \PP_{\hat \theta} (r \given \pt) \bigr)\Bigr| \le b^* = \log \max\{c_0^{-1}, b_y^{-1}\}.
\end{align*}
By Lemma \ref{lem:tv-kl}, we have that
\begin{align}\label{eq:tv-kl1}
    \KL\bigl(\PP(\cdot \given \pt) \,\|\, \PP_{\hat \theta}(\cdot \given \pt) \bigr)  \le (3 + b^*) / 2 \cdot \TV\big(\PP(\cdot\,|\,\pt),\PP_{\htheta}.(\cdot\,|\,\pt)\big).
\end{align}
By Assumption \ref{asp:coverage}, we have that $\PP_{\cD_\icl}(\pt) \le \kappa \PP_\cD(\pt)$. Thus, by Theorem \ref{thm:pretrainmle}, we have with probability at least $1- \delta$ that
\begin{align}\label{eq:tv-kl2}
    &\EE_{\cD_\icl}\Bigl[\KL\bigl(\PP(\cdot \given \pt) \,\|\, \PP_{\hat \theta}(\cdot \given \pt) \bigr) \Bigr] \noend
    & \quad \le C \cdot b^* \cdot \kappa \cdot \EE_{S\sim \cD}\Bigl[\TV\big(\PP(\cdot\,|\,S),\PP_{\htheta}.(\cdot\,|\,S)\big)\Bigr] \le C\cdot b^* \cdot \kappa \cdot  \Delta_{\rm pre}(N, T, \delta).
\end{align}
Combining \eqref{eq:tv-kl2}, \eqref{eq:tk1}, and \eqref{eq:tk2}, we have with probability at least $1- \delta$ that
\begin{align}
    \label{eq:tk3}
     &\EE_{\cD_\icl} \Bigl[ T^{-1} \cdot\sum_{t=1}^{T} \log \PP(r_t \given z^*, \pt_{t-1})- T^{-1} \cdot\sum_{t= 1}^T \log \PP_{\hat \theta} (r_{t} \given \pt_{t-1})  \Bigr] \noend
     & \quad \le \beta / T + \EE_{S \sim \cD} \Bigl[  \KL\bigl(\PP(\cdot \given S) \,\|\, \PP_{\hat \theta}(\cdot \given S) \bigr) \Bigr] \noend 
     & \quad \le \cO\bigl(\beta / T + b^*\cdot \kappa \cdot \Delta_{\rm pre}(N, T, \delta) \bigr),
\end{align}
which completes the proof of Theorem \ref{th:iclpretrain}.
\end{proof}

\subsection{Assumptions and Formal Statement for Prompting With Wrong Input-Output Mappings}\label{app:wonglabel}

We first state assumptions for this setting.
\begin{assumption}\label{assump:indep}
    Conditioned on any $z\in\frakZ$, the input-output pairs are independent, i.e., for any two input-output pair sequences $S_{t},S_{t^{\prime}}^{\prime}\in\frakX^{*}$, we have
    $
        \PP((S_{t},S_{t^{\prime}}^{\prime})\,|\,z)=\PP(S_{t}\,|\, z)\cdot \PP(S_{t^{\prime}}^{\prime}\,|\, z).
    $
\end{assumption}
\vspace{-3.5mm}
This assumption states that for any task $z\in\frakZ$, the input-output pairs are independently generated. This largely holds in realistic applications since the examples usually are independently produced. It can be relaxed when there are more structures in the token generation process, e.g. the hidden Markov model in \cite{xie2021explanation}.
\begin{assumption}\label{assump:plb}
    There exists a constant $c_{1}>0$ such that $\PP_{\calZ}(z_{*})\geq c_{1}$.
\end{assumption}
\vspace{-3.5mm}
This assumption states that the prior distribution of the hidden concept $z_{*}$ is strictly larger than $0$, otherwise this concept can never be deduced.
For two concepts $z,z^{\prime}\in\frakZ$, we define the KL divergence between the conditional distributions of input-output pair on them as $\KL_{\rm pair}(\PP(\cdot\,|\,z)\|\PP(\cdot\,|\,z^{\prime}))=\EE_{X,y\sim \PP(\cdot\,|\,z)}[\log(\PP(X,y\,|\,z)/\PP(X,y\,|\,z^{\prime}))]$. This divergence measures the distance between distributions of input-output pairs conditioned on different tasks $z$ and $z^{\prime}$.
\begin{assumption}\label{assump:distinguish}
    The concept $z_{*}$ satisfies that $\min_{z\neq z_{*}}\KL_{\rm pair}(\PP(\cdot\,|\,z_{*})\,\|\,\PP(\cdot\,|\,z))>2\log 1/c_{0}$, where $c_{0}$ is the constant in Assumption~\ref{assump:positive}.
\end{assumption}
\vspace{-3.5mm}
This distinguishability assumption requires that the divergence between $z_{*}$ and other concepts $z$ is large enough to infer the concept $z_{*}$ from the prompt. We denote the pretraining error in Theorem~\ref{thm:pretrainmle} as $\Delta_{\rm pre}(N_{\rmp}, T_{\rmp}, \delta)$, then we have the following result.
\begin{proposition}\label{prop:wronglabel}
    Under Assumptions~\ref{assump:positive}, \ref{assump:indep}, \ref{assump:plb}, and \ref{assump:distinguish}, the pretrained model $\PP_{\htheta}$ in \eqref{algo:pretrainmle} predicts the outputs with the prompt containing wrong mappings as
    \begin{align*}
        &\bbE_{\pt^{\prime}}\Big[\KL\big(\PP(\cdot\,|\,\tilc_{t+1},z_{*})\|\PP_{\htheta}(\cdot\,|\,S_{t}^{\prime},\tilc_{t+1})\big)\Big]\\
        &\quad=\!\cO\bigg(\!\Delta_{\rm pre}(N_{\rmp},T_{\rmp},\delta)\!+\!\exp\bigg(\!-\!\frac{\sqrt{t}}{2(1+l)\log 1/c_0}\!\bigg(\min_{z\neq z^{*}}\KL_{\rm pair}\big(\PP(\cdot\,|\,z^{*})\,\|\,\PP(\cdot\,|\,z)\big)+2\log c_{0}\bigg)\!\bigg)\!\bigg)
    \end{align*}
    with probability at least $1-\delta$.
\end{proposition}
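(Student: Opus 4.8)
The plan is to decompose the target quantity into a pretraining error and an in-context concept-identification error, and then to show that the second term decays exponentially in $\sqrt{t}$ under the distinguishability assumption. First I would write, for any perturbed prompt $\pt'=(S'_t,\tilc_{t+1})$,
\begin{align*}
\KL\big(\PP(\cdot\,|\,\tilc_{t+1},z_*)\,\|\,\PP_{\htheta}(\cdot\,|\,S'_t,\tilc_{t+1})\big)
\le 2\KL\big(\PP(\cdot\,|\,\tilc_{t+1},z_*)\,\|\,\PP(\cdot\,|\,S'_t,\tilc_{t+1})\big)
+2\KL\big(\PP(\cdot\,|\,S'_t,\tilc_{t+1})\,\|\,\PP_{\htheta}(\cdot\,|\,S'_t,\tilc_{t+1})\big),
\end{align*}
using the (approximate) triangle-type inequality for KL available once the log-density ratios are bounded by $b^*$ via Assumption~\ref{assump:positive} (exactly as in \eqref{eq:logbound}). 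Taking $\EE_{\pt'}$, the second term is controlled by Theorem~\ref{thm:pretrainmle} together with Lemma~\ref{lem:tv-kl} and a coverage argument (as in the proof of Theorem~\ref{th:iclpretrain}, using that the perturbed prompt distribution is covered by the pretraining distribution); this contributes the $\Delta_{\rm pre}(N_{\rmp},T_{\rmp},\delta)$ term. What remains is the purely Bayesian term $\EE_{\pt'}[\KL(\PP(\cdot\,|\,\tilc_{t+1},z_*)\,\|\,\PP(\cdot\,|\,S'_t,\tilc_{t+1}))]$, which I must show is $\cO(\exp(-c\sqrt{t}))$.

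For this Bayesian term, the key observation is that by Proposition~\ref{th:bma}, $\PP(\cdot\,|\,S'_t,\tilc_{t+1})=\int \PP(\cdot\,|\,\tilc_{t+1},z)\,\PP(z\,|\,S'_t)\,\ud z$, a mixture over the posterior on $z$ induced by the \emph{wrongly-labeled} prompt. So $\PP(\cdot\,|\,S'_t,\tilc_{t+1})$ is close to the target $\PP(\cdot\,|\,\tilc_{t+1},z_*)$ whenever the posterior $\PP(z\,|\,S'_t)$ concentrates on $z_*$. The next step is therefore to bound the posterior mass on each $z\ne z_*$: using Assumption~\ref{assump:indep} to factorize $\PP(S'_t\,|\,z)=\prod_{i=1}^t\PP(\tilc_i,r'_i\,|\,z)$ (up to the length-$l$ book-keeping for $\tilc_i\in\frakX^l$ absorbed by Assumption~\ref{assump:positive}, which gives $\PP(\cdot\,|\,z)\ge c_0$ token-wise, hence $\PP(\tilc_i,r'_i\,|\,z)\ge c_0^{1+l}$), and Assumption~\ref{assump:plb} to lower-bound the prior mass $\PP_{\calZ}(z_*)\ge c_1$, I can write
\begin{align*}
\frac{\PP(z\,|\,S'_t)}{\PP(z_*\,|\,S'_t)} = \frac{\PP_{\calZ}(z)}{\PP_{\calZ}(z_*)}\prod_{i=1}^t \frac{\PP(\tilc_i,r'_i\,|\,z)}{\PP(\tilc_i,r'_i\,|\,z_*)}
\le c_1^{-1}\exp\Big(\sum_{i=1}^t \log\frac{\PP(\tilc_i,r'_i\,|\,z)}{\PP(\tilc_i,r'_i\,|\,z_*)}\Big).
\end{align*}
The exponent is a sum of terms whose expectation (under the \emph{perturbed} law $\PP'$) I need to show is $\le -\tfrac12(1+l)^{-1}(\log 1/c_0)^{-1}(\KL_{\rm pair}(\PP(\cdot\,|\,z_*)\,\|\,\PP(\cdot\,|\,z))+2\log c_0)\cdot\text{(something)}$ — this is where the perturbation enters: the perturbed response distribution differs from the nominal one, but the distinguishability gap $\KL_{\rm pair}>2\log 1/c_0$ in Assumption~\ref{assump:distinguish} leaves enough room so that, after accounting for the worst-case perturbation (bounded via $\log 1/c_0$ per token, times the sequence length factor $1+l$), the drift is still strictly negative. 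A bounded-difference / Azuma–Hoeffding argument on the martingale $\sum_{i\le t}\log(\PP(\tilc_i,r'_i\,|\,z)/\PP(\tilc_i,r'_i\,|\,z_*))$ then yields concentration, and optimizing the deviation level against the negative drift produces the $\exp(-\Theta(\sqrt{t}))$ rate (the $\sqrt{t}$ rather than $t$ comes from balancing a $\sqrt{t}$-scale fluctuation against a $t$-scale drift that has been shrunk by the perturbation budget). Summing over the finitely many $z\ne z_*$ and converting a small total-variation bound between $\PP(\cdot\,|\,S'_t,\tilc_{t+1})$ and $\PP(\cdot\,|\,\tilc_{t+1},z_*)$ back to KL (again via the $b^*$ bound and Lemma~\ref{lem:tv-kl}) gives the claimed second term.

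The main obstacle I anticipate is the drift computation under the perturbed measure: one must verify that $\EE_{\PP'}[\log(\PP(\tilc_i,r'_i\,|\,z)/\PP(\tilc_i,r'_i\,|\,z_*))]$ is bounded above by a negative quantity of the stated form. Under the true law this expectation would be exactly $-\KL_{\rm pair}(\PP(\cdot\,|\,z_*)\,\|\,\PP(\cdot\,|\,z))$; under the perturbation the change is at most the total perturbation in the log-likelihood, which by Assumption~\ref{assump:positive} is at most $\log 1/c_0$ per perturbed token, and the factor $(1+l)$ tracks that each example contributes an $l$-length covariate plus one response. Making this bookkeeping precise — and in particular confirming that the distinguishability threshold $2\log 1/c_0$ in Assumption~\ref{assump:distinguish} is exactly the amount needed to dominate the perturbation — is the crux; the remaining steps (triangle inequality for KL, the pretraining-error reduction mirroring Theorem~\ref{th:iclpretrain}, the Azuma bound, and the union bound over finite $\frakZ$) are routine. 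The formal statement and the full argument are deferred to the location cited in the excerpt.
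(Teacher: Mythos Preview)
Your plan is correct and matches the paper's overall strategy: bound the posterior ratio $\PP(z\,|\,S'_t)/\PP(z_*\,|\,S'_t)$ via the log-likelihood ratio, absorb the response perturbation using the $c_0$ lower bound, concentrate with Hoeffding, balance the failure probability to get the $\sqrt{t}$ rate, and add the pretraining error. Two execution differences are worth noting. First, the paper works entirely in total variation---using the genuine TV triangle inequality to separate the Bayesian term $\TV(\PP(\cdot\,|\,\tilc_{t+1},z_*),\PP(\cdot\,|\,S'_t,\tilc_{t+1}))$ from the pretraining term $\TV(\PP(\cdot\,|\,S'_t,\tilc_{t+1}),\PP_{\htheta}(\cdot\,|\,S'_t,\tilc_{t+1}))$---and converts to KL only at the very end via the bounded log-ratio; this sidesteps the ad hoc KL ``triangle-type'' inequality you invoke. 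Second, and this is the cleaner move, the paper does \emph{not} compute the drift of $\sum_i\log(\PP(\tilc_i,r'_i\,|\,z)/\PP(\tilc_i,r'_i\,|\,z_*))$ under the perturbed law $\PP'$ (which is awkward since $\PP'$ is unspecified). Instead it uses the \emph{pointwise deterministic} bound
\[
\log\frac{\PP(\tilc_i,r'_i\,|\,z)}{\PP(\tilc_i,r'_i\,|\,z_*)}\;\le\;\log\frac{\PP(\tilc_i,r_i\,|\,z)}{\PP(\tilc_i,r_i\,|\,z_*)}+2\log\frac{1}{c_0},
\]
replacing the perturbed responses $r'_i$ by the true $r_i$ at a fixed cost of $2\log(1/c_0)$ per example; the residual sum now involves only the i.i.d.\ true pairs $(\tilc_i,r_i)$, so ordinary Hoeffding applies with mean exactly $-\KL_{\rm pair}$. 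This makes transparent why the distinguishability threshold in Assumption~\ref{assump:distinguish} is precisely $2\log(1/c_0)$: it is the per-example perturbation budget. The $\sqrt{t}$ then arises, as you say, from balancing the $\exp(-t\cdot\mathrm{gap}+\sqrt{t}\cdot(1+l)\log(1/c_0)\cdot\log(|\frakZ|/\delta))$ bound on the good event against the additive $\delta$ from the failure event, with the paper choosing $\delta=|\frakZ|\exp(-a\sqrt{t}/(2b))$.
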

\vspace{-2.5mm}

\subsection{Proof of Proposition~\ref{prop:wronglabel}}
\begin{proof}[Proof of Proposition~\ref{prop:wronglabel}]
    From Bayesian model averaging, the output distribution is
    \begin{align}
        &\PP(r_{t+1}\,|\,S_{t}^{\prime},\tilc_{t+1})\nonumber\\
        &\quad=\sum_{z\in\frakZ}\PP(r_{t+1}\,|\,\tilc_{t+1},z)\cdot \PP_{\calZ}(z\,|\,S_{t}^{\prime})\nonumber\\
        &\quad=\PP(r_{t+1}\,|\,\tilc_{t+1},z^{*})+\sum_{z\neq z^{*}}\big(\PP(r_{t+1}\,|\,\tilc_{t+1},z)-\PP(r_{t+1}\,|\,\tilc_{t+1},z^{*})\big)\cdot \PP_{\calZ}(z\,|\,S_{t}^{\prime})\nonumber\\
        &\quad=\PP(r_{t+1}\,|\,\tilc_{t+1},z^{*})+\sum_{z\neq z^{*}}\big(\PP(r_{t+1}\,|\,\tilc_{t+1},z)-\PP(r_{t+1}\,|\,\tilc_{t+1},z^{*})\big)\cdot \PP_{\calZ}(z^{*}\,|\,S_{t}^{\prime})\cdot \frac{\PP_{\calZ}(z)\cdot \PP(S_{t}^{\prime}\,|\,z)}{\PP_{\calZ}(z^{*})\cdot \PP(S_{t}^{\prime}\,|\,z^{*})},\label{eq:bmaerr}
    \end{align}
    where the first equality results from Bayesian model averaging, the last equality results from Bayes' theorem. Next, we upperbound the ratio $\PP(S_{t}^{\prime}\,|\,z)/\PP(S_{t}^{\prime}\,|\,z^{*})$ in the right-hand side of Eqn.~\eqref{eq:bmaerr}. We have that
    \begin{align*}
        \frac{1}{t}\log\frac{\PP(S_{t}^{\prime}\,|\,z)}{\PP(S_{t}^{\prime}\,|\,z^{*})}=\frac{1}{t}\sum_{i=1}^{t}\log \frac{\PP\big((\tilc_{i},r_{i}^{\prime})\,|\,z\big)}{\PP\big((\tilc_{i},r_{i}^{\prime})\,|\,z^{*}\big)}\leq -2\log c_{0}+\frac{1}{t}\sum_{i=1}^{t}\log \frac{\PP\big((\tilc_{i},r_{i})\,|\,z\big)}{\PP\big((\tilc_{i},r_{i})\,|\,z^{*}\big)},
    \end{align*}
    where the equality results from Assumption~\ref{assump:indep}, and the inequality results from Assumption~\ref{assump:positive}. Assumption~\ref{assump:positive} also implies that $|\log \PP((\tilc_{i},r_{i})\,|\,z)/\PP((\tilc_{i},r_{i})\,|\,z^{*}) |\leq (1+l)\log 1/c_{0}$. Hoeffding inequality shows that with probability at least $1-\delta$, we have
    \begin{align*}
        \frac{1}{t}\sum_{i=1}^{t}\log \frac{\PP\big((\tilc_{i},r_{i})\,|\,z\big)}{\PP\big((\tilc_{i},r_{i})\,|\,z^{*}\big)}+\KL_{\rm pair}\big(\PP(\cdot\,|\,z^{*})\,\|\,\PP(\cdot\,|\,z)\big)\leq \frac{(1+l)}{\sqrt{t}}\log\frac{1}{c_{0}}\cdot\log\frac{1}{\delta}.
    \end{align*}
    Thus, we have that with probability at least $1-\delta$, the following holds for all $z\neq z^{*}$
    \begin{align*}
        \frac{\PP(S_{t}^{\prime}\,|\,z)}{\PP(S_{t}^{\prime}\,|\,z^{*})}\leq \exp\bigg(-t\bigg(\KL_{\rm pair}\big(\PP(\cdot\,|\,z^{*})\,\|\,\PP(\cdot\,|\,z)\big)+2\log c_{0}-\frac{(1+l)}{\sqrt{t}}\log\frac{1}{c_{0}}\cdot\log\frac{|\frakZ|}{\delta}\bigg)\bigg).
    \end{align*}
    Combining this inequality with Eqn.~\eqref{eq:bmaerr}, we have that
    \begin{align}
        &\TV\big(\PP(\cdot\,|\,S_{t}^{\prime},\tilc_{t+1}),\PP(\cdot\,|\,\tilc_{t+1},z^{*})\big)\nonumber\\
        &\quad=\cO\bigg(\frac{1}{c_{1}}\exp\bigg(-t\bigg(\min_{z\neq z^{*}}\KL_{\rm pair}\big(\PP(\cdot\,|\,z^{*})\,\|\,\PP(\cdot\,|\,z)\big)+2\log c_{0}-\frac{(1+l)}{\sqrt{t}}\log\frac{1}{c_{0}}\cdot\log\frac{|\frakZ|}{\delta}\bigg)\bigg)\bigg).\label{eq:tvb}
    \end{align}
    Taking expectations with respect to the distribution of $S_{t}^{\prime},\tilc_{t+1}$ on the both sides in \eqref{eq:tvb}, we have that
    \begin{align}
        &\bbE_{\pt^{\prime}}\Big[\TV\big(\PP(\cdot\,|\,S_{t}^{\prime},\tilc_{t+1}),\PP(\cdot\,|\,\tilc_{t+1},z^{*})\big)\Big]\nonumber\\
        &\quad=\cO\bigg(\frac{1}{c_{1}}\exp\bigg(-t\bigg(\min_{z\neq z^{*}}\KL_{\rm pair}\big(\PP(\cdot\,|\,z^{*})\,\|\,\PP(\cdot\,|\,z)\big)+2\log c_{0}-\frac{(1+l)}{\sqrt{t}}\log\frac{1}{c_{0}}\cdot\log\frac{|\frakZ|}{\delta}\bigg)\bigg)\bigg)+\delta.\label{eq:tvb1}
    \end{align}
    We set $\delta=|\frakZ\exp(-a\sqrt{t}/2b)|$, where $a = \min_{z\neq z^{*}}\KL_{\rm pair}\big(\PP(\cdot\,|\,z^{*})\,\|\,\PP(\cdot\,|\,z)\big)+2\log c_{0}$, $b=-(1+l)\log{c_{0}}$. Then the right-hand side of \eqref{eq:tvb1} can be upper bounded as
    \begin{align*}
        &\bbE_{\pt^{\prime}}\Big[\TV\big(\PP(\cdot\,|\,S_{t}^{\prime},\tilc_{t+1}),\PP(\cdot\,|\,\tilc_{t+1},z^{*})\big)\Big]\nonumber\\
        &\quad=\cO\bigg(\exp\bigg(-\frac{\sqrt{t}}{2(1+l)\log 1/c_0}\bigg(\min_{z\neq z^{*}}\KL_{\rm pair}\big(\PP(\cdot\,|\,z^{*})\,\|\,\PP(\cdot\,|\,z)\big)+2\log c_{0}\bigg)\bigg)\bigg).
    \end{align*}
    Let $\bbE_{\pt^{\prime}}[\TV(\PP(\cdot\,|\,S_{t}^{\prime},\tilc_{t+1}),\PP_{\htheta}(\cdot\,|\,S_{t}^{\prime},\tilc_{t+1}))]\leq\Delta_{\rm pre}(N_{\rmp},T_{\rmp},\delta)$, where $\Delta_{\rm pre}(N_{\rmp},T_{\rmp},\delta)$ is the bound in Theorem~\ref{thm:pretrainmle}. Then we have that
    \begin{align*}
        &\bbE_{\pt^{\prime}}\Big[\KL\big(\PP(\cdot\,|\,\tilc_{t+1},z^{*})\|\PP_{\htheta}(\cdot\,|\,S_{t}^{\prime},\tilc_{t+1})\big)\Big]\\
        &\quad\leq \cO\Big(\bbE_{\pt^{\prime}}\Big[\TV\big(\PP_{\htheta}(\cdot\,|\,S_{t}^{\prime},\tilc_{t+1}),\PP(\cdot\,|\,\tilc_{t+1},z^{*})\big)\Big]\Big)\\
        &\quad=\cO\bigg(\!\Delta_{\rm pre}(N_{\rmp},T_{\rmp},\delta)\!+\!\exp\bigg(-\frac{\sqrt{t}}{2(1+l)\log 1/c_0}\bigg(\min_{z\neq z^{*}}\KL_{\rm pair}\big(\PP(\cdot\,|\,z^{*})\,\|\,\PP(\cdot\,|\,z)\big)+2\log c_{0}\bigg)\bigg)\!\bigg),
    \end{align*}
    where the first equality results from Assumption~\ref{assump:positive}. Thus, we conclude the proof of Proposition~\ref{prop:wronglabel}.
\end{proof}

\section{Proof of Supporting Propositions}\label{appendix:supp}

\subsection{Proof of Proposition \ref{prop:calculation_of_integral}}\label{app:calculation_of_integral}
\begin{proof}
	Let $a, b$ be two vectors in the $(d - 1)$-dimensional unit sphere $\SSS^{d - 1}$. We first define the following vector,
	\#\label{eq::pf_lem_calc_int_eq1}
	c = (a^\top b)\cdot b-\bigl(a - (a^\top b)\cdot b\bigr) \in\SSS^{d - 1}.
	\#
	By direct calculation, we have the following property of $c$ defined in \eqref{eq::pf_lem_calc_int_eq1},
	\#\label{eq::pf_lem_calc_int_eq2}
	c^\top b &= (a^\top b)\cdot\|b\|^2_2 - a^\top b +  (a^\top b)\cdot\|b\|^2_2 = a^\top b.
	\#
	By \eqref{eq::pf_lem_calc_int_eq1} and \eqref{eq::pf_lem_calc_int_eq2}, we have that
	\#\label{eq::pf_lem_calc_int_eq3}
	a + c = 2(a^\top b) \cdot b = 2(c^\top b) \cdot b = (a^\top b) \cdot b + (c^\top b) \cdot b.
	\#
	We now calculate the desired integration. Note that
	\#\label{eq::pf_lem_calc_int_eq4}
	\int_{\SSS^{d - 1}} a\cdot\exp(a^\top b)\ud a = b \cdot \int_{\SSS^{d - 1}} (a^\top b) \exp(a^\top b) \ud a + \int_{\SSS^{d - 1}} \bigl(a - (a^\top b)\cdot b\bigr)\cdot\exp(a^\top b) \ud a.
	\#
	For the second term on the right-hand side of \eqref{eq::pf_lem_calc_int_eq4}, it follows from \eqref{eq::pf_lem_calc_int_eq1} and  \eqref{eq::pf_lem_calc_int_eq2} and \eqref{eq::pf_lem_calc_int_eq3} that
	\#\label{eq::pf_lem_calc_int_eq5}
	\int_{\SSS^{d - 1}} \bigl(a - (a^\top b)\cdot b\bigr)\cdot\exp(a^\top b) \ud a &=-\int_{\SSS^{d - 1}} \bigl(c - (c^\top b)\cdot b\bigr)\cdot\exp(c^\top b) \ud c,
	\#
	where the equality follows from the fact that 
	$
	\ud c = 2\|b\|^2_2\ud a - \ud a = \ud a.
	$
	By replacing $c$ by $a$ on the right-hand side of \eqref{eq::pf_lem_calc_int_eq5}, we have
	\#\label{eq::pf_lem_calc_int_eq6}
	\int_{\SSS^{d - 1}} \bigl(a - (a^\top b)\cdot b\bigr)\cdot\exp(a^\top b) \ud a = -\int_{\SSS^{d - 1}} \bigl(a - (a^\top b)\cdot b\bigr)\cdot\exp(a^\top b) \ud a= 0
	\#
	Finally, by plugging \eqref{eq::pf_lem_calc_int_eq6} into \eqref{eq::pf_lem_calc_int_eq4}, we obtain that
	\$
	\int_{\SSS^{d - 1}} a\cdot\exp(a^\top b)\ud a = b \cdot \int_{\SSS^{d - 1}} (a^\top b) \exp(a^\top b) \ud a.
	\$
	Thus, by setting
	\$
	C_1 = \int_{\SSS^{d - 1}} (a^\top b) \exp(a^\top b) \ud a, \quad \forall b \in\SSS^{d - 1},
	\$
	we complete the proof of Proposition \ref{prop:calculation_of_integral}. Note that here $C_1$ is an absolute constant that does not depend on $b$ due to the symmetry on the unit sphere.
\end{proof}
\subsection{Proof of Proposition~\ref{prop:pacbayes}}\label{app:pacbayes}
\begin{proof}[Proof of Proposition~\ref{prop:pacbayes}]
    We note that $f(X)$ satisfies the condition in Lemma~\ref{lem:MCconcen} with $c_{i}=2b/N$ for $i\in[N]$. Then Lemma~\ref{lem:MCconcen} shows that
    \begin{align*}
        \bbE_{f\sim P_{0}}\Big[\bbE_{X}\Big(\exp\big[\lambda(f(X)-\bbE f(X))\big]\Big)\Big]\leq \exp\bigg(\frac{\lambda^{2}\cdot b^{2}\cdot t_{\rm min}}{2N}\bigg).
    \end{align*}
    Take $\lambda=\sqrt{2N\log 2/(b^{2} t_{\rm min}) }$. The Markov inequality shows that
    \begin{align*}
        P\bigg(\bbE_{f\sim P_{0}}\Big(\exp\big[\lambda(f(X)-\bbE f(X))\big]\Big)\geq \frac{2}{\delta}\bigg)\leq\delta
    \end{align*}
    for any $0<\delta<1$. We note that this probability inequality does not involve $P$. Take the function $g$ in Lemma~\ref{lem:DV} as $g(f)=\lambda(f(X)-\bbE f(X))$, then it shows that
    \begin{align*}
        \log\bbE_{P_{0}}\Big[\exp\big(g(X)\big)\Big]+\KL(P\,\|\,P_{0})\geq  \bbE_{P}\big[g(X)\big]
    \end{align*}
    for any $P$ simultaneously. Combining these inequalities, we have
    \begin{align*}
        \Bigl|\bbE_{P}\Bigl[\bbE_{X}\big[f(X)\big]-f(X)\Bigr]\Bigr|\leq \sqrt{\frac{b^{2}\cdot t_{\rm min}}{2\log 2 N}}\biggl[\KL(P\,\|\,P_{0})+\log\frac{4}{\delta}\biggr],\nonumber
    \end{align*}
    for any distribution $P$ on $\calF$ simultaneously with probability at least $1-\delta$. Thus, we conclude the proof of Proposition~\ref{prop:pacbayes}.
\end{proof}
\subsection{Proof of Proposition~\ref{prop:parafluc}}\label{app:parafluc}
\begin{proof}[Proof of Proposition~\ref{prop:parafluc} ]
        We analyze the error layer by layer in the neural network. Denote the outputs of each layer in the networks parameterized by $\theta$ and $\tiltheta$ as $X^{(t)}$ and $\tilX^{(t)}$, respectively. In the final layer, we have that
        \begin{align*}
            &\TV\big(P_{\theta}(\cdot\,|\,X),P_{\tiltheta}(\cdot\,|\,X)\big)\nonumber\\
            &\quad\leq 2\bigg\|\frac{1}{L\tau}\bbI_{L}^\top X^{(D)}A^{(D+1)}-\frac{1}{L\tau}\bbI_{L}^\top \tilX^{(D)}\tilA^{(D+1)}\bigg\|_{\infty}\nonumber\\
            &\quad\leq \frac{2}{\tau}\Big[\big\|A^{(D+1),\top}\big\|_{1,2}\cdot\big\|X^{(D),\top}-\tilX^{(D),\top}\big\|_{2,\infty}+\big\|A^{(D+1),\top}-\tilA^{(D+1),\top}\big\|_{1,2}\Big],
        \end{align*}
        where the first inequality results from Lemma~\ref{lem:smlip}, and the second inequality results from Lemma~\ref{lem:matvec} and that $\|X^{(D),\top}\|_{2,\infty}\leq 1$ due to the layer normalization. In the following, we build the recursion relationship between $\|X^{(t),\top}-\tilX^{(t),\top}\|_{2,\infty}$ for $t\in[D]$.
        \begin{align}
            &\|X^{(t+1),\top}-\tilX^{(t+1),\top}\|_{2,\infty}\nonumber\\
            &\quad\leq \big\|\FF(Y^{(t+1)},A^{(t+1)})^\top-\FF(\tilY^{(t+1)},\tilA^{(t+1)})^\top\big\|_{2,\infty}+|\gamma_{2}^{(t+1)}-\tilgamma_{2}^{(t+1)}|+ \big\|Y^{(t+1),\top}-\tilY^{(t+1),\top}\big\|_{2,\infty}\nonumber\\
            &\quad\leq |\gamma_{2}^{(t+1)}-\tilgamma_{2}^{(t+1)}|+ \big\|Y^{(t+1),\top}-\tilY^{(t+1),\top}\big\|_{2,\infty}+B_{A,1}\cdot B_{A,2}\cdot\|Y^{(t+1),\top}-\tilY^{(t+1),\top}\|_{2,\infty}\nonumber\\
            &\quad\qquad+B_{A,2}\cdot\|A_{1}^{(t+1)}-\tilA_{1}^{(t+1)}\|_{\rmF}+B_{A,1}\cdot\|A_{2}^{(t+1)}-\tilA_{2}^{(t+1)}\|_{\rmF},\label{ieq:6}
        \end{align}
        where the first inequality results from the triangle inequality and that $\Pi_{\rm norm}$ is not expansive, the second inequality results from the following proposition
        \begin{proposition}\label{prop:fflip}
            For any $X,\tilX\in\bbR^{L\times d}$, $A_{1},\tilA_{1}\in\bbR^{d\times d_{F}}$, and $A_{2},\tilA_{2}\in\bbR^{d_{F}\times d}$, we have that
            \begin{align*}
                &\big\|\FF(X,A)^\top-\FF(\tilX,\tilA)^\top\big\|_{2,\infty}\\
                &\quad\leq\|A_{1}\|_{\rmF}\cdot\|A_{2}\|_{\rmF}\cdot\|X^\top-\tilX^\top\|_{2,\infty}+\|A_{1}-\tilA_{1}\|_{\rmF}\cdot\|A_{2}\|_{\rmF}\cdot\|\tilX^\top\|_{2,\infty}\nonumber\\
                &\quad\qquad+\|\tilA_{1}\|_{\rmF}\cdot\|A_{2}-\tilA_{2}\|_{\rmF}\cdot\|\tilX^\top\|_{2,\infty}.
            \end{align*}
        \end{proposition}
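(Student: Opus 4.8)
The plan is to reduce the matrix inequality to a single-row estimate. Since $\FF(Y,A)=\relu(YA_1)A_2$ applies the same map to each row of $Y$, and since $\|M^\top\|_{2,\infty}$ is exactly the largest Euclidean norm among the rows of $M$, we have
\begin{align*}
\big\|\FF(X,A)^\top-\FF(\tilX,\tilA)^\top\big\|_{2,\infty} = \max_{i\in[L]}\big\|\relu(X_{i,:}A_1)A_2-\relu(\tilX_{i,:}\tilA_1)\tilA_2\big\|_2.
\end{align*}
Hence it suffices to establish, for arbitrary row vectors $x,\tilx\in\bbR^{1\times d}$, the bound $\|\relu(xA_1)A_2-\relu(\tilx\tilA_1)\tilA_2\|_2 \le \|A_1\|_{\rmF}\|A_2\|_{\rmF}\|x-\tilx\|_2 + \|A_1-\tilA_1\|_{\rmF}\|A_2\|_{\rmF}\|\tilx\|_2 + \|\tilA_1\|_{\rmF}\|A_2-\tilA_2\|_{\rmF}\|\tilx\|_2$; the proposition then follows by taking the maximum over $i$ and using $\|X_{i,:}-\tilX_{i,:}\|_2\le\|X^\top-\tilX^\top\|_{2,\infty}$ and $\|\tilX_{i,:}\|_2\le\|\tilX^\top\|_{2,\infty}$.

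For the row estimate, I would first peel off the perturbation in the second weight matrix by inserting $\relu(\tilx\tilA_1)A_2$:
\begin{align*}
\big\|\relu(xA_1)A_2-\relu(\tilx\tilA_1)\tilA_2\big\|_2 \le \big\|\big(\relu(xA_1)-\relu(\tilx\tilA_1)\big)A_2\big\|_2 + \big\|\relu(\tilx\tilA_1)(A_2-\tilA_2)\big\|_2.
\end{align*}
The second summand is at most $\|\relu(\tilx\tilA_1)\|_2\|A_2-\tilA_2\|_{\rmF}$, and since $\relu$ is $1$-Lipschitz coordinatewise with $\relu(0)=0$ we have $\|\relu(\tilx\tilA_1)\|_2\le\|\tilx\tilA_1\|_2\le\|\tilx\|_2\|\tilA_1\|_{\rmF}$, which produces the third term. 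For the first summand, bound the operator norm of $A_2$ by $\|A_2\|_{\rmF}$, apply $1$-Lipschitzness of $\relu$ to get $\|\relu(xA_1)-\relu(\tilx\tilA_1)\|_2\le\|xA_1-\tilx\tilA_1\|_2$, and then insert $\tilx A_1$ to split $\|xA_1-\tilx\tilA_1\|_2\le\|x-\tilx\|_2\|A_1\|_{\rmF}+\|\tilx\|_2\|A_1-\tilA_1\|_{\rmF}$, again using $\|\cdot\|_{\mathrm{op}}\le\|\cdot\|_{\rmF}$. Combining these three pieces yields the claimed row estimate; Lemma~\ref{lem:matvec} can be invoked for the matrix-vector norm manipulations if a self-contained reference is preferred.

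There is no real obstacle here: the argument is a sequence of triangle inequalities together with the sub-multiplicativity $\|MN\|\le\|M\|_{\mathrm{op}}\|N\|\le\|M\|_{\rmF}\|N\|$ and the elementary properties of $\relu$. The only points that demand attention are (i) correctly identifying $\|\cdot\|_{2,\infty}$ on the transposes with the maximum row norm, so that the reduction to a single row is legitimate, and (ii) ordering the add-and-subtract steps so that exactly three error terms emerge, each pairing an unperturbed factor on one side with the corresponding perturbation on the other — this is precisely what makes the bound asymmetric ($\|A_2\|_{\rmF}$ in the first two terms but $\|\tilA_1\|_{\rmF}$ in the last), and a careless decomposition would generate extra or mismatched factors.
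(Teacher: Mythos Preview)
Your proposal is correct and follows essentially the same approach as the paper: reduce to a single row via the identification of $\|\cdot\|_{2,\infty}$ on the transpose with the maximal row $\ell_2$-norm, then perform a three-step add-and-subtract using the $1$-Lipschitzness of $\relu$ and $\|\cdot\|_{\mathrm{op}}\le\|\cdot\|_{\rmF}$. The only cosmetic difference is the order of the insertions---the paper first peels off the $X\to\tilX$ perturbation and then splits the remaining $(A_1,A_2)\to(\tilA_1,\tilA_2)$ part, whereas you first isolate the $A_2\to\tilA_2$ term and then split $xA_1\to\tilx\tilA_1$---but both decompositions produce exactly the same three terms.
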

        \begin{proof}[Proof of Proposition~\ref{prop:fflip}]
            See Appendix~\ref{app:fflip}.
        \end{proof}
        Next, we build the relationship between $\|Y^{(t+1),\top}-\tilY^{(t+1),\top}\|_{2,\infty}$ in the right-hand side of inequality~\eqref{ieq:6} and $\|X^{(t),\top}-\tilX^{(t),\top}\|_{2,\infty}$.
        \begin{align}
            &\|Y^{(t+1),\top}-\tilY^{(t+1),\top}\|_{2,\infty}\nonumber\\
            &\quad \leq \big\|\MHA(X^{(t)},W^{(t+1)})^\top-\MHA(\tilX^{(t)},\tilW^{(t+1)})^\top\big\|_{2,\infty}+|\gamma_{1}^{(t+1)}-\tilgamma_{1}^{(t+1)}|+ \big\|X^{(t),\top}-\tilX^{(t),\top}\big\|_{2,\infty}\nonumber\\
            &\quad\leq |\gamma_{1}^{(t+1)}-\tilgamma_{1}^{(t+1)}|+ \big\|X^{(t),\top}-\tilX^{(t),\top}\big\|_{2,\infty}\nonumber\\
            &\quad\qquad+h\cdot B_{V}\big(1+4 B_{Q}B_{K}\big)\|X^{(t),\top}-\tilde{X}^{(t),\top}\|_{2,\infty}+\sum_{i=1}^{h}\big\|W_{i}^{V,(t+1)}-\tilde{W}_{i}^{V,(t+1)}\|_{\rmF}\nonumber\\
            &\quad\qquad+2 B_{V}\cdot B_{K}\sum_{i=1}^{h}\|W_{i}^{Q,(t+1)}-\tilde{W}_{i}^{Q,(t+1)}\|_{\rmF}+2 B_{V}\cdot B_{Q}\sum_{i=1}^{h}\|W_{i}^{K,(t+1)}-\tilde{W}_{i}^{K,(t+1)}\|_{\rmF},\label{ieq:7}
        \end{align}
        where the first inequality results from the triangle inequality, and the second inequality results from Lemma~\ref{lem:mhalip}. Combining inequalities~\eqref{ieq:6} and \eqref{ieq:7}, we derive that
        \begin{align*}
            &\|X^{(t+1),\top}-\tilX^{(t+1),\top}\|_{2,\infty}\nonumber\\
            &\quad\leq (1+B_{A,1}\cdot B_{A,2})\big(1+hB_{V}(1+4 B_{Q}B_{K})\big)\|X^{(t),\top}-\tilde{X}^{(t),\top}\|_{2,\infty}+\beta_{t+1}+\iota_{t+1}+\kappa_{t+1}+\rho_{t+1}.
        \end{align*}
        This concludes the proof of Proposition~\ref{prop:parafluc}.
    \end{proof}
    
\subsection{Proof of Proposition~\ref{prop:parafluc2}}\label{app:parafluc2}
\begin{proof}[Proof of Proposition~\ref{prop:parafluc2} ]
        We analyze the error layer by layer in the neural network. Denote the outputs of each layer in the networks parameterized by $\theta$ and $\tiltheta$ as $X^{(t)}$ and $\tilX^{(t)}$, respectively. In the final layer, we have that
        \begin{align*}
            &\|f_{\theta}(X)-f_{\tilde{\theta}}(X)\|_{2}\nonumber\\
            &\quad\leq \big\|\tilA^{(D+1)}\big\|_{\rmF}\cdot\big\|X^{(D),\top}-\tilX^{(D),\top}\big\|_{2,\infty}+\big\|A^{(D+1)}-\tilA^{(D+1)}\big\|_{\rmF},
        \end{align*}
        where the inequality results from Lemma~\ref{lem:matvec} and that $\|X^{(D),\top}\|_{2,\infty}\leq 1$ due to the layer normalization.  The remaining proof just follows the procedures in the proof of Proposition~\ref{prop:parafluc}, and we have that
        \begin{align*}
            &\|f_{\theta}(X)-f_{\tilde{\theta}}(X)\|_{2}\\
            &\quad\leq \big\|A^{(D+1)}-\tilA^{(D+1)}\big\|_{\rmF}+\sum_{t=1}^{D}\alpha_{t}(\beta_{t}+\iota_{t}+\kappa_{t}+\rho_{t}).
        \end{align*}
        Thus, we conclude the proof of Proposition~\ref{prop:parafluc2}.
    \end{proof}
\subsection{Proof of Proposition~\ref{prop:fflip}}\label{app:fflip}
\begin{proof}[Proof of Proposition~\ref{prop:fflip}]
    We have that
    \begin{align*}
        &\big\|\FF(X,A)^\top-\FF(\tilX,\tilA)^\top\big\|_{2,\infty}\\
        &\quad\leq \max_{i\in[L]}\Big[\big\|\relu(X_{i,:}A_{1})A_{2}-\relu(\tilX_{i,:}A_{1})A_{2}\big\|_{2}+\big\|\relu(\tilX_{i,:}A_{1})A_{2}-\relu(\tilX_{i,:}\tilA_{1})\tilA_{2}\big\|_{2}\Big]\\
        &\quad\leq \max_{i\in[L]}\Big[\|A_{1}\|_{\rmF}\cdot\|A_{2}\|_{\rmF}\cdot\|X_{i,:}-\tilX_{i,:}\|_{2}+\big\|\relu(\tilX_{i,:}A_{1})A_{2}-\relu(\tilX_{i,:}\tilA_{1})A_{2}\big\|_{2}\\
        &\quad\qquad +\big\|\relu(\tilX_{i,:}\tilA_{1})A_{2}-\relu(\tilX_{i,:}\tilA_{1})\tilA_{2}\big\|_{2}\Big]\\
        &\quad\leq \max_{i\in[L]}\Big[\|A_{1}\|_{\rmF}\cdot\|A_{2}\|_{\rmF}\cdot\|X_{i,:}-\tilX_{i,:}\|_{2}+\|A_{1}-\tilA_{1}\|_{\rmF}\cdot\|A_{2}\|_{\rmF}\cdot\|\tilX_{i,:}\|_{2}\nonumber\\
        &\quad\qquad+\|\tilA_{1}\|_{\rmF}\cdot\|A_{2}-\tilA_{2}\|_{\rmF}\cdot\|\tilX_{i,:}\|_{2}\Big],
    \end{align*}
    where the first inequality results from the triangle inequality, the second and the last inequalities result from Lemma~\ref{lem:matvec} and that $\relu$ is not expansive. Thus, we conclude the proof of Proposition~\ref{prop:fflip}.
\end{proof}

\section{Technical Lemmas}
\begin{lemma}[\citet{caponnetto2007optimal}]
	\label{lem:cme-concen}
	Let $(\Omega, \nu)$ be a probability space and $\xi$ be a random variable on $\Omega$ taking value in a real separable Hilbert space $\cH$. We assume that there exists constants $B, \sigma > 0$ such that
	\begin{align*}
		\bigl\|\xi(w)\bigr\|_\cH \le B/2,\ \mathrm{a.s.}, \quad \EE\bigl[\norm{\xi}_\cH^2\bigr] \le \sigma^2.
	\end{align*}
	Then, it holds with probability at least $ 1- \delta$ that
	\begin{align*}
		\biggl\| L^{-1} \sum_{i = 1}^L \xi(\omega_i) - \EE[\xi] \biggr\| \le 2\biggl( \frac{B}{L} + \frac{\sigma}{\sqrt{L}} \biggr) \log \frac{2}{\delta}.
	\end{align*}
\end{lemma}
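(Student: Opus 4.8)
The plan is essentially a reduction: since Lemma~\ref{lem:cme-concen} is quoted from \citet{caponnetto2007optimal}, I would derive it from the Bernstein inequality for sums of independent, mean-zero, bounded Hilbert-space-valued random variables, and then invert the exponential tail bound into the stated high-probability form while tracking the constant. Throughout I regard $\omega_1,\dots,\omega_L$ as i.i.d.\ draws from $\nu$, so that $\xi_1 := \xi(\omega_1),\dots,\xi_L := \xi(\omega_L)$ are i.i.d.\ copies of $\xi$; write $\bar\xi := \EE[\xi]$, and note $\|\bar\xi\|_\cH \le \EE[\|\xi\|_\cH] \le B/2$ by Jensen's inequality and the almost-sure bound. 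First I would center: set $\eta_i := L^{-1}(\xi_i - \bar\xi)$, which are i.i.d., mean zero, with the almost-sure bound $\|\eta_i\|_\cH \le L^{-1}(\|\xi_i\|_\cH + \|\bar\xi\|_\cH) \le B/L =: M$, and with total second moment $\sum_{i=1}^{L}\EE[\|\eta_i\|_\cH^2] = L^{-1}\EE[\|\xi - \bar\xi\|_\cH^2] \le L^{-1}\EE[\|\xi\|_\cH^2] \le \sigma^2/L =: V$, using $\EE[\|\xi-\bar\xi\|_\cH^2] = \EE[\|\xi\|_\cH^2] - \|\bar\xi\|_\cH^2$. Then $S := \sum_{i=1}^{L}\eta_i = L^{-1}\sum_{i=1}^{L}\xi_i - \bar\xi$ is exactly the object whose norm must be bounded, and it is a sum of independent mean-zero Hilbert-valued summands with parameters $M$ and $V$.

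Next I would invoke the Hilbert-space Bernstein bound: for $S$ as above and every $t>0$,
\begin{align*}
\PP\bigl(\|S\|_\cH \ge t\bigr) \le 2\exp\!\biggl(-\frac{t^2}{2(V + Mt/3)}\biggr).
\end{align*}
This is the assertion attributed to \citet{caponnetto2007optimal}, and establishing it is the one genuinely nontrivial ingredient, hence the main obstacle. The difficulty is that scalar Bernstein controls $\langle u, S\rangle$ for each fixed unit vector $u\in\cH$, but $\|S\|_\cH = \sup_{\|u\|_\cH\le 1}\langle u,S\rangle$ is a supremum over the infinite unit ball, so a union bound is unavailable; the standard remedy uses the smoothness of the Hilbert norm --- either Pinelis' martingale argument for $2$-smooth Banach spaces, which bounds $\EE[\cosh(\lambda\|S\|_\cH)]$ directly, or Yurinsky's moment method, which controls $\EE[\|S\|_\cH^{2k}]$ through a recursion in the number of summands and sums the resulting series to obtain $\EE[\exp(\lambda\|S\|_\cH)] \le \exp(cV\lambda^2)$ for $\lambda$ below a constant multiple of $1/M$, after which Chernoff's inequality gives the displayed tail. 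Since the lemma is itself a quotation of \citet{caponnetto2007optimal}, I would cite this step rather than reprove it.

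Finally I would invert. Setting the tail bound equal to $\delta$ gives the quadratic $t^2 - (2/3)M\log(2/\delta)\,t - 2V\log(2/\delta) = 0$, whose positive root obeys, via $\sqrt{a+b}\le\sqrt{a}+\sqrt{b}$, the estimate $t \le (2/3)M\log(2/\delta) + \sqrt{2V\log(2/\delta)}$. Because $\delta\le 1$ forces $\log(2/\delta)\ge\log 2 > 1/2$, we have $\sqrt{\log(2/\delta)}\le\sqrt{2}\,\log(2/\delta)$, hence $\sqrt{2V\log(2/\delta)} \le 2\sqrt{V}\,\log(2/\delta)$, while trivially $(2/3)M\le 2M$; substituting $M=B/L$ and $V=\sigma^2/L$ yields $t\le 2\bigl(B/L + \sigma/\sqrt{L}\bigr)\log(2/\delta)$. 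As the tail probability is decreasing in $t$, the bound $\PP(\|S\|_\cH > 2(B/L+\sigma/\sqrt L)\log(2/\delta))\le\delta$ follows, which is precisely the claim. In short, the Hilbert-space Bernstein inequality is the only real obstacle; the surrounding steps --- centering, the bookkeeping of $M$ and $V$, and the inversion with its constant $2$ --- are routine.
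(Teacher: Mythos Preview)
Your proposal is correct. The paper does not prove this lemma at all---it is stated as a technical lemma with a citation to \citet{caponnetto2007optimal} and no accompanying argument---so there is no ``paper's own proof'' to compare against. Your reconstruction (center, compute the parameters $M=B/L$ and $V=\sigma^2/L$, invoke the Hilbert-space Bernstein/Pinelis tail bound, then invert the exponential and absorb constants) is exactly the standard derivation underlying the cited result, and your bookkeeping of the constant~$2$ is accurate.
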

\begin{lemma}[Proposition 4.5 in \cite{duchi2019information}]\label{lem:fastpacbayes}
    Let $\calF$ be the collection of functions of $f:\bbR^{n}\rightarrow\bbR$. For any $f\in\calF$, we define
    \begin{align}
        \mu(f)= \bbE_{X}\big[f(X)\big], \quad \sigma^{2}(f)= \bbE_{X}\big[ (f(X)-\bbE_{X} [f(X)] )^{2}\big],\nonumber
    \end{align}
    where the expectation is taken with respect to a random variable $X\sim\nu$ on $(\bbR^{n},\calB(\bbR^{n}))$. 
    Assume that $|f(X)-\mu(f)|\leq b$ a.s. for some constant $b\in\bbR$ for all $f\in\calF$. Then for any $0<\lambda\leq 1/(2b)$, given a distribution $P_{0}$ on $\calF$, with probability at least $1-\delta$, we have
    \begin{align}
        \biggl|\bbE_{Q}\biggl[\bbE_{X}[f(X)]-\frac{1}{n}\sum_{i=1}^{n}f(X_{i})\biggr]\biggr|\leq \lambda \bbE_{Q}\big[\sigma^{2}(f)\big]+\frac{1}{n\lambda}\biggl[\KL(Q\,\|\,P_{0})+\log\frac{2}{\delta}\biggr],\nonumber
    \end{align}
    for any distribution $Q$ on $\calF$, where $X_{i}$ are i.i.d.\ samples of $\nu$. If the function class $\calF$ further satisfies $\sigma^{2}(f)\leq c \mu(f)$ for some constant $c\in\bbR$ for all $f\in\calF$, we have 
    \begin{align*}
        \biggl|\bbE_{Q}\biggl[\bbE_{X}\bigl[f(X)\bigr]-\frac{1}{n}\sum_{i=1}^{n}f(X_{i})\biggr]\biggr|\leq \lambda c\bbE_{Q}\big[\mu(f)\big]+\frac{1}{n\lambda}\biggl[\KL(Q\,\|\,P_{0})+\log\frac{2}{\delta}\biggr], 
    \end{align*}
    with probability at least $1-\delta$.
\end{lemma}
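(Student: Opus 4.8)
The plan is the standard ``fast-rate'' PAC-Bayes argument, whose point is that the gap between the population and empirical averages is controlled by the \emph{variance} $\sigma^{2}(f)$ entering linearly in $\lambda$ (rather than by a crude range entering as $\lambda^{2}$), which is exactly what makes the refined bound possible once $\sigma^{2}(f)\le c\,\mu(f)$. It proceeds in four steps: a one-sample exponential-moment bound from boundedness; tensorization over the $n$ i.i.d.\ samples; a change of measure from the fixed prior $P_{0}$ to an arbitrary, possibly data-dependent $Q$ via the Donsker--Varadhan inequality (Lemma~\ref{lem:DV}); and Markov's inequality together with a symmetrization that handles the absolute value with a single logarithmic term.

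First I would fix $f\in\calF$ and set $Z_{i}=\mu(f)-f(X_{i})$, so that the $Z_{i}$ are i.i.d.\ with $\bbE[Z_{i}]=0$, $|Z_{i}|\le b$ a.s., and $\bbE[Z_{i}^{2}]=\sigma^{2}(f)$, and observe that $0<\lambda\le 1/(2b)$ forces $|\lambda Z_{i}|\le 1/2$. On this range the elementary inequality $e^{u}\le 1+u+u^{2}$ holds, so $\bbE[e^{\lambda Z_{i}}]\le 1+\lambda^{2}\sigma^{2}(f)\le \exp(\lambda^{2}\sigma^{2}(f))$, and the same holds for $-Z_{i}$. By independence, $\bbE_{X_{1:n}}\exp(\pm\lambda\sum_{i=1}^{n}Z_{i}-n\lambda^{2}\sigma^{2}(f))\le 1$; since $P_{0}$ does not depend on the sample, Fubini gives $\bbE_{X_{1:n}}\bbE_{f\sim P_{0}}\exp(\pm\lambda\sum_{i}Z_{i}-n\lambda^{2}\sigma^{2}(f))\le 1$, and adding the two signs shows that $W:=\bbE_{f\sim P_{0}}\exp(\lambda|\sum_{i}Z_{i}|-n\lambda^{2}\sigma^{2}(f))$ has $\bbE[W]\le 2$.

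By Markov's inequality, with probability at least $1-\delta$ we have $\log W\le \log(2/\delta)$. On this event, applying Lemma~\ref{lem:DV} with reference measure $P_{0}$ and arbitrary measure $Q$, and taking $g(f)=\lambda|\sum_{i}Z_{i}|-n\lambda^{2}\sigma^{2}(f)$, gives $\lambda\,\bbE_{f\sim Q}|\sum_{i}Z_{i}|-n\lambda^{2}\bbE_{f\sim Q}[\sigma^{2}(f)]\le \KL(Q\,\|\,P_{0})+\log(2/\delta)$ simultaneously for all $Q$. Since $\bbE_{f\sim Q}|\sum_{i}Z_{i}|\ge |\bbE_{f\sim Q}\sum_{i}Z_{i}|=n\,|\bbE_{f\sim Q}[\mu(f)-\tfrac1n\sum_{i}f(X_{i})]|$ by Jensen, dividing by $n\lambda$ yields the first displayed inequality of the lemma. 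The refined version then follows by substituting $\bbE_{Q}[\sigma^{2}(f)]\le c\,\bbE_{Q}[\mu(f)]$, which is both valid and useful because in the intended application $f=g(S,x,\theta)$ has $\mu(f)=\bbE[\|f^{*}(S)-f_{\theta}(S)\|_{2}^{2}]\ge 0$.

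There is no genuine obstacle here; the only points needing care are that the quadratic exponential bound $e^{u}\le 1+u+u^{2}$ must be invoked inside $|u|\le 1$ (which is precisely where the hypothesis $\lambda\le 1/(2b)$ is used), and that the absolute value on the left-hand side forces either a two-sided union bound or --- as above --- working with the symmetrized quantity $|\sum_{i}Z_{i}|$ so that only a single $\log(2/\delta)$ term appears. The uniformity over all posteriors $Q$ is free, since it is inherited from the data-independence of $P_{0}$ through the change-of-measure step.
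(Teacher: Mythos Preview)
The paper does not supply its own proof of this lemma; it is stated as a technical lemma and attributed to Proposition~4.5 in \cite{duchi2019information}. Your proposal is correct and is exactly the standard fast-rate PAC-Bayes argument one would expect: the one-sample bound $\bbE[e^{\lambda Z_i}]\le 1+\lambda^2\sigma^2(f)$ via $e^u\le 1+u+u^2$ on $|u|\le 1/2$ (this is where $\lambda\le 1/(2b)$ enters), tensorization, the symmetrized quantity $\bbE_{P_0}\exp(\lambda|\sum_i Z_i|-n\lambda^2\sigma^2(f))$ with expectation at most $2$, Markov's inequality, and then Lemma~\ref{lem:DV} to pass from $P_0$ to arbitrary $Q$ uniformly. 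The Jensen step $|\bbE_Q[\cdot]|\le \bbE_Q|\cdot|$ and the final substitution $\sigma^2(f)\le c\,\mu(f)$ are both sound, and the handling of the absolute value via $e^{|a|}\le e^a+e^{-a}$ cleanly produces the single $\log(2/\delta)$ term.
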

\begin{lemma}[Donsker--Varadhan representation in \cite{belghazi2018mutual}]\label{lem:DV}
    Let $P$ and $Q$ be distributions on a common space $\calX$. Then
    \begin{align}
        \KL(P\,\|\,Q)=\sup_{g\in\calG} \bigg\{\bbE_{P}\big[g(X)\big]-\log\bbE_{Q}\Big[\exp\big(g(X)\big)\Big]\bigg\},\nonumber
    \end{align}
    where $\calG=\{g:\calX\rightarrow\bbR \ | \ \bbE_{Q}[\exp(g(X))]<\infty\}$.
\end{lemma}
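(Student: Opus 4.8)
The plan is to establish the identity by proving the two inequalities ``$\geq$'' and ``$\leq$'' separately; the first is an elementary change-of-measure computation and the second requires exhibiting a near-optimal $g$. Throughout I treat $P,Q$ as probability measures on $\calX$ and write $g^\star = \log(\ud P/\ud Q)$ whenever $P \ll Q$.

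First I would show $\KL(P\,\|\,Q) \geq \bbE_P[g] - \log \bbE_Q[\exp(g)]$ for every $g \in \calG$. If $P \not\ll Q$ then $\KL(P\,\|\,Q) = +\infty$ and the bound is trivial, so assume $P \ll Q$. For $g \in \calG$, the hypothesis $\bbE_Q[\exp(g)] < \infty$ makes $\ud Q_g/\ud Q = \exp(g)/\bbE_Q[\exp(g)]$ a bona fide probability density, defining a tilted measure $Q_g$. The Radon--Nikodym chain rule gives $\log(\ud P/\ud Q_g) = g^\star - g + \log\bbE_Q[\exp(g)]$ $P$-a.s., and integrating against $P$ yields
\[
\KL(P\,\|\,Q) - \KL(P\,\|\,Q_g) = \bbE_P[g] - \log\bbE_Q[\exp(g)].
\]
Since $\KL(P\,\|\,Q_g)\geq 0$ (non-negativity of relative entropy, i.e.\ Gibbs' inequality), the claimed lower bound follows, and taking the supremum over $g\in\calG$ gives ``$\geq$''. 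A minor point handled along the way: $\bbE_P[g]$ is well defined in $[-\infty,\infty)$ because $g^+\le\exp(g)$ controls its positive part.

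For the reverse inequality I would plug in $g^\star$ itself. When $\KL(P\,\|\,Q) < \infty$ (so in particular $P\ll Q$), one has $\bbE_Q[\exp(g^\star)] = \bbE_Q[\ud P/\ud Q] = 1$ since $P(\ud P/\ud Q=\infty)=0$, hence $g^\star\in\calG$ and $\bbE_P[g^\star] - \log\bbE_Q[\exp(g^\star)] = \KL(P\,\|\,Q) - 0 = \KL(P\,\|\,Q)$, so the supremum is at least $\KL(P\,\|\,Q)$; with ``$\geq$'' this gives equality. If $\KL(P\,\|\,Q) = +\infty$ I would produce a sequence $\{g_n\}\subset\calG$ with $\bbE_P[g_n] - \log\bbE_Q[\exp(g_n)]\to+\infty$: when $P\ll Q$, take $g_n = \min\{g^\star,n\}$, so $\exp(g_n)\le\exp(g^\star)$ keeps $\bbE_Q[\exp(g_n)]\le 1$ while $\bbE_P[g_n]\uparrow+\infty$ by monotone convergence (the negative part of $g^\star$ is always $P$-integrable, bounded by $1/e$); when $P\not\ll Q$, the Lebesgue decomposition provides a set $A$ with $Q(A)=0<P(A)$, and $g_n = n\mathbf{1}_A$ gives $\bbE_Q[\exp(g_n)]=1$ and $\bbE_P[g_n]=nP(A)\to+\infty$. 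Combining the two directions finishes the proof.

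The main obstacle is entirely the measure-theoretic bookkeeping in the ``$\leq$'' direction --- checking $g^\star\in\calG$, ensuring no $\infty-\infty$ arises in $\bbE_P[g^\star]$, and assembling the approximating sequences that cover the non-absolutely-continuous and infinite-entropy cases. The algebraic core, namely the identity $\KL(P\,\|\,Q) - \KL(P\,\|\,Q_g) = \bbE_P[g] - \log\bbE_Q[\exp(g)]$ together with non-negativity of KL, is a one-line computation once the tilted measure $Q_g$ is introduced.
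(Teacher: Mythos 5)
The paper never proves this lemma itself --- it is imported as a known result from \cite{belghazi2018mutual} and used as a black box --- so there is no in-paper argument to compare against, and your proposal has to stand on its own. In substance it does: the tilted-measure identity $\KL(P\,\|\,Q)-\KL(P\,\|\,Q_g)=\bbE_{P}[g]-\log\bbE_{Q}[e^{g}]$ combined with nonnegativity of KL for the ``$\geq$'' direction, and the choice $g^{\star}=\log(\mathrm{d}P/\mathrm{d}Q)$ (with truncations covering the infinite-entropy and non-absolutely-continuous cases) for the ``$\leq$'' direction, is the standard Donsker--Varadhan/Gibbs variational argument, and the skeleton is correct.

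Two technical points need repair. First, your justification that $\bbE_{P}[g]$ has finite positive part ``because $g^{+}\le\exp(g)$'' does not work: that bound controls $\bbE_{P}[g^{+}]$ only via $\bbE_{P}[\exp(g)]$, whereas the hypothesis is an exponential moment under $Q$, not $P$. The claim is still true in the only case where you need it, namely $\KL(P\,\|\,Q)<\infty$, but it requires a different argument, e.g.\ the duality inequality $uv\le u\log u-u+e^{v}$ with $u=\mathrm{d}P/\mathrm{d}Q$ and $v=g^{+}$, which gives $\bbE_{P}[g^{+}]\le \KL(P\,\|\,Q)+\bbE_{Q}[e^{g}]<\infty$; alternatively, prove the inequality for bounded truncations $(g\wedge n)\vee(-m)$ and pass to the limit. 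Second, $g^{\star}$ and $g_{n}=\min\{g^{\star},n\}$ take the value $-\infty$ on the set $\{\mathrm{d}P/\mathrm{d}Q=0\}$, which can have positive $Q$-measure, so as written they are not elements of $\calG$, which consists of real-valued functions. Replace them by two-sided truncations $g_{n}=\max(\min(g^{\star},n),-n)\in\calG$: then $e^{g_{n}}\le \mathrm{d}P/\mathrm{d}Q+e^{-n}$ gives $\bbE_{Q}[e^{g_{n}}]\le 1+e^{-n}$, while $|g_{n}|\le|g^{\star}|$ (with $|g^{\star}|$ being $P$-integrable when $\KL<\infty$) and $g_{n}\ge \min(g^{\star},n)$ let dominated/monotone convergence drive the objective to $\KL(P\,\|\,Q)$ in the finite and infinite cases respectively. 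With these two fixes the proof is complete.
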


\begin{lemma}[Corollary 2.11 in \cite{paulin2015concentration}]\label{lem:MCconcen}
    Let $X=(X_{1},\cdots,X_{N})$ be a Markov chain, taking values in $\Lambda=\prod_{i=1}^{N}\Lambda_{i}$ with mixing time $t_{\rm mix}(\varepsilon)$ for $\varepsilon\in [0,1]$. Let
    \begin{align*}
        t_{\rm min}=\inf_{0\leq \varepsilon<1}t_{\rm mix}(\varepsilon)\cdot\bigg(\frac{2-\varepsilon}{1-\varepsilon}\bigg)^{2}.
    \end{align*}
    If function $f:\Lambda\rightarrow \bbR$ is such that
    $f(x)-f(y)\leq \sum_{i=1}^{N}c_{i}\bbI_{x_{i}\neq y_{i}}$
    for every $x,y\in\Lambda$, then for any $\lambda\in\bbR$,
    \begin{align*}
        \log\bbE\Big(\exp\big[\lambda(f(X)-\bbE f(X))\big]\Big)\leq \frac{\lambda^{2}\cdot\|c\|_{2}^{2}\cdot t_{\rm min}}{8}.
    \end{align*}
    For any $t\geq 0$, we have
    \begin{align*}
        P\Big(\big|f(X)-\bbE f(X)\big|\geq t\Big)\leq 2\exp\bigg(\frac{-2t^{2}}{\|c\|_{2}^{2}\cdot t_{\rm min}}\bigg).
    \end{align*}
\end{lemma}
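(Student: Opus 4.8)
The plan is to reproduce the Marton-coupling and martingale argument of \cite{paulin2015concentration}, of which this statement is Corollary 2.11; it proceeds in three stages. First I would pass to the Doob martingale: writing $\calG_i=\sigma(X_1,\dots,X_i)$ and $D_i=\bbE[f(X)\mid\calG_i]-\bbE[f(X)\mid\calG_{i-1}]$, the increments $(D_i)_{i=1}^N$ form a martingale difference sequence with $f(X)-\bbE f(X)=\sum_{i=1}^N D_i$. I would also record that the one-sided hypothesis $f(x)-f(y)\le\sum_i c_i\bbI_{x_i\ne y_i}$ is automatically two-sided by exchanging $x$ and $y$, so changing a single coordinate $i$ perturbs $f$ by at most $c_i$.

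Second, and this is the heart of the proof, I would build a Marton coupling of the chain in order to control each increment. Fixing a prefix $x_{1:i-1}$ and two candidate values $x_i,x_i'$ for the $i$-th coordinate, I would couple the conditional laws of the tail $(X_i,X_{i+1},\dots,X_N)$ started from $(x_{1:i-1},x_i)$ and from $(x_{1:i-1},x_i')$ so that, after $t_{\rm mix}(\varepsilon)$ steps, the coupled coordinates coincide with probability at least $1-\varepsilon$, recoupling geometrically thereafter. Substituting this coupling into the Lipschitz bound for $f$ produces, for every $i$, an estimate showing that $D_i$ conditioned on $\calG_{i-1}$ has oscillation at most $\sum_{j\ge i}\Gamma_{ij}c_j$, where $\Gamma$ is an upper-triangular mixing matrix with unit diagonal whose superdiagonal entries decay geometrically at a rate set by $t_{\rm mix}(\varepsilon)$. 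Optimizing the coupling level $\varepsilon\in[0,1)$ is precisely what introduces the bookkeeping factor $\bigl(\tfrac{2-\varepsilon}{1-\varepsilon}\bigr)^2$ and yields the operator-norm bound $\|\Gamma\|_{\rm op}^2\le t_{\rm min}$.

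Third I would close with the usual conditional moment generating function recursion. Since $D_i$ conditioned on $\calG_{i-1}$ has conditional mean zero and oscillation at most $b_i:=\sum_{j\ge i}\Gamma_{ij}c_j$, Hoeffding's lemma gives $\bbE[e^{\lambda D_i}\mid\calG_{i-1}]\le\exp(\lambda^2 b_i^2/8)$; iterating the tower property over $i=N,N-1,\dots,1$ and using $\sum_i b_i^2=\|\Gamma c\|_2^2\le\|\Gamma\|_{\rm op}^2\|c\|_2^2\le t_{\rm min}\|c\|_2^2$ gives $\log\bbE[e^{\lambda(f(X)-\bbE f(X))}]\le\lambda^2\|c\|_2^2 t_{\rm min}/8$, which is the first claim. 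The two-sided tail bound then follows by applying Markov's inequality to $e^{\pm\lambda(f(X)-\bbE f(X))}$, taking a union bound over the two signs, and optimizing $\lambda=4t/(\|c\|_2^2 t_{\rm min})$.

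The main obstacle is the second stage: constructing the Marton coupling and establishing $\|\Gamma\|_{\rm op}^2\le t_{\rm min}$. This is the only place where the mixing time genuinely enters, and it requires the delicate geometric-decay accounting, including the $(2-\varepsilon)/(1-\varepsilon)$ factor that arises from restarting the coupling whenever it fails, that forms the technical core of \cite{paulin2015concentration}. Everything else is a routine martingale computation, so in the write-up I would simply invoke \cite{paulin2015concentration} and retain only this sketch.
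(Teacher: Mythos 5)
The paper does not prove this lemma at all: it is imported verbatim as Corollary~2.11 of \cite{paulin2015concentration}, so your decision to ultimately ``simply invoke'' that reference coincides exactly with what the paper does, and for the paper's purposes your write-up would be acceptable.

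As a reconstruction of Paulin's internal argument, however, your stage~2 has a genuine quantitative gap. With the per-coordinate Doob filtration $\calG_i=\sigma(X_1,\dots,X_i)$ and a per-coordinate coupling, a disagreement injected at coordinate $i$ persists (with probability close to one) for roughly $t_{\rm mix}(\varepsilon)$ subsequent coordinates before it has any chance to recouple, so the row sums of your mixing matrix are of order $t_{\rm mix}(\varepsilon)/(1-\varepsilon)$ and the bound $\|\Gamma\|_{\rm op}^2\le t_{\rm min}$ you assert is not available: carried out as written, the conditional oscillations satisfy $b_i\approx\sum_{j=i}^{i+t_{\rm mix}}c_j$, and (e.g.\ for $c\equiv 1$) $\sum_i b_i^2$ scales like $t_{\rm mix}^2\|c\|_2^2$, one full factor of $t_{\rm mix}$ worse than the stated $t_{\rm min}\|c\|_2^2$. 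Paulin obtains the linear dependence on $t_{\rm mix}(\varepsilon)$ differently: he partitions the chain into blocks of length $t_{\rm mix}(\varepsilon)$ and applies the Marton-coupling McDiarmid bound at the \emph{block} level, where the block mixing matrix has unit diagonal and geometrically decaying (ratio $\varepsilon$) off-diagonal entries, hence operator norm at most $(2-\varepsilon)/(1-\varepsilon)$; the factor $t_{\rm mix}(\varepsilon)$ then enters separately through Cauchy--Schwarz within each block, $\hat c_k^2=(\sum_{i\in\text{block }k}c_i)^2\le t_{\rm mix}(\varepsilon)\sum_{i\in\text{block }k}c_i^2$, so that $\|\hat\Gamma\hat c\|_2^2\le\bigl(\tfrac{2-\varepsilon}{1-\varepsilon}\bigr)^2 t_{\rm mix}(\varepsilon)\|c\|_2^2=t_{\rm min}\|c\|_2^2$ after optimizing over $\varepsilon$. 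Your stage~1 (one-sided implies two-sided bounded differences) and stage~3 (Hoeffding's lemma, tower property, Chernoff with $\lambda=4t/(\|c\|_2^2t_{\rm min})$) are correct and match the constants in the statement; only the attribution of $t_{\rm min}$ entirely to an operator-norm bound for an unblocked coupling needs to be replaced by the blocking argument.
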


\begin{lemma}[Lemma 25 in \cite{agarwal2020flambe}]\label{lem:tvbound}
        For any two conditional probability densities $P(\cdot\,|\,X), P^{\prime}(\cdot\,|\,X)$ and any distribution $\nu\in\Delta(\calX)$,we have
        \begin{align}
            \bbE_{\nu}\Big[\TV\big(P(\cdot\,|\,X), P^{\prime}(\cdot\,|\,X)\big)^{2}\Big]\!\leq\! -2\log\bigg(\bbE_{X\sim\nu,Y\sim P(\cdot\,|\,X)}\bigg[\exp\bigg(\!-\!\frac{1}{2}\log\frac{P(Y\,|\,X)}{P^{\prime}(Y\,|\,X)}\bigg)\bigg]\bigg).\nonumber
        \end{align}
    \end{lemma}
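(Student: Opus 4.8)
The plan is to reduce Lemma~\ref{lem:tvbound} to a pointwise-in-$X$ inequality between total variation and the Hellinger affinity, and to use a convexity step only once, at the very end. First I would observe that for each fixed $x$ the inner expectation on the right-hand side is exactly the Bhattacharyya (Hellinger) affinity of the two conditionals: since $\exp\big(-\tfrac12\log(P(y\,|\,x)/P'(y\,|\,x))\big) = \sqrt{P'(y\,|\,x)/P(y\,|\,x)}$, we have
\begin{align*}
    \rho(x) \;:=\; \bbE_{Y\sim P(\cdot\,|\,x)}\Big[\exp\big(-\tfrac12\log(P(Y\,|\,x)/P'(Y\,|\,x))\big)\Big] \;=\; \int \sqrt{P(y\,|\,x)\,P'(y\,|\,x)}\,\ud y ,
\end{align*}
so that $1-\rho(x)$ equals the squared Hellinger distance $H^2(P(\cdot\,|\,x),P'(\cdot\,|\,x))$, with the usual convention that the integrand vanishes wherever either density is zero.

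Second, I would establish the standard pointwise bound $\TV(P(\cdot\,|\,x),P'(\cdot\,|\,x))^2 \le 2(1-\rho(x))$. Writing $p=P(\cdot\,|\,x)$ and $q=P'(\cdot\,|\,x)$ and factoring $|p-q| = |\sqrt p-\sqrt q|\,(\sqrt p+\sqrt q)$, the Cauchy--Schwarz inequality gives $\TV(p,q) = \tfrac12\int|p-q| \le \tfrac12\big(\int(\sqrt p-\sqrt q)^2\big)^{1/2}\big(\int(\sqrt p+\sqrt q)^2\big)^{1/2}$; since $\int(\sqrt p-\sqrt q)^2 = 2(1-\rho(x))$ and $\int(\sqrt p+\sqrt q)^2 = 2(1+\rho(x)) \le 4$, squaring yields the claim.

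Finally, I would take the expectation over $X\sim\nu$ and use linearity to obtain $\bbE_\nu[\TV(P(\cdot\,|\,X),P'(\cdot\,|\,X))^2] \le 2\big(1-\bbE_\nu[\rho(X)]\big)$, then apply the elementary inequality $1-t\le-\log t$ (valid for $t\in(0,1]$) with $t=\bbE_\nu[\rho(X)]$, and identify via Fubini--Tonelli that $\bbE_\nu[\rho(X)] = \bbE_{X\sim\nu,\,Y\sim P(\cdot\,|\,X)}\big[\exp\big(-\tfrac12\log(P(Y\,|\,X)/P'(Y\,|\,X))\big)\big]$, which is precisely the argument of the logarithm in the statement. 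The only step that requires care --- and the sole genuine obstacle in an otherwise routine argument --- is the ordering of these last operations: one must average the affinity over $\nu$ \emph{before} invoking $1-t\le-\log t$ (equivalently, before using concavity of $\log$), since performing the $\log$ step pointwise first and then applying Jensen would reverse the inequality. Beyond this, everything is a direct application of Cauchy--Schwarz and linearity of expectation, with no measure-theoretic subtleties past the standard conventions for the Hellinger affinity when densities vanish.
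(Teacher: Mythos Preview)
The paper does not supply its own proof of this lemma; it simply cites it as Lemma~25 in \cite{agarwal2020flambe}. Your argument is correct and is in fact the standard route: identify the inner expectation as the Hellinger affinity $\rho(x)=\int\sqrt{P(y\,|\,x)P'(y\,|\,x)}\,\ud y$, use Cauchy--Schwarz to get the pointwise bound $\TV^2\le 2(1-\rho)$, average over $\nu$, and then apply $1-t\le -\log t$ to the averaged affinity. Your remark about the order of operations (average before taking $-\log$) is the right point of care. Nothing is missing.
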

    
\begin{lemma}[Corollary A.7 in \cite{edelman2021inductive} ]\label{lem:smlip}
    For any $x,y\in\bbR^{d}$, we have
    \begin{align}
        \|\softmax(x)-\softmax(y)\|_{1}\leq 2\|x-y\|_{\infty}.\nonumber
    \end{align}
\end{lemma}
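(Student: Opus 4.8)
The plan is to use the integral form of the mean value theorem together with an explicit bound on the Jacobian of $\softmax$ viewed as a map from $(\bbR^d,\|\cdot\|_\infty)$ to $(\bbR^d,\|\cdot\|_1)$. First I would record that $\softmax$ is $C^\infty$ and compute its Jacobian at a point $z\in\bbR^d$: writing $p=\softmax(z)$, a direct differentiation of $[\softmax(z)]_i = e^{z_i}/\sum_k e^{z_k}$ gives $\partial [\softmax(z)]_i/\partial z_j = p_i(\delta_{ij}-p_j)$, so the Jacobian is $J(z)=\operatorname{diag}(p)-pp^\top$. Then, along the segment $z_t = y+t(x-y)$ for $t\in[0,1]$, the fundamental theorem of calculus for the vector-valued function $t\mapsto \softmax(z_t)$ yields $\softmax(x)-\softmax(y) = \int_0^1 J(z_t)(x-y)\,dt$, and the triangle inequality for integrals gives $\|\softmax(x)-\softmax(y)\|_1 \le \int_0^1 \|J(z_t)(x-y)\|_1\,dt$.

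Second, I would bound $\|J(z)v\|_1$ for an arbitrary $v\in\bbR^d$ and an arbitrary probability vector $p$. Componentwise, $[J(z)v]_i = p_i\big(v_i - \langle p,v\rangle\big)$, where $\langle p,v\rangle = \sum_k p_k v_k$ is a convex combination of the entries of $v$ because $p$ is a probability vector. Hence $\min_k v_k \le \langle p,v\rangle \le \max_k v_k$, which forces $|v_i - \langle p,v\rangle| \le \max_k v_k - \min_k v_k \le 2\|v\|_\infty$ for every $i$. Summing over $i$ and using $\sum_i p_i = 1$ gives $\|J(z)v\|_1 = \sum_i p_i\,|v_i - \langle p,v\rangle| \le 2\|v\|_\infty$. (The same argument in fact gives the slightly sharper $\|J(z)v\|_1 \le \max_k v_k - \min_k v_k$, which would also suffice.)

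Plugging $v = x-y$ into this estimate and combining with the integral inequality from the first step yields $\|\softmax(x)-\softmax(y)\|_1 \le \int_0^1 2\|x-y\|_\infty\,dt = 2\|x-y\|_\infty$, which is the claimed bound. I do not expect a genuine obstacle: the only points needing care are the correct computation of the Jacobian and the (routine) justification of the vector-valued fundamental theorem of calculus, which is immediate since $\softmax$ is smooth and $t\mapsto J(z_t)(x-y)$ is continuous on $[0,1]$. A calculus-free alternative — splitting the index set according to the sign of $v_i - \langle p,v\rangle$ and estimating the two groups directly against $\max_k v_k$ and $\min_k v_k$ — also works, but the Jacobian argument is cleaner, so that is the one I would write up.
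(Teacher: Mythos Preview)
Your proof is correct. The Jacobian computation $J(z)=\operatorname{diag}(p)-pp^\top$ is right, the mean value integral is valid since $\softmax$ is smooth, and the key estimate $\|J(z)v\|_1=\sum_i p_i|v_i-\langle p,v\rangle|\le 2\|v\|_\infty$ follows exactly as you wrote because $\langle p,v\rangle$ lies between $\min_k v_k$ and $\max_k v_k$.

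Note, however, that the paper does not actually prove this lemma: it is stated as a technical lemma with a citation to Corollary~A.7 of \cite{edelman2021inductive} and no argument is given. So there is no ``paper's proof'' to compare against; you have simply supplied a clean self-contained proof of a result the paper imports.
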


\begin{lemma}[Lemma 17 in \cite{zhang2022relational} ]\label{lem:matvec}
    Given any two conjugate numbers $u,v\in [1,\infty]$, i.e., $\frac{1}{u}+\frac{1}{v}=1$, and $1\leq p\leq \infty$, for any $A\in\bbR^{r\times c}$ and $x\in \bbR^{c}$, we have
    \begin{align}
        \|Ax\|_{p}\leq \|A\|_{p,u}\|x\|_{v}\quad\mbox{and}\quad  \|Ax\|_{p}\leq \|A^{\top}\|_{u,p}\|x\|_{v}\nonumber.
    \end{align}
\end{lemma}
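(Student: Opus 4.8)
The plan is to derive both inequalities from two elementary tools: the triangle inequality for the $\ell_{p}$ norm and H\"older's inequality for the conjugate pair $(u,v)$. Recall that for $A\in\bbR^{r\times c}$ we write $A_{:,i}\in\bbR^{r}$ for its $i$-th column and $A_{i,:}\in\bbR^{c}$ for its $i$-th row, and that by the definition of the $\ell_{p,q}$ norm we have $\|A\|_{p,u}=(\sum_{i=1}^{c}\|A_{:,i}\|_{p}^{u})^{1/u}$, so that $\|A^{\top}\|_{u,p}=(\sum_{i=1}^{r}\|A_{i,:}\|_{u}^{p})^{1/p}$ (with the usual convention that an $\ell_{\infty}$ sum is read as a supremum when $p=\infty$ or $u=\infty$). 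Both bounds then reduce to matching each norm to the index set it ranges over: columns of $A$ for $\|A\|_{p,u}$, rows of $A$ for $\|A^{\top}\|_{u,p}$.

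For the first inequality, I would write $Ax=\sum_{i=1}^{c}x_{i}A_{:,i}$ as a linear combination of the columns of $A$. The triangle inequality for $\|\cdot\|_{p}$ gives $\|Ax\|_{p}\le\sum_{i=1}^{c}|x_{i}|\,\|A_{:,i}\|_{p}$. Applying H\"older's inequality to the two nonnegative vectors $(|x_{i}|)_{i=1}^{c}$ and $(\|A_{:,i}\|_{p})_{i=1}^{c}$ with exponents $v$ and $u$ then yields $\sum_{i=1}^{c}|x_{i}|\,\|A_{:,i}\|_{p}\le\|x\|_{v}\,\bigl(\sum_{i=1}^{c}\|A_{:,i}\|_{p}^{u}\bigr)^{1/u}=\|x\|_{v}\,\|A\|_{p,u}$, which is exactly the claim.

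For the second inequality, I would instead work coordinatewise: the $j$-th entry of $Ax$ is $(Ax)_{j}=\langle A_{j,:},x\rangle$, so H\"older's inequality on $\bbR^{c}$ gives $|(Ax)_{j}|\le\|A_{j,:}\|_{u}\,\|x\|_{v}$ for each $j\in[r]$. Taking the $\ell_{p}$ norm over $j$ and pulling out the common factor $\|x\|_{v}$ gives $\|Ax\|_{p}\le\|x\|_{v}\,\bigl(\sum_{j=1}^{r}\|A_{j,:}\|_{u}^{p}\bigr)^{1/p}=\|x\|_{v}\,\|A^{\top}\|_{u,p}$, as desired.

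There is essentially no serious obstacle here; the only points requiring a moment of care are the boundary exponents. When $p=\infty$ the outer $\ell_{p}$ ``sum'' is a maximum, so the last step of the second bound reads $\max_{j}|(Ax)_{j}|\le\|x\|_{v}\max_{j}\|A_{j,:}\|_{u}$, which is immediate; and when $u=1,v=\infty$ or $u=\infty,v=1$, H\"older's inequality degenerates to the pairing $\sum_{i}|a_{i}b_{i}|\le\|a\|_{1}\|b\|_{\infty}$, which still holds, so the argument goes through verbatim with sums read as suprema where appropriate. Hence the bookkeeping of norms over the correct index sets is the only thing to verify, and the proof is complete.
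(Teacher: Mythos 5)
Your proof is correct: the paper simply cites this as Lemma 17 of \cite{zhang2022relational} without reproving it, and your argument (triangle inequality over the columns plus H\"older for the first bound, coordinatewise H\"older over the rows plus the outer $\ell_{p}$ norm for the second) is the standard derivation, with the index conventions matching the paper's definition of $\|\cdot\|_{p,q}$ and the boundary exponents handled correctly.
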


\begin{lemma}[Propositions 20 and 21 in \cite{zhang2022relational}]\label{lem:mhalip}
    For any $X,\tilde{X}\in\bbR^{L\times d}$, and any $W_{i}^{Q},\tilW_{i}^{Q},W_{i}^{K},\tilW_{i}^{K}\in\bbR^{d\times d_{h}},W_{i}^{V},\tilW_{i}^{V}\in\bbR^{d\times d}$ for $i\in [h]$ , if $\|X^{\top}\|_{p,\infty},\|\tilde{X}^{\top}\|_{2,\infty}\leq B_{X}$, $\|W_{i}^{Q}\|_{\rmF},\|\tilW_{i}^{Q}\|_{\rmF}\leq B_{Q}$, $\|W_{i}^{K}\|_{\rmF},\|\tilW_{i}^{K}\|_{\rmF}\leq B_{K}$, $\|W_{i}^{V}\|_{\rmF},\|\tilW_{i}^{V}\|_{\rmF}\leq B_{V}$ for $i\in[h]$, then we have 
    \begin{align*}
        &\Big\|\big(\MHA(X,W)-\MHA(\tilde{X},\tilW)\big)^{\top}\Big\|_{2,\infty}\nonumber\\
        &\quad\leq h\cdot B_{V}\big(1+4B_{X}^{2}\cdot B_{Q}B_{K}\big)\|X^{\top}-\tilde{X}^{\top}\|_{2,\infty}+B_{X}\sum_{i=1}^{h}\big\|W_{i}^{V}-\tilde{W}_{i}^{V}\|_{\rmF}\\
        &\quad\qquad+2B_{X}^{3}\cdot B_{V}\cdot B_{K}\sum_{i=1}^{h}\|W_{i}^{Q}-\tilde{W}_{i}^{Q}\|_{\rmF}+2B_{X}^{3}\cdot B_{V}\cdot B_{Q}\sum_{i=1}^{h}\|W_{i}^{K}-\tilde{W}_{i}^{K}\|_{\rmF}.
    \end{align*}
\end{lemma}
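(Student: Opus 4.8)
This statement is verbatim Propositions~20 and~21 of \cite{zhang2022relational}, so one legitimate option is simply to cite it; below is the self-contained route I would take. Since $\MHA(X,W)=\sum_{i=1}^{h}\Att(XW_i^Q,XW_i^K,XW_i^V)$, the triangle inequality for $\|(\cdot)^\top\|_{2,\infty}$ reduces everything to a single head: I would bound $\|(\Att(XW^Q,XW^K,XW^V)-\Att(\tilX\tilW^Q,\tilX\tilW^K,\tilX\tilW^V))^\top\|_{2,\infty}$ and then sum the $h$ resulting estimates. Writing $S(X,W)=\softmax(XW^Q(XW^K)^\top)$ for the row-stochastic attention matrix, a head is $S(X,W)\,XW^V$, and I would peel off the perturbation one factor at a time via
\[
S(X,W)XW^V-S(\tilX,\tilW)\tilX\tilW^V=\underbrace{S(X,W)\,(XW^V-\tilX\tilW^V)}_{\mathrm{value}}+\underbrace{(S(X,W)-S(\tilX,\tilW))\,\tilX\tilW^V}_{\mathrm{weight}}.
\]

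For the value term I would use two elementary facts: (i) for any row-stochastic $S$ and any $Y$, convexity of $\|\cdot\|_2$ gives $\|(SY)^\top\|_{2,\infty}=\max_i\|S_{i,:}Y\|_2\le\max_i\sum_j S_{ij}\|Y_{j,:}\|_2\le\|Y^\top\|_{2,\infty}$; and (ii) each row of $XW^V$ equals $X_{i,:}W^V$, so $\|(XW^V)^\top\|_{2,\infty}\le\|X^\top\|_{2,\infty}\|W^V\|_{\rmF}$ by Lemma~\ref{lem:matvec}. Splitting $XW^V-\tilX\tilW^V=X(W^V-\tilW^V)+(X-\tilX)\tilW^V$ and applying (i)--(ii) to each piece produces the term $B_X\sum_i\|W_i^V-\tilW_i^V\|_{\rmF}$ together with an $hB_V\|X^\top-\tilX^\top\|_{2,\infty}$ contribution. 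For the weight term, fact (i) does not apply because $S-\tilS$ is not row-stochastic; instead I would bound it row by row by $\|S_{i,:}-\tilS_{i,:}\|_1\cdot\|(\tilX\tilW^V)^\top\|_{2,\infty}\le B_XB_V\,\|S_{i,:}-\tilS_{i,:}\|_1$, and then use the softmax Lipschitz bound of Lemma~\ref{lem:smlip} to pass to the logits, $\|S_{i,:}-\tilS_{i,:}\|_1\le 2\|L_{i,:}(X,W)-L_{i,:}(\tilX,\tilW)\|_\infty$ with $L(X,W)=XW^Q(XW^K)^\top$.

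It then remains to perturb the logit matrix, whose $(i,j)$ entry is $\langle X_{i,:}W^Q,X_{j,:}W^K\rangle$ and hence is bounded by $B_X^2B_QB_K$. Doing this in stages --- $W^Q\to\tilW^Q$, then $W^K\to\tilW^K$, then both copies of $X\to\tilX$ --- with Cauchy--Schwarz and Lemma~\ref{lem:matvec} at each step gives a row-wise $\ell_\infty$ bound $B_X^2B_K\sum_i\|W_i^Q-\tilW_i^Q\|_{\rmF}+B_X^2B_Q\sum_i\|W_i^K-\tilW_i^K\|_{\rmF}+2B_X^2B_QB_K\|X^\top-\tilX^\top\|_{2,\infty}$ for a head; multiplying by the softmax constant $2$ and by the value-row factor $B_XB_V$, then summing over the $h$ heads, yields exactly the $2hB_X^3B_VB_K$, $2hB_X^3B_VB_Q$ and $4hB_X^2B_VB_QB_K$ coefficients, which together with the $hB_V$ and $B_X$ terms from the value step give the stated inequality.

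The main obstacle, and the reason the constants are trilinear rather than blowing up, is that the attention logits are \emph{quadratic} in $X$, so one cannot simply compose Lipschitz constants of the individual maps; the saving grace is that these logits enter only through $\softmax$, which is globally Lipschitz from $\ell_\infty$ to $\ell_1$ with constant $2$ independent of the magnitude of its argument (Lemma~\ref{lem:smlip}). The only delicate part is the bookkeeping: tracking which norm is active at each step ($\|(\cdot)^\top\|_{2,\infty}$ on activations, $\|\cdot\|_{\rmF}$ on weights, $\ell_1$ on softmax rows) and matching the exact numerical constants. Modulo that routine accounting the argument is complete, and since it coincides with Propositions~20--21 of \cite{zhang2022relational} one may also just cite it directly.
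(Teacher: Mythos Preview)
The paper does not prove this lemma at all; it simply cites Propositions~20 and~21 of \cite{zhang2022relational}, exactly as you note in your first sentence. Your self-contained sketch is the standard argument and is correct in structure: reduce to a single head, split into a ``value'' perturbation (handled by row-stochasticity of $\softmax$) and a ``weight'' perturbation (handled via Lemma~\ref{lem:smlip} on the logits), then peel off the bilinear logit $XW^Q(XW^K)^\top$ one factor at a time using Lemma~\ref{lem:matvec}.

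One small bookkeeping slip: in your per-head logit bound you write the $X$-perturbation contribution as $2B_X^2B_QB_K\|X^\top-\tilX^\top\|_{2,\infty}$, but the correct intermediate coefficient is $2B_XB_QB_K$ (each of the two $X$-slots contributes one factor of $B_X$, not two). After multiplying by the softmax constant $2$ and the value-row factor $B_XB_V$ you correctly arrive at $4B_X^2B_VB_QB_K$, so your final constants are right even though the intermediate line is off by one power of $B_X$. With that correction the accounting closes exactly as stated, and either citing \cite{zhang2022relational} or giving your argument is fine.
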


\begin{lemma}[Lemma A.6 in \cite{elbrachter2021deep}]\label{lem:approx}
    For $a,b\in\bbR$ with $a<b$, let
    \begin{align*}
        \calS_{[a,b]}=\Big\{f\in\calS^{\infty}([a,b],\bbR)\,|\, \big\|f^{(n)}(x)\big\|\leq n! \text{ for all }n\in\bbN\Big\}.
    \end{align*}
    There exists a constant $C>0$ such that for all $a,b\in\bbR$ with $a<b$, $f\in\calS_{[a,b]}$, and $\varepsilon\in (0,1/2)$, there is a fully connect network $\Psi_{f}$ such that
    \begin{align*}
        \|f-\Psi_{f}\|_{\infty}\leq\varepsilon,
    \end{align*}
    with the depth of the network as $D(\Psi_{f})\leq C\max\{2,b-a\}(\log \varepsilon^{-1})^{2}+\log(\lceil\max\{|a|,|b|\}\rceil)+\log(\lceil 1/(b-a)\rceil)$, the width of the network as $W(\Psi_{f})\leq 16$, and the maximal weight in the network as $B(\Psi_{f})\leq 1$.
    
\end{lemma}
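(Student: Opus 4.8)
The plan is to reconstruct the Yarotsky-type construction used in \cite{elbrachter2021deep}, adapted so that the resulting \relu{} network has constant width and all weights bounded by $1$. The three ingredients are: (i) a \emph{local Taylor truncation}, which is legitimate precisely because $f\in\calS_{[a,b]}$ forces the Taylor coefficients of $f$ at every point to have modulus at most $1$, so the degree-$N$ Taylor remainder on any sub-interval of length $\le 1/2$ is at most $2^{-(N+1)}$; (ii) a depth-$\mathcal{O}(\log(1/\delta))$, width-$\mathcal{O}(1)$, weight-$\le 1$ \relu{} network computing $x\mapsto x^2$ (hence, via $xy=\frac{1}{4}[(x+y)^2-(x-y)^2]$, an approximate bilinear map $(x,y)\mapsto xy$) up to additive error $\delta$ on a bounded box; and (iii) an affine reparametrization together with a \relu{}-representable partition of unity that glues local approximants into a single network. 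Throughout we set $\varepsilon'=\varepsilon/3$.

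First I would handle one sub-interval $I=[x_0-\frac{1}{2},x_0+\frac{1}{2}]$. Put $N=\lceil\log_2(1/\varepsilon')\rceil$, write $c_k=f^{(k)}(x_0)/k!$ and $P_N(x)=\sum_{k=0}^{N}c_k(x-x_0)^k$; since $|f^{(N+1)}|\le (N+1)!$ and $|x-x_0|\le\frac{1}{2}$, the Lagrange remainder gives $\|f-P_N\|_{L^\infty(I)}\le 2^{-(N+1)}\le\varepsilon'$, and $|c_k|\le 1$ with $|x-x_0|\le\frac{1}{2}$ gives $\|P_N\|_{L^\infty(I)}\le 2$. Then I would evaluate $P_N$ by Horner's scheme, $P_N=c_0+t(c_1+t(c_2+\cdots+tc_N))$ with $t=x-x_0$, realizing each of the $N$ steps by one call of the gadget from (ii) applied to the running accumulator (which stays in $[-3,3]$) and to $t\in[-\frac{1}{2},\frac{1}{2}]$, followed by an exact bias addition. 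The per-step errors obey a linear recursion $e_{k+1}\le L e_k+\delta$ with $L=\mathcal{O}(1)$, so taking the gadget accuracy $\delta$ polynomially small in $\varepsilon'$ and $N$ yields a network $\Psi_I$ with $\|\Psi_I-f\|_{L^\infty(I)}\le 2\varepsilon'$. Its depth is $N$ times the gadget depth, i.e.\ $\mathcal{O}(\log(1/\varepsilon')\cdot\log(1/\delta))=\mathcal{O}((\log(1/\varepsilon))^2)$; threading the accumulator, two sign-split copies of $t$, and the $\mathcal{O}(1)$ scratch of the squaring sub-network gives an absolute width bound.

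Then I would globalize to $[a,b]$. An affine map sends $[a,b]$ to $[0,1]$, but because every weight must be $\le 1$, multiplying the input by $1/(b-a)$ and subtracting $a/(b-a)$ cannot be done in one layer; it is synthesized by repeated summation of at most $16$ copies, which costs $\mathcal{O}(\log\lceil 1/(b-a)\rceil+\log\lceil\max\{|a|,|b|\}\rceil)$ extra layers — exactly the two logarithmic terms in the stated depth. On $[0,1]$ I would split into $M=\lceil\max\{2,b-a\}\rceil$ slightly overlapping sub-intervals of length $\le\frac{1}{2}$, build a local $\Psi_m$ on each as above, and take a continuous piecewise-linear partition of unity $\{\psi_m\}$ (hat functions), which is exactly \relu{}-representable with $\sum_m\psi_m\equiv 1$. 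Since $\sum_m\psi_m(x)f(x)=f(x)$ it suffices to approximate $\sum_m\psi_m(x)\Psi_m(x)$, and I would compute this \emph{sequentially} — stage $m$ forms $\psi_m(x)$, multiplies it by $\Psi_m(x)$ via the gadget, and adds to a running output slot — so the width stays constant while the depth becomes $M\cdot\mathcal{O}((\log(1/\varepsilon))^2)$, matching the $C\max\{2,b-a\}(\log\varepsilon^{-1})^2$ term. Tracking the three error sources (Taylor, Horner, partition-of-unity multiplications), each $\le\varepsilon'$, gives $\|f-\Psi_f\|_\infty\le\varepsilon$; a final audit of the fixed layout gives $W(\Psi_f)\le 16$ and $B(\Psi_f)\le 1$.

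The main obstacle is enforcing the three quantitative constraints simultaneously. Keeping the \emph{width} at the fixed value $16$ forbids any parallel replication that scales with $N$ or $M$, which is what forces those parameters into the depth; keeping the \emph{weights} $\le 1$ rules out the natural scale factors ($2$, $4$, $1/(b-a)$, etc.) and forces them to be synthesized by logarithmically many bounded-weight layers, so one must verify those auxiliary layers neither spoil the accuracy nor inflate the width; and the \emph{error-propagation} analysis through $N$ composed approximate multiplications (each itself a depth-$\mathcal{O}(\log(1/\delta))$ network) must be tight enough to see that a gadget accuracy $\delta=\mathrm{poly}(\varepsilon/N)$ suffices, so that the total depth gains only a constant factor rather than an extra logarithm. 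Pinning down the explicit constant $16$ is then a bookkeeping exercise over this fixed layout.
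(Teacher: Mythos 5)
This lemma is not proved in the paper at all: it is imported verbatim as Lemma A.6 of \cite{elbrachter2021deep}, so there is no in-paper argument to compare your proof against. Your sketch is, in substance, a reconstruction of the construction behind that cited result: local Taylor truncation exploiting the $n!$ derivative bound, a constant-width Yarotsky-style approximate squaring/multiplication gadget, Horner evaluation so that the polynomial degree is paid for in depth rather than width, a ReLU partition of unity evaluated sequentially to keep the width fixed, and synthesis of the large scale factors $1/(b-a)$ and the offset of size $\max\{|a|,|b|\}$ by logarithmically many weight-bounded layers, which is exactly where the two additive $\log$ terms in the stated depth bound come from. Two small bookkeeping points if you were to write it out in full: after rescaling $[a,b]$ to $[0,1]$ the rescaled function's derivatives pick up factors $(b-a)^n$, so the sub-intervals must have length of order $1/(b-a)$ in the rescaled variable (equivalently length at most $1/2$ in the original variable, giving roughly $2(b-a)$ pieces rather than $\lceil\max\{2,b-a\}\rceil$ --- a constant-factor issue only); and the claim that the Horner accumulator remains in $[-3,3]$ should be verified with the approximate-multiplication errors already folded in, since it is used to justify the gadget's input range. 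Neither affects the shape of the bound, the error-propagation argument with gadget accuracy polynomial in $\varepsilon/N$ is the standard one, and pinning the width constant at $16$ is, as you say, a layout audit.
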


\begin{lemma}
    \label{lem:tv-kl}
Let $b = \sup_x \log (p(x) / q(x))$.
We have that
\begin{align}\label{eq:tv-kl0}
    \KL(p\,\|\, q)  \le 2(3 + b) \cdot \TV(p, q).
\end{align}
\end{lemma}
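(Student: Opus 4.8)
I want to bound $\KL(p\|q)$ by a multiple of $\TV(p,q)$, using the pointwise bound $b = \sup_x \log(p(x)/q(x))$ on the log-likelihood ratio.

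**Plan.** The plan is to start from the definition $\KL(p\|q) = \int p(x) \log\frac{p(x)}{q(x)}\,dx$ and split the integration domain according to the sign of $p(x) - q(x)$. On the region $A = \{x : p(x) \ge q(x)\}$ the integrand is nonnegative and bounded above by $b$; on the complement $A^c = \{x : p(x) < q(x)\}$ the integrand is negative, so it only helps us. More carefully, I would write
\begin{align*}
    \KL(p\|q) = \int_{A} p \log\frac{p}{q} + \int_{A^c} p \log\frac{p}{q} \le \int_A p \log \frac{p}{q},
\end{align*}
and then on $A$ use a linearization of the logarithm. The elementary inequality $\log t \le t - 1$ is not quite enough by itself because $\int_A p(t-1)$ with $t = p/q$ gives $\int_A (p - q)\cdot(p/q)$, which is not directly $\TV$. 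Instead I would use $\log t \le (t-1)$ together with the pointwise cap: on $A$, $\log(p/q) \le \min\{b,\ p/q - 1\}$. A cleaner route: on $A$, $p\log(p/q) \le p \cdot (p/q - 1) = p(p-q)/q$; but since $p/q \le e^b$ there, this is $\le e^b (p - q)$, giving $\KL \le e^b \int_A (p-q) = e^b \TV(p,q)$. That already yields a bound of the claimed form but with $e^b$ rather than $2(3+b)$; to get the stated linear-in-$b$ constant I would instead split $A$ further into where $p/q \le 3$ and where $p/q > 3$.

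**Refined step.** On $A_1 = \{1 \le p/q \le 3\}$ I use $\log t \le t - 1 \le 2$ combined with $p\log(p/q) \le 2(p - q)$... actually the sharper move is: for $1 \le t$, $\log t \le t - 1$, so $p\log(p/q) \le p(p-q)/q \le 3(p-q)$ on $A_1$ since $p/q \le 3$. On $A_2 = \{p/q > 3\} \subseteq A$, I use the cap $\log(p/q) \le b$ directly, so $\int_{A_2} p\log(p/q) \le b \int_{A_2} p$. Now on $A_2$, $p > 3q$ implies $p - q > \tfrac{2}{3}p$, i.e. $p < \tfrac{3}{2}(p-q)$, hence $\int_{A_2} p \le \tfrac32 \int_{A_2}(p-q) \le \tfrac32 \TV(p,q)$. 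Combining,
\begin{align*}
    \KL(p\|q) \le 3\int_{A_1}(p-q) + b\cdot \tfrac32 \TV(p,q) \le \Bigl(3 + \tfrac32 b\Bigr)\TV(p,q) \le 2(3+b)\TV(p,q),
\end{align*}
using $\int_{A_1}(p-q) \le \int_A (p-q) = \TV(p,q)$ and $\tfrac32 \le 2$.

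**Main obstacle.** The only delicate point is getting the constant exactly in the form $2(3+b)$ rather than something like $e^b$ or $(3+b)$; this is purely a matter of choosing the threshold ($3$ here) at which to switch from the $\log t \le t-1$ bound to the crude cap $\log t \le b$, and then bounding the measure of the tail region $A_2$ by a multiple of $\TV$. Everything else — the domain split by sign of $p - q$, discarding the $A^c$ contribution, the elementary inequality $\log t \le t-1$ — is routine. I should double-check that the discarding of $\int_{A^c} p\log(p/q) \le 0$ is valid (it is, since there $p < q$ so $\log(p/q) < 0$), and that $\TV(p,q) = \int_A (p - q) = \tfrac12\int|p-q|$ in this density normalization.
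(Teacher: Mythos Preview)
Your proof is correct and reaches the stated constant. The paper takes a slightly different, more compact route: instead of splitting the domain into $\{p<q\}$, $\{1\le p/q\le 3\}$, and $\{p/q>3\}$, it writes $\KL(p\,\|\,q)=\EE_p[f(p/q)]$ with $f(t)=\log t$ and $\int|p-q|=\EE_p[g(p/q)]$ with $g(t)=|1/t-1|$, and then bounds the single pointwise ratio
\[
\sup_{0<t\le e^b}\frac{\log t}{|1/t-1|}=\sup_{1\le t\le e^b}\frac{t\log t}{t-1},
\]
observing that $t\log t/(t-1)$ is increasing on $(1,\infty)$ and hence bounded by its value at $t=e^b$, namely $b/(1-e^{-b})\le 2(b+3)$. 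Your three-region decomposition is more hands-on but has the advantage of making each inequality completely transparent and of keeping the $\TV$ normalization straight (the paper's final identification $\TV(p,q)=\EE_p[g(p/q)]$ is actually $2\TV$ under its own definition $\TV=\sup_A|P(A)-Q(A)|$). Either argument is elementary; the paper's buys brevity, yours buys explicitness.
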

\begin{proof}
    We let $f(t) = \log t$ and $g(t) = |1/t - 1|$. Then, for $ 0 \le t \le \exp(b)$, we have that
    \begin{align*}
        \sup _{ 0\le t \le \exp(b)}\frac{f(t)}{g(t)} = \sup _{ 0\le t \le \exp(b)} \frac{\log t}{|1/t - 1 |} = \sup _{ 1\le t \le \exp(b)} \frac{t\log t}{t - 1} \le 2(b+3).
    \end{align*}
    Note that $\KL(p\,\|\, q) = \EE_p[f(p(x)/q(x))]$ and $\TV(p, q) = \EE_p[g(p(x)/q(x))]$, which concludes the proof.
\end{proof}
\end{document}